\newtheorem{remark}{Remark}
\DeclareMathOperator*{\argmin}{arg\,min}
\definecolor{Gray}{gray}{0.9}
\newcommand{\gc}{\cellcolor[gray]{0.85}}
\def\adl@drawiv#1#2#3{%
        \hskip.5\tabcolsep
        \xleaders#3{#2.5\@tempdimb #1{1}#2.5\@tempdimb}%
                #2\z@ plus1fil minus1fil\relax
        \hskip.5\tabcolsep}
\newcommand{\cdashlinelr}[1]{%
  \noalign{\vskip\aboverulesep
           \global\let\@dashdrawstore\adl@draw
           \global\let\adl@draw\adl@drawiv}
  \cdashline{#1}
  \noalign{\global\let\adl@draw\@dashdrawstore
           \vskip\belowrulesep}}
\title{Frequency Domain-based Dataset Distillation}
\author{%
  Donghyeok ~Shin\thanks{~Equal contribution} \\
  KAIST \\
  \texttt{tlsehdgur0@kaist.ac.kr} \\
  \And
  Seungjae ~Shin\footnotemark[1] \\
  KAIST \\
  \texttt{tmdwo0910@kaist.ac.kr} \\
  \And
  Il-Chul Moon \\
  KAIST, Summary.AI \\
  \texttt{icmoon@kaist.ac.kr} \\
}
\begin{document}
\maketitle

\begin{abstract}
This paper presents FreD, a novel parameterization method for dataset distillation, which utilizes the frequency domain to distill a small-sized synthetic dataset from a large-sized original dataset. Unlike conventional approaches that focus on the spatial domain, FreD employs frequency-based transforms to optimize the frequency representations of each data instance. By leveraging the concentration of spatial domain information on specific frequency components, FreD intelligently selects a subset of frequency dimensions for optimization, leading to a significant reduction in the required budget for synthesizing an instance. Through the selection of frequency dimensions based on the explained variance, FreD demonstrates both theoretical and empirical evidence of its ability to operate efficiently within a limited budget, while better preserving the information of the original dataset compared to conventional parameterization methods. Furthermore, based on the orthogonal compatibility of FreD with existing methods, we confirm that FreD consistently improves the performances of existing distillation methods over the evaluation scenarios with different benchmark datasets. We release the code at \url{https://github.com/sdh0818/FreD}.
\end{abstract}

\section{Introduction} 
The era of big data presents challenges in data processing, analysis, and storage; and researchers have studied the concept of \textit{dataset distillation} \citep{wang2018dataset,zhao2020dataset,nguyen2021dataset} to resolve these challenges. Specifically, the objective of dataset distillation is to synthesize a dataset with a smaller cardinality that can preserve the performance of original large-sized datasets in machine learning tasks. By distilling the key features from the original dataset into a condensed dataset, less computational resources and storage space are required while maintaining the performances from the original dataset. Dataset distillation optimizes a small-sized variable to represent the input, not the model parameters. This optimization leads the variable to store a synthetic dataset, and the variable is defined in the memory space with the constraint of limited capacity. Since dataset distillation involves the optimization of \textit{data variables} with limited capacity, distillation parameter designs, which we refer to as parameterization, could significantly improve corresponding optimization while minimizing memory usage.

Some of the existing distillation methods, i.e. 2D image dataset distillations \citep{zhao2021dataset,nguyen2021dataset,cazenavette2022dataset,zhao2023dataset,zhou2022dataset}, naively optimize data variables embedded on the input space without any transformation or encoding. We will refer to distillation on the provided input space as spatial domain distillation, as an opposite concept of frequency domain distillation that is the focus of this paper. The main drawback of spatial domain distillation would be the difficulty in specifying the importance of each pixel dimension on the spatial domain, so it is necessary to utilize the same budget as the original dimension for representing a single instance. From the perspective of a data variable, which needs to capture the key information of the original dataset with a limited budget, the variable modeling with whole dimensions becomes a significant bottleneck that limits the number of distilled data instances. Various spatial domain-based parameterization methods \cite{kim2022dataset,liu2022dataset} have been proposed to overcome this problem, but they are either highly vulnerable to instance-specific information loss \cite{kim2022dataset} or require additional training with an auxiliary network \cite{liu2022dataset}.

This paper argues that the spatial domain distillation has limitations in terms of 1) memory efficiency and 2) representation constraints of the entire dataset. Accordingly, we propose a novel \underline{Fre}quency domain-based dataset \underline{D}istillation method, coined FreD, which contributes to maintaining the task performances of the original dataset based on the limited budget. FreD employs a frequency-based transform to learn the data variable in the transformed frequency domain. Particularly, our proposed method selects and utilizes a subset of frequency dimensions that are crucial for the formation of an instance and the corresponding dataset. By doing so, we are able to achieve a better condensed representation of the original dataset with even fewer dimensions, which corresponds to significant efficiency in memory. Throughout the empirical evaluations with different benchmarks, FreD shows consistent improvements over the existing methods regardless of the distillation objective utilized.
\vspace{-0.1in}
\section{Preliminary}
\subsection{Basic Notations}
This paper primarily focuses on the dataset distillation for classification tasks, which is a widely studied scenario in the dataset distillation community \citep{wang2018dataset,zhao2020dataset}. Given $C$ classes, let $\mathcal{X} \in \mathbb{R}^{d}$ denote the input variable space, and $\mathcal{Y} = \{1,2,...,C\}$ represent the set of candidate labels. Our dataset is $D = \{(x_{i},y_{i})\}^{N}_{i=1} \subseteq \mathcal{X} \times \mathcal{Y}$. We assume that each instance $(x,y)$ is drawn i.i.d from the data population distribution $\mathcal{P}$. Let a deep neural network $\phi_{\theta} : \mathcal{X} \rightarrow \mathcal{Y}$ be parameterized by $\theta \in \Theta$. This paper employs cross-entropy for the generic loss function, $\ell(x,y;\theta)$.
\subsection{Previous Researches: Optimization and Parameterization of Dataset Distillation}
\paragraph{Dataset Distillation Formulation.} The goal of dataset distillation is to produce a cardinality-reduced dataset $S$ from the given dataset $D$, while maximally retaining the task-relevant information of $D$. The objective of dataset distillation is formulated as follows:
\begin{align}
    \min_{S}\mathbb{E}_{(x,y) \in \mathcal{P}}[\ell(x,y;\theta_{S})] \,\,\, \text{where} \,\,\, \theta_{S} = \argmin_{\theta} \frac{1}{|S|}\sum_{(x_i,y_i) \in S}\ell(x_{i},y_{i};\theta)
\label{preliminary:eq:DD objective}
\end{align}
\begin{wrapfigure}{r}{0.45\textwidth}
\vspace{-0.2in}
    \begin{minipage}{\linewidth}
    \centering
    \captionsetup[subfigure]{justification=centering}
        \includegraphics[width=0.95\linewidth]{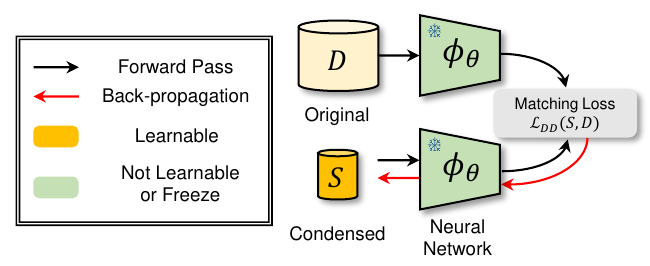}
        \vspace{-0.1in}
        \subcaption{Dataset distillation on input-sized variable.} \label{preliminary:fig:distill_overview_a}
    
        \includegraphics[width=0.95\linewidth]{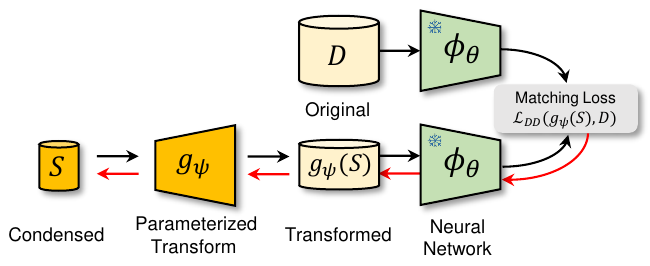}
        \vspace{-0.1in}
        \subcaption{Dataset distillation on variable with\\parameterized transform.} \label{preliminary:fig:distill_overview_b}
        
        \includegraphics[width=0.95\linewidth]{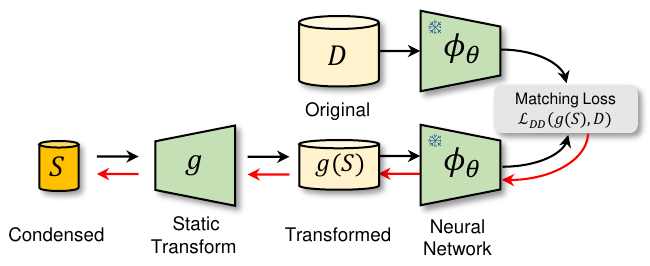}
        \vspace{-0.1in}
        \subcaption{Dataset distillation on variable with\\static transform.} \label{preliminary:fig:distill_overview_c}
    \end{minipage}
\caption{Comparison of different parameterization strategies for optimization of $S$.} \label{preliminary:fig:distill_overview}
\vspace{-0.5in}
\end{wrapfigure}
However, the optimization of Eq. \eqref{preliminary:eq:DD objective} is costly and not scalable since it is a bi-level problem \citep{zhao2020dataset,sachdeva2023data}. To avoid this problem, various proxy objectives are proposed to match the task-relevant information of $D$ such as gradient \cite{zhao2020dataset}, features \cite{zhao2023dataset}, trajectory \cite{cazenavette2022dataset}, etc. Throughout this paper, we generally express these objectives as $\mathcal{L}_{DD}(S,D)$.
\vspace{-0.1in}
\paragraph{Parameterization of $S$.} Orthogonal to the investigation on objectives, other researchers have also examined the parameterization and corresponding dimensionality of the data variable, $S$. By parameterizing $S$ in a more efficient manner, rather than directly storing input-sized instances, it is possible to distill more instances and enhance the representation capability of $S$ for $D$ \citep{kim2022dataset,liu2022dataset}. Figure \ref{preliminary:fig:distill_overview} divides the existing methods into three categories.

As a method of Figure \ref{preliminary:fig:distill_overview_b}, HaBa \citep{liu2022dataset} proposed a technique for dataset factorization, which involves breaking the dataset into bases and hallucination networks for diverse samples. However, incorporating an additional network in distillation requires a separate budget, which is distinct from the data instances.

As a method of Figure \ref{preliminary:fig:distill_overview_c}, IDC \citep{kim2022dataset} proposed the utilization of an upsampler module in dataset distillation to increase the number of data instances. This parameterization enables the condensation of a single instance with reduced spatial dimensions, thereby increasing the available number of instances. However, the compression still operates on the spatial domain, which results in a significant information loss per instance. Please refer to Appendix \ref{appendix:literature reviews} for the detailed literature reviews.
\begin{figure*}[h!]
    \centering
    \begin{subfigure}[t]{0.61\textwidth}
        \centering
        \includegraphics[height=4.2cm]{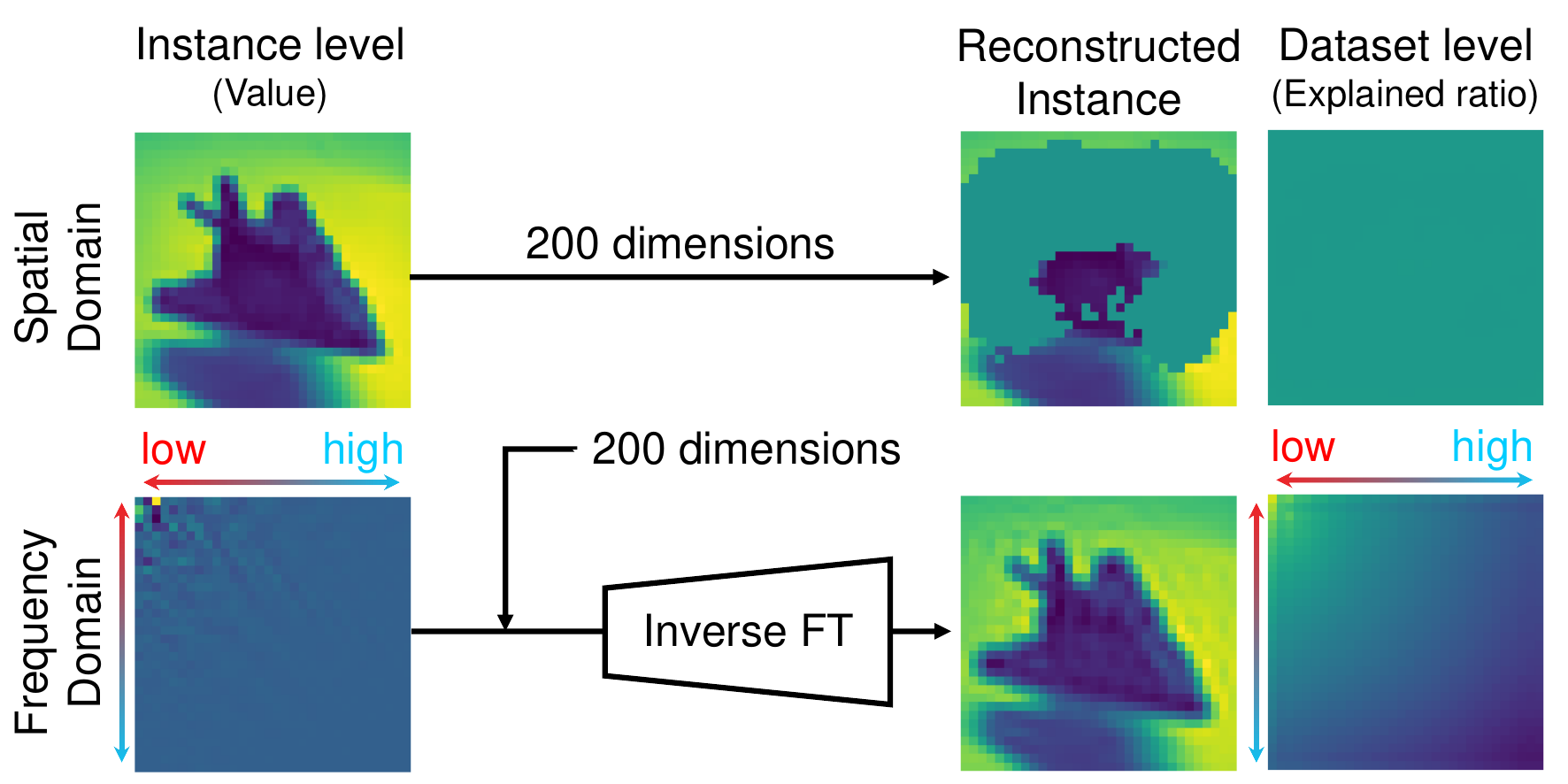}
        \caption{Statistical properties of dataset on spatial and frequency domain.} \label{methodology:fig:domain_comparison}
    \end{subfigure}
    \hfill
    \begin{subfigure}[t]{0.37\textwidth}
        \centering
        \includegraphics[height=4.2cm]{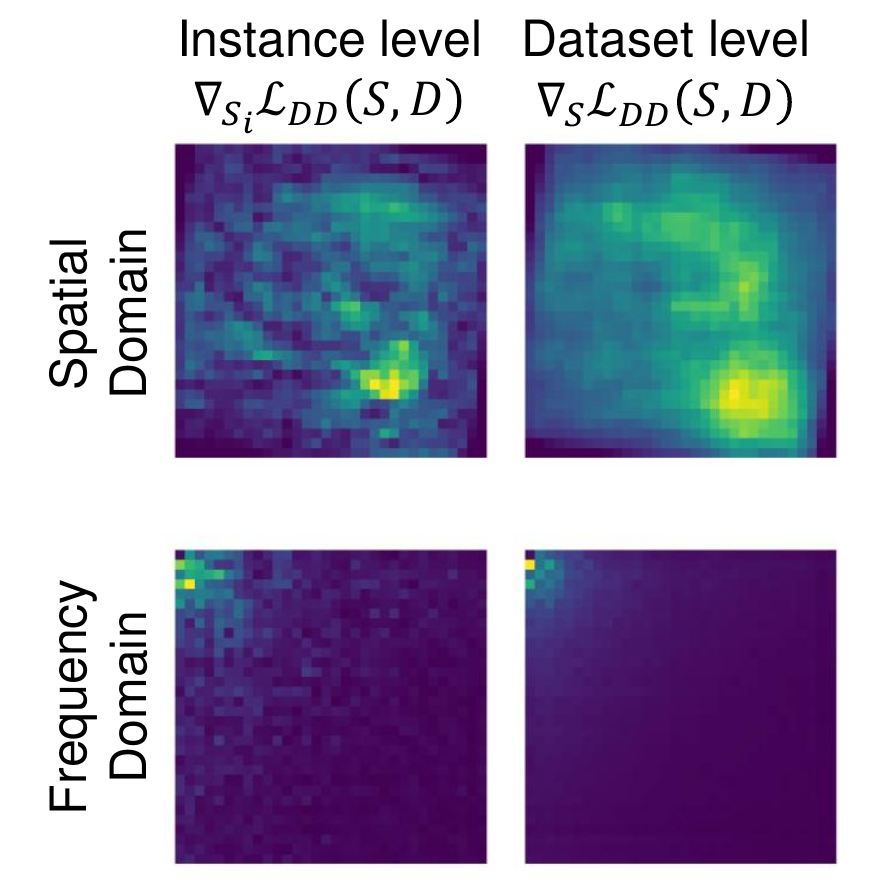}
        \caption{Tendency of dataset distillation loss.} \label{methodology:fig:domain_comparison_2}
    \end{subfigure}
    \caption{Visualization of the information concentrated property. (a) On the frequency domain, a large proportion of both amplitude (left) and explained variance ratio (right) are concentrated in a few dimensions. (b) The magnitude of the gradient for $\mathcal{L}_{DD}(S,D)$ is also concentrated in a few frequency dimensions. We utilize feature matching for $\mathcal{L}_{DD}(S,D)$. A brighter color denotes a higher value. Best viewed in color.} 
    \label{methodology:fig:motivation}
\end{figure*}
\vspace{-0.3in}
\section{Methodology} \label{methodology}
\subsection{Motivation}
The core idea of this paper is to utilize a frequency domain to compress information from the spatial domain into a small number of frequency dimensions. Therefore, we briefly introduce the frequency transform, which is a connector between spatial and frequency domains. Frequency-transform function, denoted as $\mathcal{F}$, converts an input signal $x$ to the frequency domain. It results in the frequency representation, $f = \mathcal{F}(x)\in\mathbb{R}^{d_{1}\times d_{2}}$.\footnote{Although some frequency transform handles the complex space, we use the real space for brevity.} The element-wise expression of $f$ with $x$ is as follows:
\begin{equation}
    f_{u,v}=\sum_{a=0}^{d_{1}-1}\sum_{b=0}^{d_{2}-1}x_{a,b}\,\phi(a,b,u,v)
\label{eq:ft}	
\end{equation}
Here, $\phi(a,b,u,v)$ is a basis function for the frequency domain, and the form of $\phi(a,b,u,v)$ differs by the choice of specific transform function, $\mathcal{F}$. In general, an inverse of $\mathcal{F}$, $\mathcal{F}^{-1}$, exists so that it enables the reconstruction of the original input $x$, from its frequency components $f$ i.e. $x=\mathcal{F}^{-1}(\mathcal{F}(x))$.

A characteristic of the frequency domain is that there exist specific frequency dimensions that encapsulate the major information of data instances from the other domain. According to \citep{ahmed1974discrete,wang2000energy}, natural images tend to exhibit the energy compaction property, which is a concentration of energy in the low-frequency region. Consequently, it becomes feasible to compress the provided instances by exclusively leveraging the low-frequency region. Remark \ref{remark} states that there exists a subset of frequency dimensions that minimize the reconstruction error of a given instance as follows:
\begin{remark}\label{remark}(\cite{rebaza2021first}) Given $d$-dimension data instance of a signal $x=[x_0,...,x_d]^T$, let $f=[f_0,...,f_d]^T$ be its frequency representation with discrete cosine transform (DCT), i.e. $f=DCT(x)$. Also, let $f^{(k)}$ denote the $k$ elements of $f$ while other elements are 0. Then, the minimizer of the reconstruction error $\|x-{DCT}^{-1}(f^{(k)})\|_{2}$ is $[f_0,,,,f_k,0,...0]$.
\end{remark}

Figure \ref{methodology:fig:domain_comparison} supports the remark. When we apply a frequency transform to an image, we can see that the coefficients in the low-frequency region have very large values compared to the coefficients in other regions. Also, as shown in Remark \ref{remark}, the reconstructed image is quite similar to the original image while only utilizing the lowest frequency dimensions. Beyond the instance level, The right side of Figure \ref{methodology:fig:domain_comparison} represents that certain dimensions in the frequency domain exhibit a concentration of variance. It suggests that the frequency domain requires only a small number of dimensions to describe the total variance of a dataset. On the other hand, the spatial domain does not exhibit this behavior; poor quality of reconstruction, and a similar variance ratio across all dimensions. 
\begin{figure*}[]
    \begin{subfigure}{0.63\textwidth}
    \centering
    \begin{multicols}{3}
        \subcaptionbox{Original dataset \label{methodology:fig:log_evr_all}}{\includegraphics[width=\linewidth]{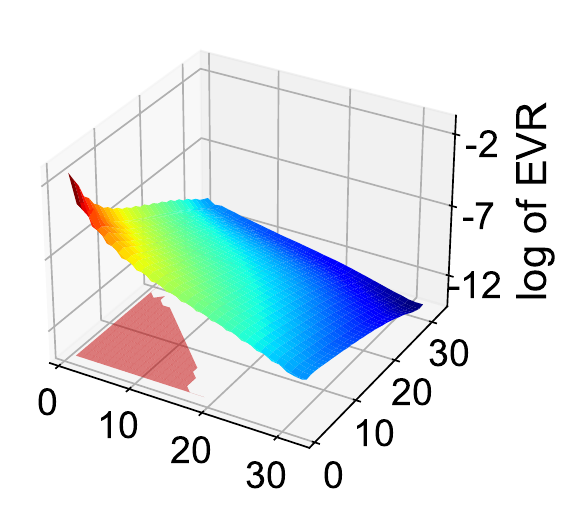}}\par 
        \subcaptionbox{DC (0.0411) \label{methodology:fig:log_evr_DC}}{\includegraphics[width=\linewidth]{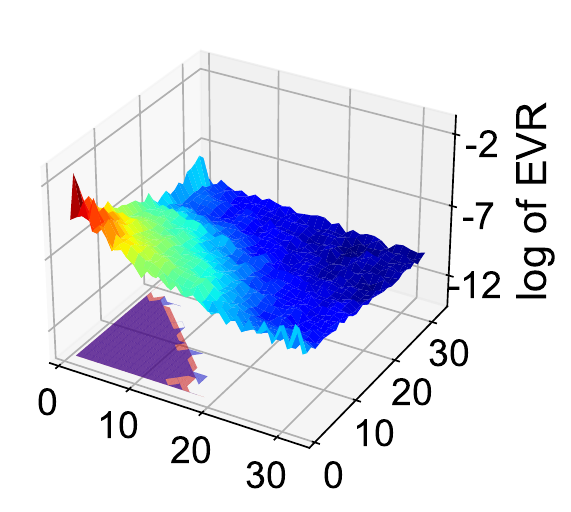}}\par 
        \subcaptionbox{DSA (0.2683) \label{methodology:fig:log_evr_DSA}}{\includegraphics[width=\linewidth]{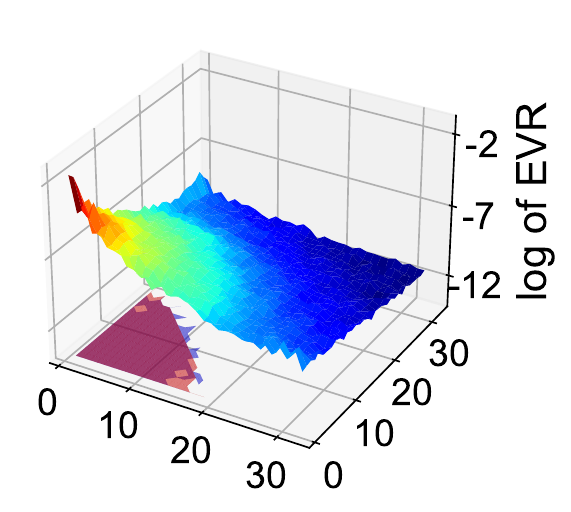}}\par
    \end{multicols}
    \begin{multicols}{3}
        \subcaptionbox{DM (0.0534) \label{methodology:fig:log_evr_DM}}{\includegraphics[width=\linewidth]{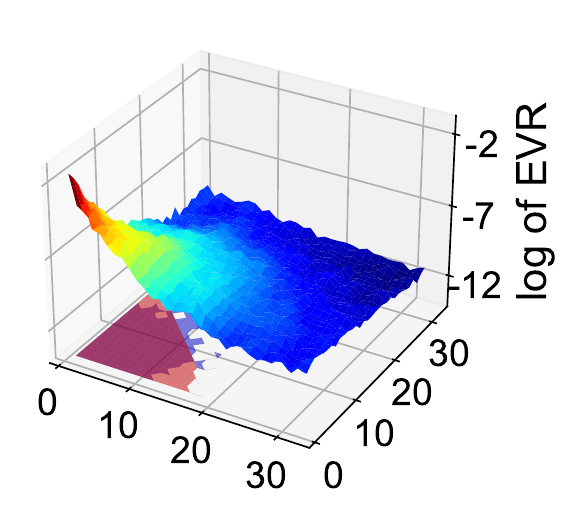}}\par
        \subcaptionbox{IDC (0.1020) \label{methodology:fig:log_evr_IDC}}{\includegraphics[width=\linewidth]{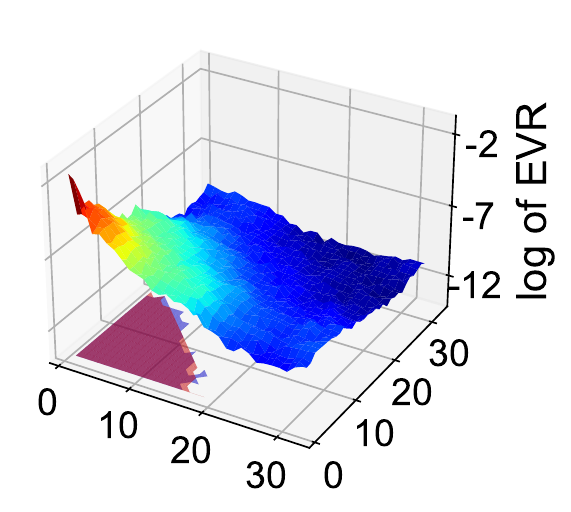}}\par
        \subcaptionbox{TM (0.0242) \label{methodology:fig:log_evr_TM}}{\includegraphics[width=\linewidth]{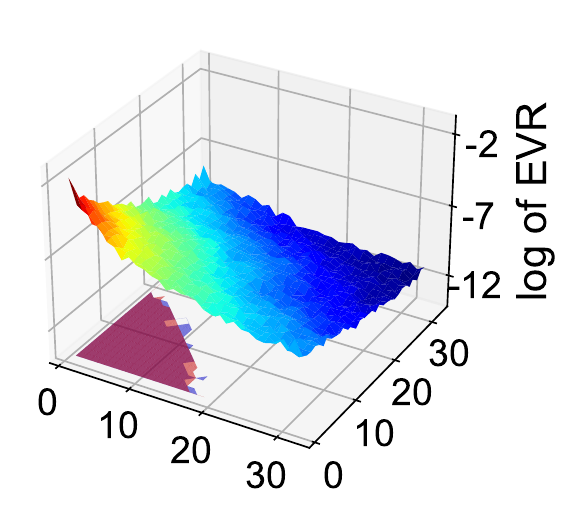}}\par
    \end{multicols}
    \end{subfigure}
    \hfill
    \begin{subfigure}{0.36\textwidth}
        \centering
        \includegraphics[width=\textwidth]{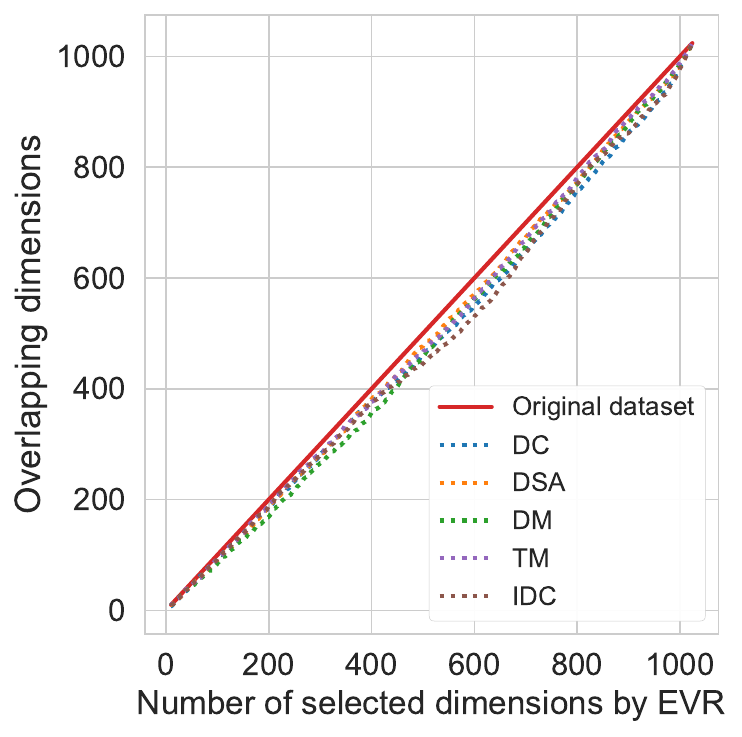}
        \caption{Preservation property of trained $S$} \label{methodology:fig:log_evr_compare}
    \end{subfigure}
    \caption{(a)-(f) The $\log$ of explained variance ratio on each frequency domain dimension (surface) and the degree of overlap between $D$ and $S$ for the top-200 dimensions (floor). The parenthesis is $L_2$ difference of explained variance ratio between $D$ and $S$. (g) It can be observed that trained $S$ mostly preserves the top-$k$ explained variance ratio dimensions of $D$, regardless of $k$.}
\vspace{-0.1in}
\end{figure*}

To investigate whether this tendency is maintained in the dataset distillation, we conducted a frequency analysis of various dataset distillation losses $\mathcal{L}_{DD}(S,D)$. Figure \ref{methodology:fig:domain_comparison_2} shows the magnitude of gradient for $\mathcal{L}_{DD}(S,D)$ i.e. $\| \nabla_{S}\mathcal{L}_{DD}(S,D) \|$. In contrast to the spatial domain, where the gradients w.r.t the spatial domain are uniformly distributed across each dimension; the gradients w.r.t frequency domain are condensed into certain dimensions. This result suggests that many dimensions are needed in the spatial domain to reduce the loss, but only a few specific dimensions are required in the frequency domain. Furthermore, we compared the frequency domain information of $D$ and $S$ which were trained with losses from previous research. As shown in Figures \ref{methodology:fig:log_evr_all} to \ref{methodology:fig:log_evr_TM}, the distribution of explained variance ratios in the frequency domain is very similar across different distillation losses. Also, Figure \ref{methodology:fig:log_evr_compare} shows that trained $S$ mostly preserves the top-$k$ explained variance ratio dimensions of $D$ in the frequency domain, regardless of $k$ and the distillation loss function. Consequently, if there exists a frequency dimension on $D$ with a low explained variance ratio of frequency representations, it can be speculated that the absence of the dimension will have little impact on the optimization of $S$.

\subsection{FreD: \underline{Fre}quency domain-based Dataset \underline{D}istillation}
We introduce the frequency domain-based dataset distillation method, coined FreD, which only utilizes the subset of entire frequency dimensions. Utilizing FreD on the construction of $S$ has several advantages. First of all, since the frequency domain concentrates information in a few dimensions, FreD can be easy to specify some dimensions that preserve the most information. As a result, by reducing the necessary dimensions for each instance, the remaining budget can be utilized to increase the number of condensed images. Second, FreD can be orthogonally applied to existing spatial-based methods without constraining the available loss functions. Figure \ref{methodology:fig:overview} illustrates the overview of FreD with the information flow. FreD consists of three main components: 1) \textit{Synthetic frequency memory}, 2) \textit{Binary mask memory}, and 3) \textit{Inverse frequency transform}.
\vspace{-0.1in}
\paragraph{Synthetic Frequency Memory $F$.} Synthetic frequency memory $F$ consists of $\lvert F \rvert$ frequency-label data pair, i.e. $F=\{(f^{(i)},y^{(i)})\}_{i=1}^{\lvert F \rvert}$. Each $f^{(i)}\in\mathbb{R}^{d}$ is initialized with a frequency representation acquired through the frequency transform of randomly sampled $(x^{(i)},y^{(i)}) \sim D$, i.e. $f^{(i)}=\mathcal{F}(x^{(i)})$.
\begin{figure*}[t]
    \centering
    \begin{subfigure}{0.65\textwidth}
        \centering
        \centerline{\includegraphics[width=\textwidth]{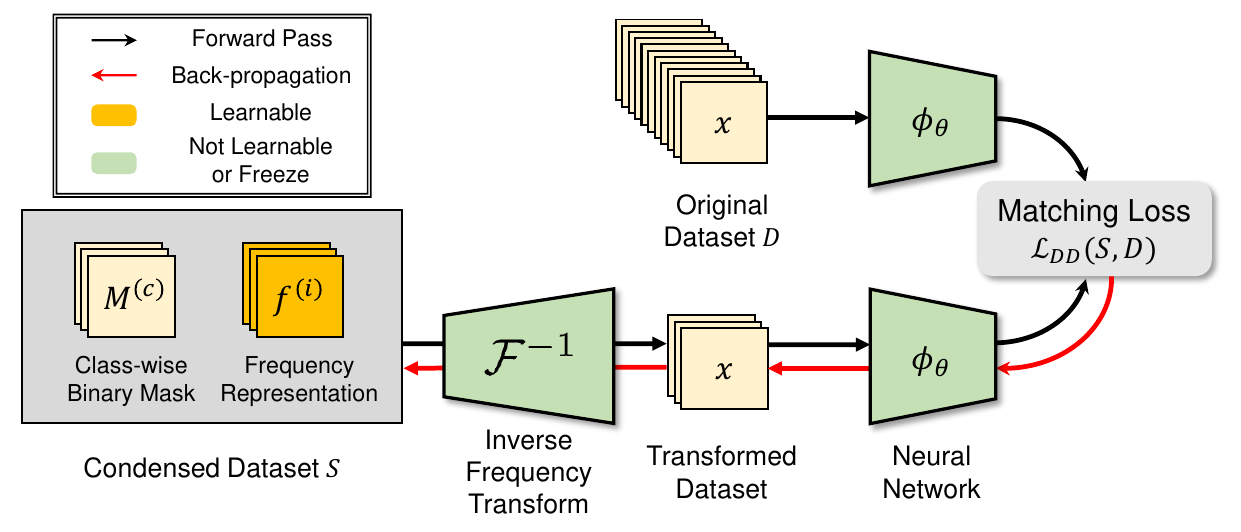}}
        \caption{Overall structure of FreD.} \label{methodology:fig:structure}
    \end{subfigure}
    \hfill
    \begin{subfigure}{0.32\textwidth}
        \centering
        \centerline{\includegraphics[width=\textwidth]{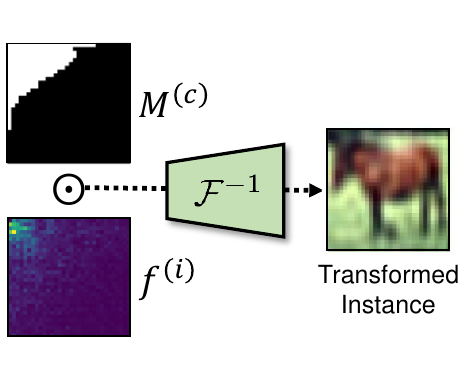}}
        \caption{Inverse frequency transform with binary masking.} \label{methodology:fig:ift}
    \end{subfigure}
\caption{Visualization of the proposed method, Frequency domain-based Dataset Distillation, FreD.} \label{methodology:fig:overview}
\end{figure*}
\vspace{-0.2in}
\paragraph{Binary Mask Memory $M$.} To filter out uninformative dimensions in the frequency domain, we introduce a set of binary masks. Assuming the disparity of each class in the frequency domain, we utilize class-wise masks as $M=\{ M^{(1)},..., M^{(C)} \}$, where each $M^{(c)}\in\{0,1\}^{d}$ denotes the binary mask of class $c$. To filter out superfluous dimensions, each $M^{(c)}$ is constrained to have $k$ non-zero elements, with the remaining dimensions masked as zero. Based on $M^{(c)}$, we acquire the class-wise filtered frequency representation, $M^{(c)}\odot f^{(i)}$, which takes $k$ frequency coefficients as follows: 
\begin{equation}
    M_{u,v}^{(c)}\odot f^{(i)}= 
    \begin{cases}
        f_{u,v}^{(i)} & \text{if } M_{u,v}^{(c)} = 1 \\
        0 & \text{otherwise}
    \end{cases}
\end{equation} 
PCA-based variance analysis is commonly employed for analyzing the distribution data instances. This paper measures the informativeness of each dimension in the frequency domain by utilizing the Explained Variance Ratio (EVR) of each dimension, which we denote as $\eta_{u,v}=\frac{\sigma_{u,v}^{2}}{\sum_{u',v'} \sigma_{u',v'}^{2}}$. Here, $\sigma_{u,v}^{2}$ is the variance of the ${u,v}$-th frequency dimension. We construct a mask, $M_{u,v}^{(c)}$, which only utilizes the top-$k$ dimensions based on the $\eta_{u,v}$ to maximally preserve the class-wise variance of $D$ as follows:
\begin{equation} \label{eq:construct_mask}
    M_{u,v}^{(c)} = 
    \begin{cases} 
        1 & \text{if } \eta_{u,v} \text{ is among the top-$k$ values} \\ 
        0 &  \text{otherwise} 
    \end{cases}
\end{equation}
This masking strategy is efficient as it can be computed solely using the dataset statistics and does not require additional training with a deep neural network. This form of modeling is different from traditional filters such as low/high-pass filters, or band-stop filters, which are common choices in image processing. These frequency-range-based filters pass only certain frequency ranges by utilizing the stylized fact that each instance contains varying amounts of information across different frequency ranges. We conjecture that this type of class-agnostic filter cannot capture the discriminative features of each class, thereby failing to provide adequate information for downstream tasks, i.e. classification. This claim is supported by our empirical studies in Section \ref{exp:ablation}.
\vspace{-0.1in}
\paragraph{Inverse Frequency Transform $\mathcal{F}^{-1}$.} We utilize the inverse frequency transform, $\mathcal{F}^{-1}$, to transform the inferred frequency representation to the corresponding instance on the spatial domain. The characteristics of the inverse frequency transform make it highly suitable as a choice for dataset distillation. First, $\mathcal{F}^{-1}$ is a differentiable function that enables the back-propagation of the gradient to the corresponding frequency domain. Second, $\mathcal{F}^{-1}$ is a static and invariant function, which does not require an additional parameter. Therefore, $\mathcal{F}^{-1}$ does not require any budget and training that could lead to inefficiency and instability. Third, the transformation process is efficient and fast. For $d$-dimension input vector, $\mathcal{F}^{-1}$ requires $\mathcal{O}(d\log{d})$ operation while the convolution layer with $r$-size kernel needs $\mathcal{O}(dr)$. Therefore, in the common situation, where $\log{d}<r$, $\mathcal{F}^{-1}$ is faster than the convolution layer.\footnote{There are several researches which utilize this property to replace the convolution layer with a frequency transform \cite{bengio2007scaling,mathieu2013fast,pratt2017fcnn}.} We provide an ablation study to compare the effectiveness of each frequency transform in Section \ref{exp:ablation} and Appendix \ref{appendix:additional ablation FT}.
\vspace{-0.1in}
\paragraph{Learning Framework.} Following the tradition of dataset distillation \cite{wang2018dataset,zhao2020dataset,kim2022dataset}, the training and evaluation stages of FreD are as follows:
\begin{align} 
    F^{*}&=\argmin_{F}\mathcal{L}_{DD}(\tilde{S},D)\, \,\text{where}\,\,\tilde{S}=\mathcal{F}^{-1}(M\odot F) \quad \quad \text{(Training)} \label{eq5}\\ 
    \theta^{*}&=\argmin_{\theta}\mathcal{L}(\tilde{S}^{*};\theta)\, \,\text{where}\,\,\tilde{S}^{*}=\mathcal{F}^{-1}(M\odot F^{*}) \quad \quad \text{(Evaluation)} \label{eq6}
\end{align} where $M\odot F$ denotes the collection of instance-wise masked representation i.e. $M\odot F=\{(M^{(y^{(i)})}(f^{(i)}), y^{(i)})\}_{i=1}^{\lvert F \rvert}$. By estimating $F^{*}$ from the training with Eq \eqref{eq5}, we evaluate the effectiveness of $F^{*}$ by utilizing $\theta^{*}$, which is a model parameter inferred from training with the transformed dataset, $\tilde{S}=\mathcal{F}^{-1}(M\odot F^{*})$. Algorithm \ref{alg:main} provides the instruction of FreD.
\vspace{-0.1in}
\paragraph{Budget Allocation.} When we can store $n$ instances which is $d$-dimension vector for each class, the budget for dataset distillation is limited by $n\times d$. In FreD, we utilize $k<d$ dimensions for each instance. Therefore, we can accommodate $\lvert F \rvert=\lfloor n(\frac{d}{k}) \rfloor>n$ instances with the same budget. After the training of FreD, we acquire $M\odot F^{*}$, which actually shares the same dimension as the original image. However, we only count $k$ non-zero elements on each $f$ because storing 0 in $d-k$ dimension is negligible by small bytes. Please refer to Appendix \ref{appendix:budget} for more discussion on budget allocation.
\begin{algorithm}[t]
	\caption{FreD: Frequency domain-based Dataset Distillation}
	\label{alg:main}
	\begin{algorithmic}[1]
		\STATE{\bfseries Input:} Original dataset $D$; Number of classes $C$; Frequency transform $\mathcal{F}$; Distillation loss $\mathcal{L}_{DD}$; Dimension budget per instance $k$; Number of frequency representations $\lvert F \rvert$; Learning rate $\alpha$
        \STATE Initialize $F=\emptyset$
        \FOR{$c=1$ {\bfseries to} $C$} 
            \STATE $F^{(c)} \leftarrow \{(\mathcal{F}(x^{(i)}), y^{(i)})\}_{i=1}^{\frac{\lvert F \rvert}{C}}$ from a class-wise mini-batch $\{(x^{(i)},y^{(i)})\}_{i=1}^{\frac{\lvert F \rvert}{C}}\sim D^{(c)}$
            \STATE Initialize $M^{(c)}$ by Eq. \eqref{eq:construct_mask} of the main paper
        \ENDFOR
        \REPEAT
            \STATE $F \leftarrow F-\alpha\nabla\mathcal{L}_{DD}(\mathcal{F}^{-1}(M\odot B_{F}),B_{D})$ from a mini-batch $B_{D}\sim D$ and $B_{F}\sim F$
        \UNTIL convergence
        \STATE{\bfseries Output:} Masked frequency representations $M \odot F$
	\end{algorithmic}
\end{algorithm}
\vspace{-0.05in}
\subsection{Theoretic Analysis of FreD} \label{section:theory}
This section provides theoretical justification for the dimension selection of FreD by EVR, $\eta$. For validation, we assume that $\mathcal{F}$ is linearly bijective.
\begin{restatable}{prop}{proposition} \label{methodology:prop1}
    Let domain $A$ and $B$ be connected by a linear bijective function, $W$. The sum of $\eta$ over a subset of dimensions in domain $A$ for a dataset $X$ is equal to the sum of $\eta$ for the dataset transformed to domain $B$ using only the corresponding subset of dimensions.
\end{restatable}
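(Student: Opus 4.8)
The plan is to recast both sides of the claimed identity as ratios of total variances, and then reduce the statement to a single energy-preservation (Parseval-type) property of the transform. Write the dataset as centered vectors $\{a_n\}_{n=1}^{N}\subset\mathbb{R}^{d}$ in domain $A$, with covariance $\Sigma_A$, so that (flattening the $2$D index) $\sigma_i^2=(\Sigma_A)_{ii}$ and the total variance is $\mathrm{tr}(\Sigma_A)=\sum_i\sigma_i^2$. For a subset $S$ of coordinates let $P_S=\mathrm{diag}(\mathbf{1}[i\in S])$ be the $0/1$ selection matrix. Then the left-hand side, the sum of $\eta$ over $S$, is exactly $\sum_{i\in S}\eta_i = \mathrm{tr}(P_S\Sigma_A)/\mathrm{tr}(\Sigma_A)$. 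First I would make this bookkeeping precise, observing that masking to $S$ in domain $A$ produces the vectors $P_S a_n$ whose covariance is $P_S\Sigma_A P_S$ and whose total variance is precisely the numerator $\sum_{i\in S}\sigma_i^2=\mathrm{tr}(P_S\Sigma_A)$.

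Next I would interpret the right-hand side. The phrase ``the dataset transformed to domain $B$ using only the corresponding subset of dimensions'' is $\{W P_S a_n\}$, i.e. the masked signal pushed through $W$, while the full transformed dataset is $\{W a_n\}$ with covariance $W\Sigma_A W^{T}$. The sum of $\eta$ in domain $B$ for the masked-transformed data, measured against the full transformed data, is therefore $\mathrm{tr}(W P_S\Sigma_A P_S W^{T})/\mathrm{tr}(W\Sigma_A W^{T})$. So the whole proposition collapses to the equality $\mathrm{tr}(P_S\Sigma_A)/\mathrm{tr}(\Sigma_A)=\mathrm{tr}(W P_S\Sigma_A P_S W^{T})/\mathrm{tr}(W\Sigma_A W^{T})$.

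The engine of the proof is a single lemma: the transform preserves total variance, i.e. $\mathrm{tr}(W\Sigma W^{T})=\mathrm{tr}(\Sigma)$ for every covariance $\Sigma$. Given this, cyclicity of the trace and idempotency of $P_S$ finish everything at once: the denominator becomes $\mathrm{tr}(\Sigma_A)$, and applying the lemma to the masked covariance $P_S\Sigma_A P_S$ turns the numerator into $\mathrm{tr}(P_S\Sigma_A P_S)=\mathrm{tr}(P_S\Sigma_A)$, matching the left-hand side. Equivalently, I would phrase it as applying Parseval's identity twice---once to the full signal and once to the masked signal---since the total variance of a centered dataset equals its total energy.

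The main obstacle is justifying that lemma. Writing $\mathrm{tr}(W\Sigma W^{T})=\mathrm{tr}(W^{T}W\,\Sigma)$, this equals $\mathrm{tr}(\Sigma)$ exactly when $W^{T}W=I$ (more generally $W^{T}W=cI$ cancels in the ratio), i.e. when $W$ is orthonormal up to scale; a merely linear bijective $W$ does \emph{not} preserve the trace, and the identity can genuinely fail. I would therefore pin the argument on the orthonormality of the frequency transforms actually used (DCT/DFT are orthonormal, so $W^{T}W=I$ and Parseval holds), treating ``linearly bijective'' as being instantiated by such an orthonormal transform, and I would flag explicitly that it is this orthonormality---not bijectivity alone---that does the work.
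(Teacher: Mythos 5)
Your proof is correct (under the hypothesis you add) and follows the same underlying computation as the paper's --- both hinge on the covariance conjugation $\Sigma_B = W\Sigma_A W^{T}$ and on invariance of the total variance under the transform --- but your write-up is both cleaner and more honest about what is actually being used. Concretely, you work directly with the diagonal variances $\sigma_i^2=(\Sigma_A)_{ii}$, which matches the paper's own definition of $\eta_{u,v}$, whereas the paper's proof silently switches to ``eigenvalues of the $i$-th dimension of the covariance matrix'' and then invokes two claims --- that the eigenvalues of $\Sigma_A$ and $W\Sigma_A W^{T}$ coincide, and that the sum of eigenvalues is invariant under an arbitrary bijective linear map --- neither of which is true for a general invertible $W$. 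Your reduction to $\mathrm{tr}(P_S\Sigma_A)/\mathrm{tr}(\Sigma_A)=\mathrm{tr}(W P_S\Sigma_A P_S W^{T})/\mathrm{tr}(W\Sigma_A W^{T})$ via idempotency of $P_S$ and cyclicity of the trace isolates the one lemma that does the work, $\mathrm{tr}(W\Sigma W^{T})=\mathrm{tr}(\Sigma)$, and you correctly observe that this requires $W^{T}W=cI$, i.e.\ orthonormality up to scale (satisfied by DCT/DFT), not bijectivity alone; the paper's proof glosses over exactly this point with the remark that ``orthogonality in the original space is preserved in the transformed space.'' What your route buys is a counterexample-proof statement of the actual hypothesis needed (take $W=\mathrm{diag}(2,1)$ to see the proposition fail as literally stated); what the paper's route buys is nothing beyond brevity. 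The one place to be slightly careful is your reading of the right-hand side as the masked-then-transformed dataset $\{WP_Sa_n\}$ measured against the total variance of the \emph{full} transformed dataset $\{Wa_n\}$ --- this is the intended interpretation given how FreD uses the result, and it is the same reading the paper's proof implicitly adopts, but it is worth stating explicitly since the proposition's prose admits other parsings.
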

Proposition \ref{methodology:prop1} claims that if two different domains are linearly bijective, the sum of EVR that utilizes only specific dimensions remains the same even when transformed into a different domain. In other words, if the sum of EVR of specific dimensions in the frequency domain is high, this value can be maintained when transforming a dataset only with those dimensions into the other domain.
\begin{restatable}{corol}{corollary} \label{methodology:corol1}
    Assume that two distinct domains, $B$ and $C$, are linearly bijective with domain $A$ by $W_{B}$ and $W_{C}$. let $X$ be a dataset in domain $A$, and $X_{B}$ and $X_{C}$ be the datasets transformed to domains $B$ and $C$, respectively. Let $V^{*}_{B,k}$ and $V^{*}_{C,k}$ be the set of $k$ dimension indexes that maximize $\eta$ in each domain. Let $\eta^{*}_{B,k}$ and $\eta^{*}_{C,k}$ be the corresponding sum of $\eta$ for each domain. If $\eta^{*}_{B,k} \geq \eta^{*}_{C,k}$, then the sum of $\eta$ for $W_{V^{*}_{B,k}}X_{B}$ is greater than that of $W_{V^{*}_{C,k}}X_{C}$.
\end{restatable}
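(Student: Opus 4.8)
The plan is to treat this as a direct corollary of Proposition~\ref{methodology:prop1}: I would apply that proposition once to the bijection $W_B$ and once to $W_C$, so as to re-express both top-$k$ EVR sums, $\eta^{*}_{B,k}$ and $\eta^{*}_{C,k}$, as EVR sums of datasets living in the \emph{single} common domain $A$, after which the assumed inequality transfers verbatim. The only substantive work is bookkeeping: ensuring that the two quantities being compared are normalized by the same total variance.

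First I would fix notation. Since $W_B$ and $W_C$ are linear bijections relating $A$ to $B$ and to $C$, write $X_B = W_B X$ and $X_C = W_C X$ for the representations of the dataset in the two domains. Let $V^{*}_{B,k}$ and $V^{*}_{C,k}$ be the selected index sets, and read $W_{V^{*}_{B,k}} X_B$ as the dataset obtained by retaining only the coordinates of $X_B$ indexed by $V^{*}_{B,k}$ (zeroing the rest) and mapping the result back to $A$ through $W_B^{-1}$; analogously for $W_{V^{*}_{C,k}} X_C$. The key structural observation is that both reconstructed datasets live in domain $A$, so their EVR sums share the same denominator, namely the total variance of $X$ in $A$, and are therefore directly comparable.

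Next I would invoke Proposition~\ref{methodology:prop1}. Applied to the bijection $W_B$ and the subset $V^{*}_{B,k}$, it equates the sum of $\eta$ over $V^{*}_{B,k}$ in domain $B$ with the sum of $\eta$ of the dataset transported to $A$ using only those dimensions; that is,
\begin{equation}
    \mathrm{EVR}\bigl(W_{V^{*}_{B,k}} X_B\bigr) \;=\; \sum_{i \in V^{*}_{B,k}} \eta^{(B)}_{i} \;=\; \eta^{*}_{B,k},
\end{equation}
where the EVR on the left is computed in domain $A$. Applying the same proposition to $W_C$ with the subset $V^{*}_{C,k}$ gives $\mathrm{EVR}(W_{V^{*}_{C,k}} X_C) = \eta^{*}_{C,k}$. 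Note that the maximality of $V^{*}_{B,k}$ and $V^{*}_{C,k}$ is not needed here; Proposition~\ref{methodology:prop1} holds for any subset, so these equalities are valid for the particular top-$k$ sets at hand.

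Finally I would chain the two identities with the hypothesis $\eta^{*}_{B,k} \ge \eta^{*}_{C,k}$ to conclude $\mathrm{EVR}(W_{V^{*}_{B,k}} X_B) \ge \mathrm{EVR}(W_{V^{*}_{C,k}} X_C)$, which is the claim (with $\ge$ in place of the stated strict inequality, the strictness holding whenever the hypothesis is strict). I expect the main obstacle to be interpretive rather than computational: pinning down the meaning of the operator $W_{V^{*}_{B,k}}$ so that it matches the phrase \emph{transformed using only the corresponding subset of dimensions} in Proposition~\ref{methodology:prop1}, and verifying that the comparison is genuinely carried out in one fixed domain so that the normalizing total variances coincide. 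Once that alignment is secured, no further estimation is required.
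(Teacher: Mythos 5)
Your proposal matches the paper's own proof: both apply Proposition~\ref{methodology:prop1} once to $W_B$ with $V^{*}_{B,k}$ and once to $W_C$ with $V^{*}_{C,k}$ to transport the two EVR sums into the common domain $A$, then transfer the hypothesis $\eta^{*}_{B,k} \geq \eta^{*}_{C,k}$ directly. Your version is if anything more careful than the paper's (noting that maximality of the index sets is not actually used, that the normalizing total variances coincide, and that the conclusion is properly $\geq$ rather than strict), but the route is the same.
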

\begin{wrapfigure}{r}{0.575\textwidth}
\vspace{-0.15in}
    \centering
    \begin{subfigure}{0.49\linewidth}
        \centering
        \centerline{\includegraphics[width=\textwidth]{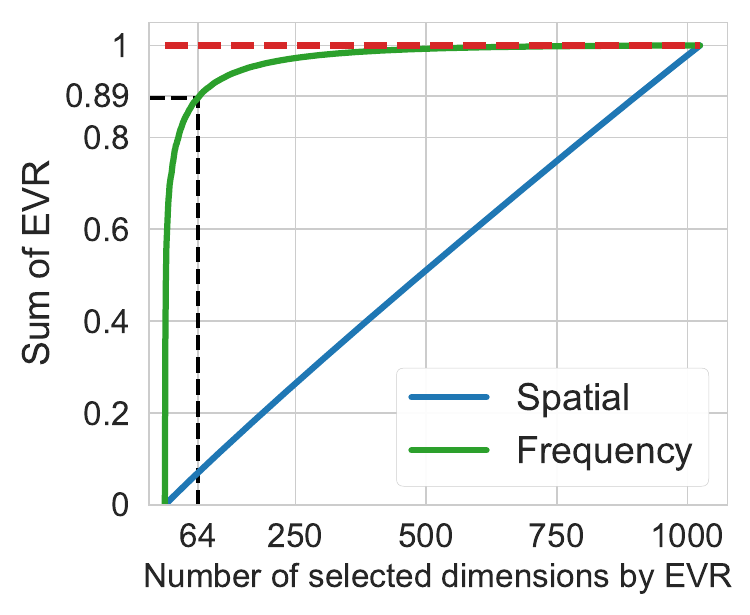}}
        \caption{Comparison of domain} \label{methdology:fig:theoretical evidence 1}
    \end{subfigure}
    \hfill
    \begin{subfigure}{0.49\linewidth}
    \centering
        \centerline{\includegraphics[width=\textwidth]{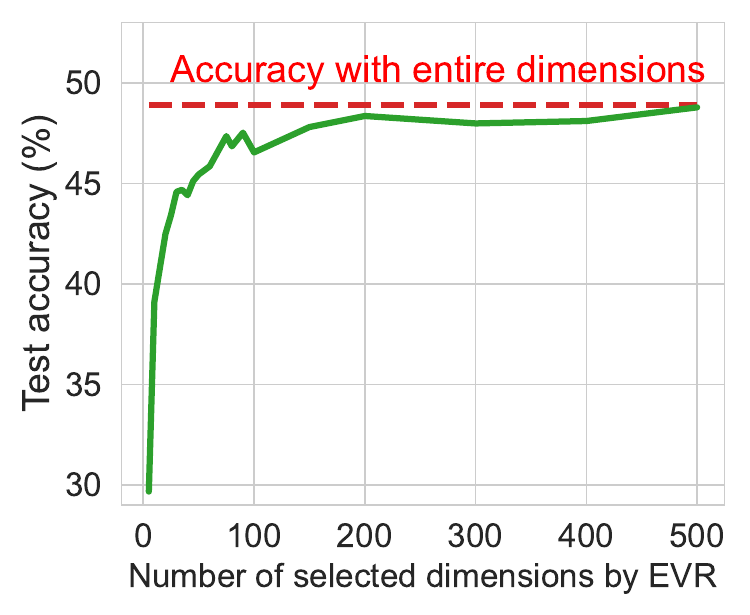}}
        \caption{Efficiency of EVR selection} \label{methodology:fig:theoretical evidence 2}
    \end{subfigure}
\caption{Empirical evidence for EVR-based selection.} \label{methodology:fig:theoretical evidence}
\vspace{-0.25in}
\end{wrapfigure}
Corollary \ref{methodology:corol1} claims that for multiple domains that have a bijective relationship with a specific domain, if a certain domain can obtain the high value of sum of EVR with the same number of dimensions, then when inverse transformed, it can improve the explanation in original domain. Based on the information concentration of the frequency domain, FreD can be regarded as a method that can better represent the distribution of the original dataset, $D$, based on $S$. PCA, which sets the principal components of the given dataset as new axes, can ideally preserve $\eta$. However, PCA, being a data-driven transform function, cannot be practically utilized as a method for dataset distillation. Please refer to Appendix \ref{appendix:comparison with PCA} for supporting evidence of this claim.

Figure \ref{methdology:fig:theoretical evidence 1} shows the sum of $\eta$ in descending order in both the spatial and frequency domains. First, we observe that as the number of selected dimensions in the frequency domain increases, frequency domain is much faster in converging to the total variance than the spatial domain. This result shows that the EVR-based dimension selection in the frequency domain is more effective than the selection in the spatial domain. In terms of training, Figure \ref{methodology:fig:theoretical evidence 2} also shows that the performance of a dataset constructed with a very small number of frequency dimensions converges rapidly to the performance of a dataset with the entire frequency dimension.

\begin{table}[t] 
    \centering
    \caption{Test accuracies (\%) on SVHN, CIFAR-10, and CIFAR-100. "IPC" denotes the number of images per class. "\#Params" denotes the total number of budget parameters. The best results and the second-best result are highlighted in \textbf{bold} and \underline{underline}, respectively.}
    \label{experiments:tab:main}
    \adjustbox{max width=\textwidth}{%
    \begin{tabular}{c c ccc ccc ccc}
    \toprule \toprule
     & & \multicolumn{3}{c}{SVHN} & \multicolumn{3}{c}{CIFAR-10} & \multicolumn{3}{c}{CIFAR-100} \\
    \cmidrule(lr){3-5} \cmidrule(lr){6-8} \cmidrule(lr){9-11} 
     & IPC & 1 & 10 & 50 & 1 & 10 & 50 & 1 & 10 & 50 \\
     & \#Params & 30.72k & 307.2k & 1536k & 30.72k & 307.2k & 1536k & 30.72k & 307.2k & 1536k \\
    \midrule
    \multirow{2}{*}{Coreset} & Random & 14.6 \small{$\pm1.6$} & 35.1 \small{$\pm4.1$} & 70.9 \small{$\pm0.9$} & 14.4 \small{$\pm2.0$} & 26.0 \small{$\pm1.2$} & 43.4 \small{$\pm1.0$} & 4.2 \small{$\pm0.3$} & 14.6 \small{$\pm0.5$} & 30.0 \small{$\pm0.4$} \\
     & Herding & 20.9 \small{$\pm1.3$} & 50.5 \small{$\pm3.3$} & 72.6 \small{$\pm0.8$} & 21.5 \small{$\pm1.3$} & 31.6 \small{$\pm0.7$} & 40.4 \small{$\pm0.6$} & 8.4 \small{$\pm0.3$} & 17.3 \small{$\pm0.3$} & 33.7 \small{$\pm0.5$} \\
    \midrule 
    \multirow{7}{*}{\makecell{Input-sized\\parameterization}} & DC & 31.2 \small{$\pm1.4$} & 76.1 \small{$\pm0.6$} & 82.3 \small{$\pm0.3$} & 28.3 \small{$\pm0.5$} & 44.9 \small{$\pm0.5$} & 53.9 \small{$\pm0.5$} & 12.8 \small{$\pm0.3$} & 25.2 \small{$\pm0.3$} & - \\
     & DSA & 27.5 \small{$\pm1.4$} & 79.2 \small{$\pm0.5$} & 84.4 \small{$\pm0.4$} & 28.8 \small{$\pm0.7$} & 52.1 \small{$\pm0.5$} & 60.6 \small{$\pm0.5$} & 13.9 \small{$\pm0.3$} & 32.3 \small{$\pm0.3$} & 42.8 \small{$\pm0.4$} \\
     & DM & - & - & - & 26.0 \small{$\pm0.8$} & 48.9 \small{$\pm0.6$} & 63.0 \small{$\pm0.4$} & 11.4 \small{$\pm0.2$} & 29.7 \small{$\pm0.2$} & 43.6 \small{$\pm0.4$} \\
     & CAFE+DSA & 42.9 \small{$\pm3.0$} & 77.9 \small{$\pm0.6$} & 82.3 \small{$\pm0.4$} & 31.6 \small{$\pm0.8$} & 50.9 \small{$\pm0.5$} & 62.3 \small{$\pm0.4$} & 14.0 \small{$\pm0.2$} & 31.5 \small{$\pm0.2$} & 42.9 \small{$\pm0.2$} \\
     & TM & 58.5 \small{$\pm1.4$} & 70.8 \small{$\pm1.8$} & 85.7 \small{$\pm0.1$} & 46.3 \small{$\pm0.8$} & 65.3 \small{$\pm0.7$} & 71.6 \small{$\pm0.2$} & 24.3 \small{$\pm0.2$} & 40.1 \small{$\pm0.4$} & \underline{47.7} \small{$\pm0.2$} \\
     & KIP & 57.3 \small{$\pm0.1$} & 75.0 \small{$\pm0.1$} & 80.5 \small{$\pm0.1$} & \underline{49.9} \small{$\pm0.2$} & 62.7 \small{$\pm0.3$} & 68.6 \small{$\pm0.2$} & 15.7 \small{$\pm0.2$} & 28.3 \small{$\pm0.1$} & - \\
     & FRePo & - & - & - & 46.8 \small{$\pm0.7$} & 65.5 \small{$\pm0.4$} & 71.7 \small{$\pm0.2$} & 28.7 \small{$\pm0.1$} & \underline{42.5} \small{$\pm0.2$} & 44.3 \small{$\pm0.2$} \\
    \midrule
    \multirow{3}{*}{Parameterization} & IDC & 68.1 \small{$\pm0.1$} & \underline{87.3} \small{$\pm0.2$} & \underline{90.2} \small{$\pm0.1$} & \underline{50.0} \small{$\pm0.4$} & 67.5 \small{$\pm0.5$} & \underline{74.5} \small{$\pm0.1$} & - & - & - \\
     & HaBa & \underline{69.8} \small{$\pm1.3$} & 83.2 \small{$\pm0.4$} & 88.3 \small{$\pm0.1$} & 48.3 \small{$\pm0.8$} & \underline{69.9} \small{$\pm0.4$} & 74.0 \small{$\pm0.2$} & \underline{33.4} \small{$\pm0.4$} & 40.2 \small{$\pm0.2$} & 47.0 \small{$\pm0.2$} \\
     & \gc FreD & \gc \textbf{82.2} \small{$\pm0.6$} & \gc \textbf{89.5} \small{$\pm0.1$} & \gc \textbf{90.3} \small{$\pm0.3$} & \gc \textbf{60.6} \small{$\pm0.8$} & \gc \textbf{70.3} \small{$\pm0.3$} & \gc \textbf{75.8} \small{$\pm0.1$} & \gc \textbf{34.6} \small{$\pm0.4$} & \gc \textbf{42.7} \small{$\pm0.2$} & \gc \textbf{47.8} \small{$\pm0.1$} \\
    \midrule
    \multicolumn{2}{c}{Entire original dataset} & \multicolumn{3}{c}{95.4 \small{$\pm0.1$}} & \multicolumn{3}{c}{84.8 \small{$\pm0.1$}} & \multicolumn{3}{c}{56.2 \small{$\pm0.3$}} \\
    \midrule \midrule
    \multirow{3}{*}{\makecell{Increment of\\decoded instances}} & IDC & $\times 5$ & $\times 5$ & $\times 5$ & $\times 5$ & $\times 5$ & $\times 5$ & - & - & - \\  
    & HaBa & $\times 5$ & $\times 5$ & $\times 5$ & $\times 5$ & $\times 5$ & $\times 5$ & $\times 5$ & $\times 5$ & $\times 5$ \\
    & FreD & $\times 16$ & $\times 8$ & $\times 4$ & $\times 16$ & $\times 6.4$ & $\times 4$ & $\times 8$ & $\times 2.56$ & $\times 2.56$ \\
    \bottomrule \bottomrule
  \end{tabular}
    }
\end{table}
\begin{figure}[t]
    \centering
    \begin{subfigure}{0.325\textwidth}
        \centering
        \centerline{\includegraphics[width=\textwidth]{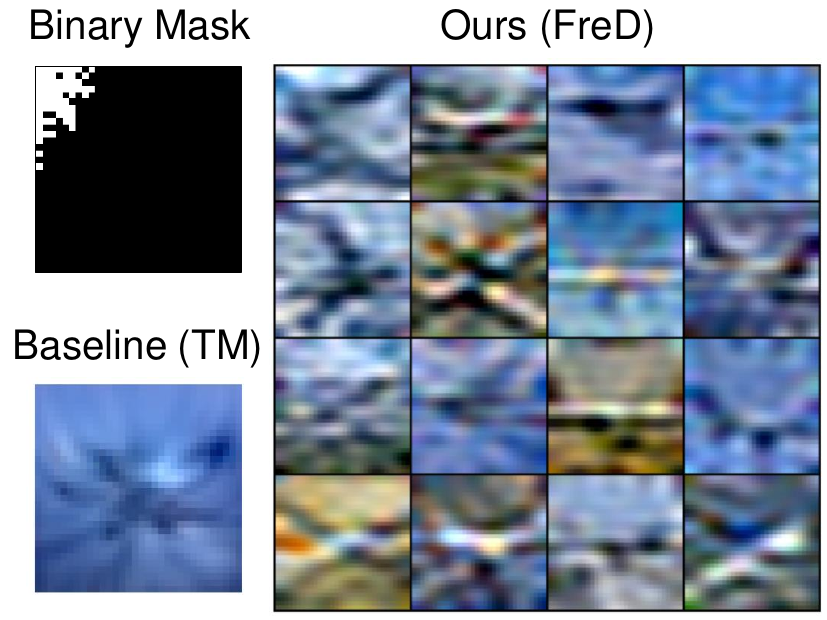}}
        \caption{Class: Airplane}
        \label{experiments:fig:qual-airplane}
    \end{subfigure}
    \hfill
    \begin{subfigure}{0.325\textwidth}
        \centering
        \centerline{\includegraphics[width=\textwidth]{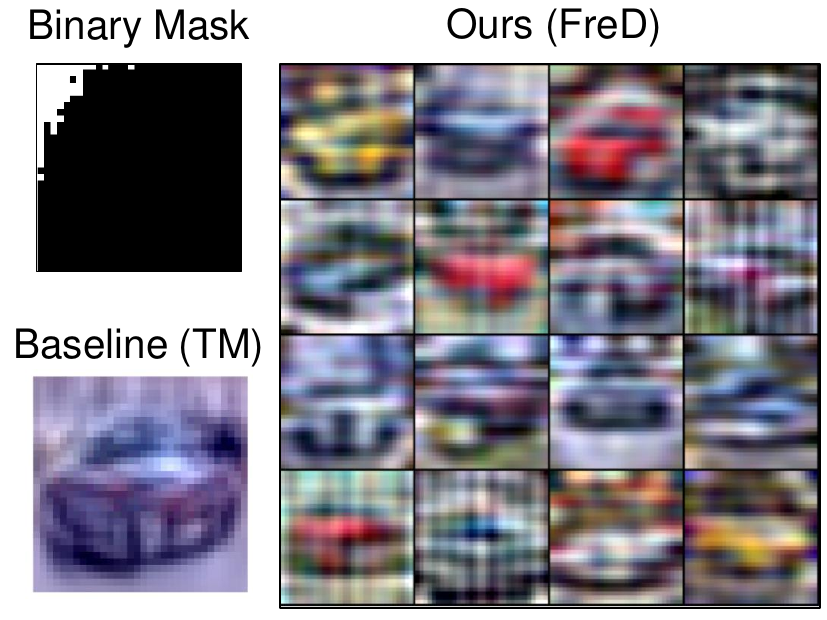}}
        \caption{Class: Automobile}
        \label{experiments:fig:qual-automobile}
    \end{subfigure}
    \hfill
    \begin{subfigure}{0.325\textwidth}
        \centering
        \centerline{\includegraphics[width=\textwidth]{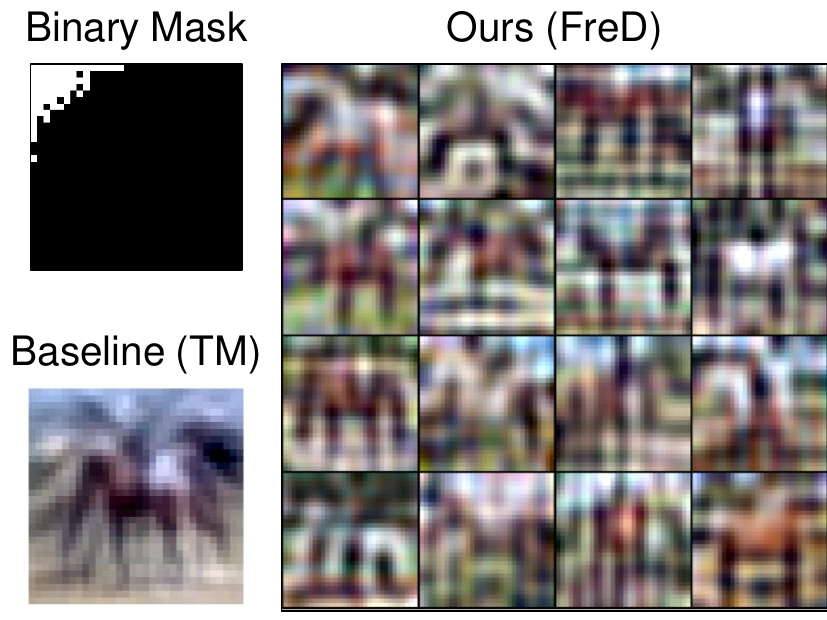}}
        \caption{Class: Horse}
        \label{experiments:fig:qual-horse}
    \end{subfigure}
    \caption{Visualization of the binary mask, condensed image from TM, and the transformed images of FreD on CIFAR-10 with IPC=1 (\#Params=30.72k). Binary masks and trained synthetic images are all the same size. The size of mask and baseline image were enlarged for a better layout.}
    \label{experiments:fig:qual}
\vspace{-0.2in}
\end{figure}
\vspace{-0.1in}
\section{Experiments}
\subsection{Experiment Setting}
\vspace{-0.05in}
We evaluate the efficacy of FreD on various benchmark datasets, i.e. SVHN \cite{netzer2011reading}, CIFAR-10, CIFAR-100 \cite{krizhevsky2009learning} and ImageNet-Subset \cite{howard2019smaller,cazenavette2022dataset,cazenavette2023generalizing}. Please refer to Appendix \ref{appendix:additional results} for additional experimental results on other datasets.
We compared FreD with both methods: 1) a method to model $S$ as an input-sized variable; and 2) a method, which parameterizes $S$, differently. As the learning with input-sized $S$, we chose baselines as DD \cite{wang2018dataset}, DSA \cite{zhao2021dataset}, DM \cite{zhao2023dataset}, CAFE+DSA \cite{wang2022cafe}, TM \cite{cazenavette2022dataset}, KIP \cite{nguyen2021dataset} and FRePo \cite{zhou2022dataset}. We also selected IDC \cite{kim2022dataset} and HaBa \cite{liu2022dataset} as baselines, which is categorized as the parameterization on $S$. we also compare FreD with core-set selection methods, such as random selection and herding \cite{welling2009herding}. We use trajectory matching objective (TM) \cite{cazenavette2022dataset} for $\mathcal{L}_{DD}$ as a default although FreD can use any dataset distillation loss. We evaluate each method by training 5 randomly initialized networks from scratch on optimized $S$. Please refer to Appendix \ref{appendix:settings} for a detailed explanation of datasets and experiment settings.

\subsection{Experimental Results}
\paragraph{Performance Comparison.} Table \ref{experiments:tab:main} presents the test accuracies of the neural network, which is trained on $S$ inferred from each method. FreD achieves the best performances in all experimental settings. Especially, when the limited budget is extreme, i.e. IPC=1 (\#Params=30.72k); FreD shows significant improvements compared to the second-best performer: 12.4\%p in SVHN and 10.6\%p in CIFAR-10. This result demonstrates that using the frequency domain, where information is concentrated in specific dimensions, has a positive effect on efficiency and performance improvement, especially in situations where the budget is very small. Please refer to Appendix \ref{appendix:additional results} for additional experimental results on other datasets.
\vspace{-0.1in}

\paragraph{Qualitative Analysis.} Figure \ref{experiments:fig:qual} visualizes the synthetic dataset by FreD on CIFAR-10 with IPC=1 (\#Params=30.72k). In this setting, we utilize 64 frequency dimensions per channel, which enables the construction of 16 images per class under the same budget. The results show that each class contains diverse data instances. Furthermore, despite of huge reduction in dimensions i.e. $64/1024=6.25\%$, each image contains class-discriminative features. We also provide the corresponding binary masks, which are constructed by EVR value. As a result, the low-frequency dimensions in the frequency domain were predominantly selected. It supports that the majority of frequency components for image construction are concentrated in the low-frequency region \cite{ahmed1974discrete,wang2000energy}. Furthermore, it should be noted that our EVR-based mask construction does not enforce keeping the low-frequency components, what EVR only enforces is keeping the components with a higher explanation ratio on the image feature. Therefore, the constructed binary masks are slightly different for each class. Please refer to Appendix \ref{appendix:more visualization} for more visualization.
\vspace{-0.1in}

\paragraph{Compatibility of Parameterization.} The parameterization method in dataset distillation should show consistent performance improvement across different distillation losses and test network architectures. Therefore, we conduct experiments by varying the dataset distillation loss and test network architectures. In the case of HaBa, conducting an experiment at IPC=1 is structurally impossible due to the existence of a hallucination network. Therefore, for a fair comparison, we basically follow HaBa's IPC setting such as IPC=2,11,51. In Table \ref{experiments:tab:compatibility}, FreD shows the highest performance improvement for all experimental combinations. Specifically, FreD achieves a substantial performance gap to the second-best performer up to 10.8\%p in training architecture and 10.6\%p in cross-architecture generalization. Furthermore, in the high-dimensional dataset cases, Table \ref{experiments:tab:imagenet} verifies that FreD consistently outperforms other parameterization methods. These results demonstrate that the frequency domain exhibits high compatibility and consistent performance improvement, regardless of its association with dataset distillation objective and test network architecture.
\begin{table}[t]
    \centering
    \caption{Test accuracies (\%) on CIFAR-10 under various dataset distillation loss and cross-architecture. "DC/DM/TM" denote the gradient/feature/trajectory matching for dataset distillation loss, respectively. We utilize AlexNet \cite{krizhevsky2017imagenet}, VGG11 \cite{simonyan2014very}, and ResNet18 \cite{he2016deep} for cross-architecture.} \label{experiments:tab:compatibility}
    \adjustbox{max width=\textwidth}{%
    \begin{tabular}{c c ccc ccc ccc}
    \toprule \toprule
    & & \multicolumn{3}{c}{DC} & \multicolumn{3}{c}{DM} & \multicolumn{3}{c}{TM} \\
    \cmidrule(lr){3-5} \cmidrule(lr){6-8} \cmidrule(lr){9-11} 
     & IPC & 2 & 11 & 51 & 2 & 11 & 51 & 2 & 11 & 51 \\
     & \#Params & 61.44k & 337.92k & 1566.72k & 61.44k & 337.92k & 1566.72k & 61.44k & 337.92k & 1566.72k \\
    \midrule
    
    \multirow{4}{*}{ConvNet} & Vanilla & 31.4 \small{$\pm0.2$} & 45.3 \small{$\pm0.3$} & 54.2 \small{$\pm0.6$} & 34.6 \small{$\pm0.5$} & 50.4 \small{$\pm0.4$} & 62.0 \small{$\pm0.3$} & 50.6 \small{$\pm1.0$} & 63.9 \small{$\pm0.3$} & 69.8 \small{$\pm0.5$} \\
     & w/ IDC & \underline{35.2} \small{$\pm0.5$} & \underline{53.8} \small{$\pm0.4$} & 56.4 \small{$\pm0.4$} & \underline{45.1} \small{$\pm0.5$} & \underline{59.3} \small{$\pm0.4$} & \underline{64.6} \small{$\pm0.3$} & 56.1 \small{$\pm0.4$} & 60.9 \small{$\pm0.4$} & 71.1 \small{$\pm0.4$} \\
     & w/ HaBa & 34.1 \small{$\pm0.5$} & 49.9 \small{$\pm0.5$} & \underline{58.9} \small{$\pm0.2$} & 37.3 \small{$\pm0.1$} & 56.8 \small{$\pm0.1$} & 64.4 \small{$\pm0.4$} & \underline{56.8} \small{$\pm0.4$} & \underline{69.5} \small{$\pm0.3$} & \underline{73.3} \small{$\pm0.2$} \\
     & \gc w/ FreD & \gc \textbf{45.3} \small{$\pm0.5$} & \gc \textbf{55.8} \small{$\pm0.4$} & \gc \textbf{59.8} \small{$\pm0.5$} & \gc \textbf{55.9} \small{$\pm0.4$} & \gc \textbf{61.3} \small{$\pm0.8$} & \gc \textbf{66.6} \small{$\pm0.6$} & \gc \textbf{61.4} \small{$\pm0.3$} & \gc \textbf{70.7} \small{$\pm0.5$} & \gc \textbf{75.5} \small{$\pm0.2$} \\
    \midrule

    \multirow{4}{*}{\makecell{Average of\\Cross-\\Architectures}} & Vanilla & 22.0 \small{$\pm0.9$} & 29.2 \small{$\pm0.9$} & 34.1 \small{$\pm0.6$} & 21.5 \small{$\pm2.2$} & 39.5 \small{$\pm1.1$} & 52.6 \small{$\pm0.7$} & 33.1 \small{$\pm1.1$} & 43.9 \small{$\pm1.4$} & 55.0 \small{$\pm1.0$} \\
     & w/ IDC & \underline{28.7} \small{$\pm1.2$} & \underline{35.4} \small{$\pm0.6$} & \underline{40.2} \small{$\pm0.7$} & \underline{37.3} \small{$\pm1.1$} & \underline{50.5} \small{$\pm0.6$} & \underline{61.3} \small{$\pm0.5$} & 42.5 \small{$\pm1.5$} & 48.7 \small{$\pm1.8$} & 61.5 \small{$\pm1.0$} \\
     & w/ HaBa & 25.4 \small{$\pm0.9$} & 31.4 \small{$\pm0.7$} & 35.5 \small{$\pm0.9$} & 30.1 \small{$\pm0.6$} & 47.0 \small{$\pm0.5$} & 60.1 \small{$\pm0.6$} & \underline{46.4} \small{$\pm1.0$} & \underline{55.8} \small{$\pm1.8$} & \underline{64.0} \small{$\pm0.9$} \\
     & \gc w/ FreD & \gc \textbf{37.3} \small{$\pm0.9$} & \gc \textbf{37.4} \small{$\pm0.7$} & \gc \textbf{42.7} \small{$\pm0.8$} & \gc \textbf{48.1} \small{$\pm0.7$} & \gc \textbf{57.3} \small{$\pm0.8$} & \gc \textbf{65.0} \small{$\pm0.7$} & \gc \textbf{49.7} \small{$\pm1.0$} & \gc \textbf{60.1} \small{$\pm0.7$} & \gc \textbf{69.1} \small{$\pm0.7$} \\
    \bottomrule \bottomrule
  \end{tabular}}
\vspace{-0.1in}
\end{table}
\begin{table}[t]
  \centering
  \begin{minipage}[t]{0.68\textwidth}
    \centering
    \caption{Test accuracies (\%) on ImageNet-Subset ($128 \times 128$) under IPC=2 (\#Params=983.04k).} \label{experiments:tab:imagenet}
    \adjustbox{max width=\textwidth}{%
      \begin{tabular}{c cccccc}
    \toprule \toprule
    Model & ImgNette & ImgWoof & ImgFruit & ImgYellow & ImgMeow & ImgSquawk \\
    \midrule
    TM & 55.2 \small{$\pm1.1$} & 30.9 \small{$\pm1.3$} & 31.8 \small{$\pm1.6$} & 49.7 \small{$\pm1.4$} & 35.3 \small{$\pm2.2$} & 43.9 \small{$\pm0.6$} \\
    w/ IDC & \underline{65.4} \small{$\pm1.2$} & \underline{37.6} \small{$\pm1.6$} & \underline{43.0} \small{$\pm1.5$} & \underline{62.4} \small{$\pm1.7$} & \underline{43.1} \small{$\pm1.2$} & \underline{55.5} \small{$\pm1.2$} \\
    w/ HaBa & 51.9 \small{$\pm1.7$} & 32.4 \small{$\pm0.7$} & 34.7 \small{$\pm1.1$} & 50.4 \small{$\pm1.6$} & 36.9 \small{$\pm0.9$} & 41.9 \small{$\pm1.4$} \\
    \gc w/ FreD & \gc \textbf{69.0} \small{$\pm0.9$} & \gc \textbf{40.0} \small{$\pm1.4$} & \gc \textbf{46.3} \small{$\pm1.2$} & \gc \textbf{66.3} \small{$\pm1.1$} & \gc \textbf{45.2} \small{$\pm1.7$} & \gc \textbf{62.0} \small{$\pm1.3$} \\
    \bottomrule \bottomrule
  \end{tabular}}
  \end{minipage}
  \hfill
  \begin{minipage}[t]{0.30\textwidth}
    \centering
    \caption{Test accuracies (\%) on 3D MNIST.} \label{experiments:tab:3D-MNIST}
    \adjustbox{max width=\textwidth}{%
      \begin{tabular}{c ccc}
        \toprule \toprule
        IPC & 1 & 10 & 50 \\
        \#Params & 40.96k & 409.6k & 2048k \\
        \midrule
        Random & 17.2 \small{$\pm0.5$} & 49.6 \small{$\pm0.7$} & 60.3 \small{$\pm0.7$} \\
        \midrule
        DM & 42.5 \small{$\pm0.9$} & \underline{58.6} \small{$\pm0.8$} & \underline{64.7} \small{$\pm0.5$} \\
          w/ IDC & \underline{51.9} \small{$\pm1.5$} & 54.0 \small{$\pm0.5$} & 56.8 \small{$\pm0.3$} \\
        \gc   w/ FreD & \gc \textbf{54.9} \small{$\pm0.5$} & \gc \textbf{62.9} \small{$\pm0.5$} & \gc \textbf{66.6} \small{$\pm0.7$} \\
        \midrule
          Entire dataset & \multicolumn{3}{c}{78.7 \small{$\pm1.1$}} \\
        \bottomrule \bottomrule
      \end{tabular}}
  \end{minipage}
\vspace{-0.1in}
\end{table}

\begin{table}[t]
  \centering
  \begin{minipage}[t]{0.26\textwidth}
    \centering
    \includegraphics[width=\textwidth]{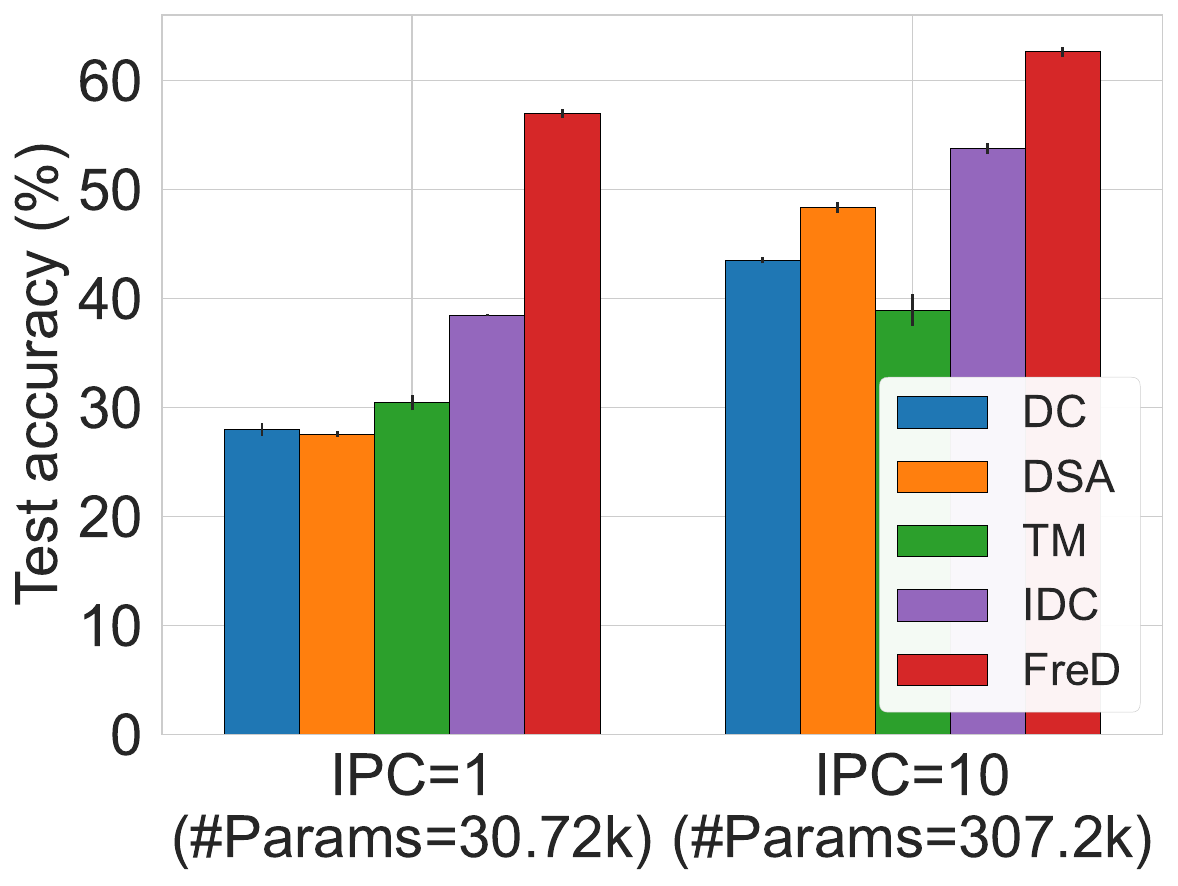}
    \captionof{figure}{Test accuracies (\%) on CIFAR-10-C.}
    \label{experiments:fig:cifar10-c}
  \end{minipage}
  \hfill
  \begin{minipage}[t]{0.73\textwidth}
    \vspace{-1.2in}
    \centering
    \caption{Test accuracies (\%) on ImageNet-Subset-C. Note that the TM on ImgSquawk is excluded because the off-the-shelf synthetic dataset is not the default size of $128\times128$.}
    \label{experiments:tab:imagenet-c}
    \adjustbox{max width=\textwidth}{%
      \begin{tabular}{cc cccccc}
        \toprule \toprule
        \#Params & Model & ImgNette-C & ImgWoof-C & ImgFruit-C & ImgYellow-C & ImgMeow-C & ImgSquawk-C \\
        \midrule
        \multirow{3}{*}{\makecell{491520\\(IPC=1)}} & TM & \underline{38.0} \small{$\pm1.6$} & \underline{23.8} \small{$\pm1.0$} & 22.7 \small{$\pm1.1$} & 35.6 \small{$\pm1.7$} & \underline{23.3} \small{$\pm1.1$} & - \\
                                                             & w/ IDC & 34.5 \small{$\pm0.6$} & 18.7 \small{$\pm0.4$} & \underline{28.5} \small{$\pm0.9$} & \underline{36.8} \small{$\pm1.4$} & 22.2 \small{$\pm1.2$} & \underline{26.8} \small{$\pm0.5$} \\
                                                             & \gc w/ FreD & \gc \textbf{51.2} \small{$\pm 0.6$} & \gc \textbf{31.0} \small{$\pm0.9$} & \gc \textbf{32.3} \small{$\pm1.4$} & \gc \textbf{48.2} \small{$\pm1.0$} & \gc \textbf{30.3} \small{$\pm0.3$} & \gc \textbf{45.9} \small{$\pm0.6$}\\
        \midrule
        \multirow{3}{*}{\makecell{4915200\\(IPC=10)}} & TM     & \underline{50.9} \small{$\pm0.7$} & \underline{30.9} \small{$\pm0.7$} & \underline{32.3} \small{$\pm0.8$} & \underline{45.6} \small{$\pm1.0$} & \underline{30.1} \small{$\pm0.5$} & \underline{44.4} \small{$\pm1.8$} \\
                                                               & w/ IDC & 40.4 \small{$\pm1.0$} & 21.9 \small{$\pm0.3$} & 32.2 \small{$\pm0.7$} & 39.6 \small{$\pm0.5$} & 23.9 \small{$\pm0.8$} & 40.5 \small{$\pm0.7$} \\
                                                               & \gc w/ FreD & \gc \textbf{55.2} \small{$\pm0.8$} & \gc \textbf{33.8} \small{$\pm0.8$} & \gc \textbf{35.7} \small{$\pm0.6$} & \gc \textbf{47.9} \small{$\pm0.4$} & \gc \textbf{31.3} \small{$\pm0.9$} & \gc \textbf{52.5} \small{$\pm 0.8$}\\
        \bottomrule \bottomrule
    \end{tabular}}
  \end{minipage}
\vspace{-0.3in}
\end{table}

\paragraph{3D Point Cloud Dataset.} As the spatial dimension of the data increases, the required dimension budget for each instance also grows exponentially. To validate the efficay of FreD on data with dimensions higher than 2D, we assess FreD on 3D point cloud data, 3D MNIST.\footnote{\url{https://www.kaggle.com/datasets/daavoo/3d-mnist}} Table \ref{experiments:tab:3D-MNIST} shows the test accuracies on the 3D MNIST dataset. FreD consistently achieves significant performance improvement over the baseline methods. This confirms the effectiveness of FreD in 2D image domain as well as 3D point cloud domain.
\vspace{-0.1in}

\paragraph{Robustness against Corruption.} Toward exploring the application ability of dataset distillation, we shed light on the robustness against the corruption of a trained synthetic dataset. We utilize the following test datasets: CIFAR-10.1 and CIFAR-10-C for CIFAR-10, ImageNet-Subset-C for ImagNet-Subset. For CIFAR-10.1 and CIFAR-10-C experiments, we utilize the off-the-shelf synthetic datasets which are released by the authors of each paper. We report the average test accuracies across 15 types of corruption and 5 severity levels for CIFAR-10-C and ImageNet-Subset-C. 

Figure \ref{experiments:fig:cifar10-c} and Table \ref{experiments:tab:imagenet-c} show the results of robustness on CIFAR-10-C and ImageNet-Subset-C, respectively. From both results, FreD shows the best performance over the whole setting which demonstrates the superior robustness against corruption. We want to note that IDC performs worse than the baseline in many ImageNet-Subset-C experiments (see Table \ref{experiments:tab:imagenet-c}) although it shows performance improvement on the ImageNet-Subset (see Table \ref{experiments:tab:imagenet}). On the other hand, FreD consistently shows significant performance improvement regardless of whether the test dataset is corrupted. It suggests that the frequency domain-based parameterization method shows higher domain generalization ability than the spatial domain-based parameterization method. Please refer to Appendix \ref{appendix:corruption} for the results of CIFAR-10.1 and detailed results based on corruption types of CIFAR-10-C.

To explain the rationale, corruptions that diminish the predictive ability of a machine learning model often occur at the high-frequency components. Adversarial attacks and texture-based corruptions are representative examples \cite{long2022frequency, yang2020patchattack}. Unlike FreD, which can selectively store information about an image's frequency distribution, transforms such as factorization or upsampling are well-known for not preserving frequency-based information well. Consequently, previous methods are likely to suffer a decline in predictive ability on datasets that retain class information while adding adversarial noise. In contrast, FreD demonstrates relatively good robustness against distribution shifts by successfully storing the core frequency components that significantly influence class recognition, regardless of the perturbations applied to individual data instances.
\vspace{-0.1in}

\paragraph{Collaboration with Other Parameterization.} 
\begin{wraptable}{r}{0.4\textwidth}
    \centering
    \caption{Test accuracies (\%) of each collaboration on CIFAR-10.} \label{experiments:tab:collaboration}
    \resizebox{0.35\textwidth}{!}{
    \begin{tabular}{c cc}
        \toprule \toprule
        IPC & 2 & 11 \\
        \#Params & 61.44k & 337.92k \\
        \midrule
        TM & 50.6 \small{$\pm1.0$} & 63.9 \small{$\pm0.3$}  \\
          w/ HaBa & 56.8 \small{$\pm0.4$} & 69.5 \small{$\pm0.3$} \\
          w/ IDC \& HaBa & \underline{61.3} \small{$\pm0.3$} & \underline{70.9} \small{$\pm0.4$} \\
        \gc w/ FreD \& HaBa & \gc \textbf{62.3} \small{$\pm0.1$} & \gc \textbf{72.9} \small{$\pm0.2$} \\
        \bottomrule \bottomrule
      \end{tabular}}
\end{wraptable}
The existing method either performs resolution resizing in the spatial domain or uses a neural network to change the dimension requirement of the spatial domain. On the other hand, FreD optimizes the coefficient of the frequency domain dimension and transforms it into the spatial domain through the inverse frequency transform. Therefore, FreD can be applied orthogonally to the existing spatial domain-based parameterization methods. Table \ref{experiments:tab:collaboration} shows the performance of different parameterizations applied to HaBa. From the results, we observed that FreD further enhances the performance of HaBa. Furthermore, it is noteworthy that the performance of HaBa integrated with FreD is higher than the combination of IDC and HaBa. These results imply that FreD can be well-integrated with spatial domain-based parameterization methods. 

\subsection{Ablation Studies} \label{exp:ablation}
\begin{wrapfigure}{r}{0.6\textwidth}
    \begin{subfigure}{0.495\linewidth}
        \centering
        \includegraphics[width=\textwidth]{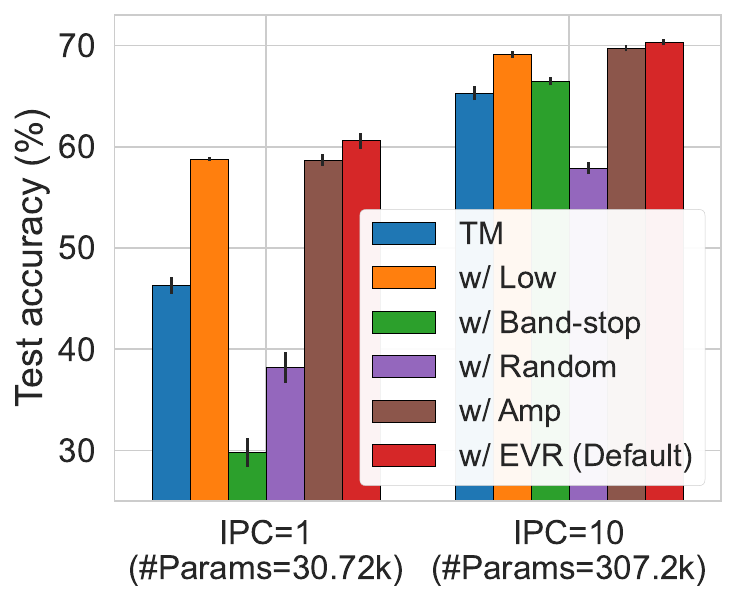}
        \caption{Binary mask $M$} \label{experiments:fig:ablationM}
    \end{subfigure}
    \begin{subfigure}{0.495\linewidth}
        \centering
        \includegraphics[width=\textwidth]{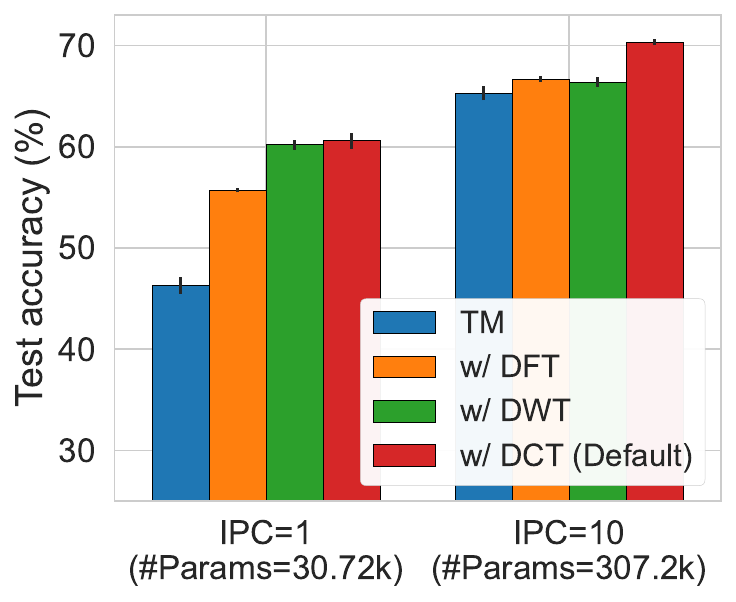}
        \caption{Frequency transform $\mathcal{F}$} \label{experiments:fig:ablationF}
    \end{subfigure}
\caption{Ablation studies on (a) the binary mask $M$, and (b) the frequency transform $\mathcal{F}$.} \label{experiments:fig:ablation}
\vspace{-0.1in}
\end{wrapfigure}
\paragraph{Effectiveness of Binary Mask $M$.} We conducted a comparison experiment to validate the explained variance ratio as a criterion for the selection of frequency dimensions. We selected the baselines for the ablation study as follows: Low-pass, Band-stop, High-pass, Random, and the magnitude of the amplitude in the frequency domain. We fixed the total budget and made $k$ the same. Figure \ref{experiments:fig:ablationM} illustrates the ablation study on different variations of criterion for constructing $M$. We skip the high-pass mask because of its low performance: 14.32\% in IPC=1 (\#Params=30.72k) and 17.11\% in IPC=10 (\#Params=307.2k). While Low-pass and Amplitude-based dimension selection also improves the performance of the baseline, EVR-based dimension selection consistently achieves the best performance.
\vspace{-0.1in}

\paragraph{Effectiveness of Frequency Transform $\mathcal{F}$.} We also conducted an ablation study on the frequency transform. Note that the FreD does not impose any constraints on the utilization of frequency transform. Therefore, we compared the performance of FreD when applying widely used frequency transforms such as the Discrete Cosine Transform (DCT), Discrete Fourier Transform (DFT), and Discrete Wavelet Transform (DWT). For DWT, we utilize the Haar wavelet function and low-pass filter instead of an EVR mask. As shown in Figure \ref{experiments:fig:ablationF}, we observe a significant performance improvement regardless of the frequency transform. Especially, DCT shows the highest performance improvement than other frequency transforms. Please refer to Appendix \ref{appendix:additional ablation FT} for additional experiments and detailed analysis of the ablation study on frequency transform.
\vspace{-0.1in}

\paragraph{Budget Allocation.} 
\begin{wrapfigure}{r}{0.4\textwidth}
    \centering
    \includegraphics[width=\linewidth]{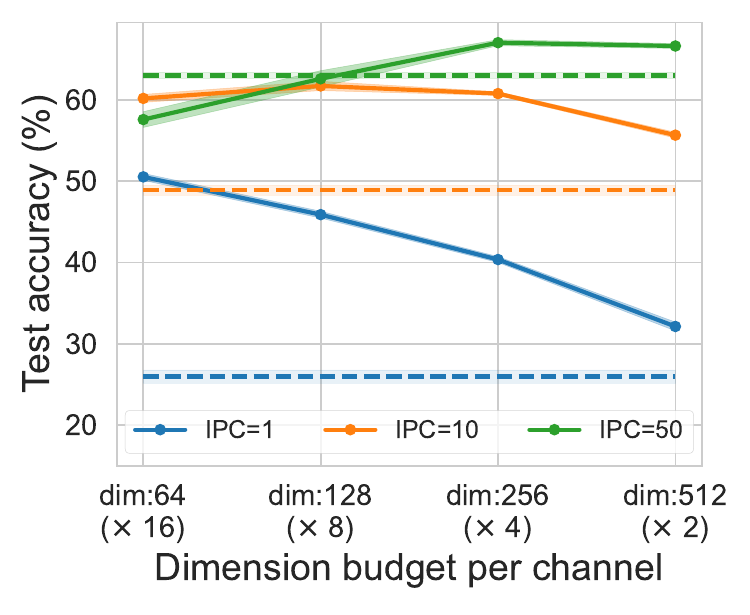}
    \caption{Ablation study on budget allocation of FreD with DM. Dashed line indicates the performance of DM.}
    \label{experiments:fig:budget allocation}
\end{wrapfigure}
Dataset distillation aims to include as much information from the original dataset as possible on a limited budget. FreD can increase the number of data $\lvert F \rvert$ by controlling the dimension budget per instance $k$, and FreD stores the frequency coefficients selected by EVR as $k$-dimensional vector. For example, with a small value of $k$, more data can be stored i.e. large $\lvert F \rvert$. This is a way to increase the quantity of instances while decreasing the quality of variance and reconstruction. By utilizing this flexible trade-off, we can pick the balanced point between quantity and quality to further increase the efficiency of our limited budget. Figure \ref{experiments:fig:budget allocation} shows the performance of selecting the dimension budget per channel under different budget situations. Note that, for smaller budgets i.e. IPC=1 (\#Params=30.72k), increasing $\lvert F \rvert$ performs better. For larger budget cases, such as IPC=50 (\#Params=1536k), allocating more dimensions to each instance performs better i.e. large $k$. This result shows that there is a trade-off between the quantity and the quality of data instances depending on the budget size.

\section{Conclusion}
This paper proposes a new parameterization methodology, FreD, that utilizes the augmented frequency domain. FreD selectively utilizes a set of dimensions with a high variance ratio in the frequency domain, and FreD only optimizes the frequency representations of the corresponding dimensions in the junction with the frequency transform. Based on the various experiments conducted on benchmark datasets, the results demonstrate the efficacy of utilizing the frequency domain in dataset distillation. Please refer to Appendix \ref{appendix:limitation} for the limitation of the frequency domain-based dataset distillation.

\section*{Acknowledgement}
This work was supported by the National Research Foundation of Korea (NRF) grant funded by the Korea government (MSIT) (No.2021R1A2C200981613). ※ MSIT: Ministry of Science and ICT

\hypersetup{urlcolor = black}

\hypersetup{urlcolor = magenta}

\clearpage
\begin{appendices}

\section{Literature Reviews on Related Works}
\subsection{Dataset Distillation} \label{appendix:literature reviews}
In this section, we briefly review the methodology of constructing $S$ as an input-sized vector and provide a detailed review of our main comparative methods, HaBa \cite{liu2022dataset}, IDC \cite{kim2022dataset} and GLaD \cite{cazenavette2023generalizing}.
\paragraph{Input-sized Parameterization.} Dataset Distillation (DD) \cite{wang2018dataset} aims at finding the synthetic dataset $S$ with a bi-level optimization. The main idea of bi-level optimization is that the network parameter $\theta_{S}$, which is trained on $S$, minimizes the (population) risk of the original dataset $D$. Dataset Condensation (DC) \cite{zhao2020dataset} introduces a proxy objective, which aims at matching the layer-wise gradients of a network over the optimization path of $S$. Differentiable Siamese Augmentation (DSA) \cite{zhao2021dataset} applies the differentiable and identical data augmentation to original data instances and synthetic data instances at each training step. Contrary to gradient matching i.e. short-range trajectory matching \cite{zhao2020dataset}, Trajectory Matching (TM) aims at transferring the knowledge of long-range trajectory from pre-trained with the original dataset. It minimizes the difference between the training trajectory on synthetic data and the training trajectory on real data. Distribution Matching (DM) \cite{zhao2023dataset} points out the computation cost of precedents. Therefore, the authors propose a new objective that aims at aligning the feature distributions of both the original dataset and the synthetic dataset within sampled embedding spaces. CAFE \cite{wang2022cafe} extends the DM by layer-wise feature matching. Kernel Inducing Point (KIP) \cite{nguyen2021dataset} introduces a kernel-based objective that leverages infinite-width neural networks. It optimizes to let condensed datasets be kernel inducing points in kernel ridge regression. FRePo \cite{zhou2022dataset} points out the meta-gradient computation and overfitting in dataset distillation. FRePo overcomes these challenges by utilizing the truncated backpropagation through time and model pool.
\paragraph{HaBa \cite{liu2022dataset}.} HaBa proposed a technique for dataset factorization, which involves breaking the synthetic dataset into bases and hallucinator networks. The hallucinator takes bases as input and generates image instances. By learning bases and hallucinators, the resulting model could produce more diverse samples based on the available budget. However, incorporating an additional network in distillation requires a separate budget, which is distinct from the data instances. For example, HaBa does not perform when 1 image per class setting, although using light-weight hallucinators. Furthermore, joint learning of both the network and data usually leads to instability in the training.
\paragraph{IDC \cite{kim2022dataset}.} IDC configures the synthetic dataset as several reduced-size of spatial images and utilizes the differentiable multi-formation function to restore to the original size. The usual choice of multi-formation function is an upsampling module, which does not require training. Therefore, efficient parameterization enables the increasing the available number of data instances. However, the compression process still takes place in the spatial domain, leading to the loss of information and inefficient utilization of the budget. Additionally, \cite{durall2020watch} empirically showed that upsampling methods cause distortion in the spectral distribution of natural images. 
\paragraph{GLaD \cite{cazenavette2023generalizing}.} GLaD employs a pre-trained generative model and distills the synthetic dataset in the latent space of the generative model, such as Generative Adversarial Networks (GAN). By leveraging the generative model, GLaD encourages better generalization to unseen architecture and scale to high-dimensional datasets. However, generative models typically require a large number of parameters, which introduces several inefficiencies. Firstly, storing a generative model with many parameters is burdensome in dataset distillation which is budget-constrained. Due to the budget constraint, GLaD proposes to use a generative model for training, and at the end of training, create a distilled instance by combining the distilled latent code and generative model to create a distilled instance and store it in the budget. This eliminates the need to allocate a budget for the generative model, but the amount of budget occupied by one distilled instance is the same as the input-sized parameterization method. As a result, it creates the same number of instances as the input-sized parameterization method, which makes the synthetic dataset insufficiently expressive. There is also a computational inefficiency because it takes more time to move forward and backward due to the large number of parameters. Finally, Frequency transform is dataset agnostic, while the deep generative model needs to apply a suitable structure to the dataset. This has the inefficiency of selecting the appropriate structure based on the dataset.

\subsection{Frequency Transform} \label{appendix:survery_Freq}
\paragraph{Additional Review of Frequency Transform.} As mentioned in the main paper, the form of the frequency transform depends on the selection of the basis function $\phi(a,b,u,v)$ (see Eq. \eqref{eq:ft} in the main paper). Discrete Cosine Transform (DCT) uses the cosine function as the basis function i.e. $\phi(a,b,u,v)=cos\bigl(\frac{\pi}{d_1}(a+\frac{1}{2}u)\bigr)cos\bigl(\frac{\pi}{d_2}(b+\frac{1}{2}v)\bigr)$. Discrete Fourier Transform (DFT) utilizes the exponential function as the basis function i.e. $\phi(a,b,u,v)=e^{-i2\pi(\frac{ua}{d_1}+\frac{vb}{d_2})}$. Discrete Wavelet Transform (DWT) employs the wavelet function, such as the Haar wavelet function or the Daubechies wavelet. In the case of images with multiple color channels, both frequency transform and inverse frequency transform can be independently applied to each channel. There are various research areas in machine learning, which use the property of frequency domain. In the following paragraphs, we review research conducted in the direction of utilizing the property of frequency domain such as adversarial attacks and analyze the neural network.
\paragraph{Adversarial Attack.} Recently, in adversarial attack areas, there has been a discussion suggesting that attacks in the frequency domain exhibit higher effectiveness compared to attacks based in the spatial domain \cite{guo2018low,sharma2019effectiveness}. In \cite{guo2018low}, the authors propose a method to constrain the search space of adversarial attacks to the low-frequency domain. This method consistently reduces the black-box attack's query cost. Furthermore, the authors of \cite{sharma2019effectiveness} show empirical evidence of the effectiveness of the low-frequency attack.
\paragraph{Analyzing Neural Network.} There are a bunch of studies that analyze neural networks in terms of frequency transforms. Spectral bias in deep neural networks \cite{spectralbias,spectralbias2} is a well-known problem in machine learning, which describes the tendency of the network to prefer specific frequency components over other components while training. The presence of spectral bias in a deep neural network can have a significant impact on its ability to generalize to new data instances by restricting its ability to capture crucial patterns or features for a given task \cite{spectralbias2,spectralbias3}. To prevent such biased training, \cite{frequencyinput1,frequencyinput2} designed a network and the corresponding loss function that takes transformed values in the frequency domain as input. To prevent spectral bias during the training, \cite{frequencylearning1,frequencylearning2} introduced frequency-based regularization techniques, while \cite{frequencyaugment1} proposed augmentation methods based on the frequency domain.

\section{Proofs of Theoretical Evidences} \label{appendix:proofs}
\subsection{Proof of Proposition 1}
\proposition*
\begin{proof}
    Mathematically, let $X$ be the $d$-dimensional dataset with $n$ samples in the domain $A$, and let $X_B$ be the transformed dataset in the domain $B$. Let $S \subseteq \{1, 2, ..., d\}$ be the subset of dimensions for which we want to calculate the sum of explained variance ratio. Then, the sum of explained variance ratio for $S$ in the domain $A$ is given by:
    \begin{equation}
    R_A(S) = \frac{\sum_{i \in S}\lambda_i}{\sum_{i=1}^d\lambda_i}
    \end{equation}
    where $\lambda_i$ is the eigenvalues of $i$-th dimension of the covariance matrix of $X$. As noted in the assumption, bijective function $W$ exists to transform the $X$ into $X_B$, i.e. $X_B = WX$. We can write the covariance matrix of $X_B$ as:
    \begin{equation}
    \Sigma_B = \frac{1}{n}X_B X_B^T = \frac{1}{n}(WX)(WX)^T = W(\frac{1}{n}XX^T)W^T = W\Sigma_A W^T
    \end{equation}
    where $\Sigma_A$ is the covariance matrix of $X$ in the domain $A$. Having said that, let $S_B = \{j \,|\, j = W(i), i \in S\}$ be the corresponding subset of dimensions in the domain $B$. It should be noted that each element in $S_B$ do not have to be one-hot dimension. Also, in a linear bijective transformation, orthogonality in the original space is preserved in the transformed space. Then, the sum of explained variance ratios for the dimension subset, $S_B$, in the domain $B$ is given as follows:
    \begin{equation}
    R_B(S_B) = \frac{\sum_{j^{'} \in S_B}\lambda_j^{'}}{\sum_{j=1}^d\lambda_j}
    \end{equation}
    where $\lambda_j$ is an eigenvalue of $j$-th dimension of the covariance matrix of $X_B$. Now, we can show that $R_A(S) = R_B(S_B)$ as follows:
    \begin{equation}
    R_B(S_B) = \frac{\sum_{j \in S_B}\lambda_j}{\sum_{j=1}^d\lambda_j} = \frac{\sum_{j \in S_B}\lambda_{W(i)}}{\sum_{j=1}^d\lambda_{W(i)}} = \frac{\sum_{i \in S}\lambda_i}{\sum_{i=1}^d\lambda_i} = R_A(S)
    \end{equation}
    where we used the fact that the eigenvalues of the covariance matrix are the same for $X$ and $X_B$ (i.e., $\lambda_i = \lambda_{W(j)}$ for all $i,j$), and the fact that the sum of eigenvalues is invariant under bijective linear transformation.
\end{proof}
Therefore, we have shown that the sum of the explained variance ratio for a subset of dimensions in the domain $A$ is the same as the explained variance ratio sum for the domain $B$ when transforming the domain $A$ dataset to the domain $B$ using only that subset of dimensions. We re-arrange the claim as follows: The sum of explained variance ratios of a masked dataset for a specific dimension subset remains preserved even under linearly bijective transformations between domains.

\subsection{Proof of Corollary 1}
\corollary*
\begin{proof}
    In Proposition 1, we proved that the sum of explained variance ratios of a masked dataset for a specific dimension subset remains preserved even under linearly bijective transformations between domains. Having said that, $W_{V^{*}_{C,k}}X_{C}$ is a transformed dataset of $X_{C}$ from domain $C$ to domain $A$, where only the top-$k$ dimensions that maximize the sum of explained variance ratios are utilized. Proposition 1 states that this transformation preserves the sum of explained variance ratios, and since $\eta^{*}_{B,k} \geq \eta^{*}_{C,k}$, the sum of explained variance ratios is preserved even in terms of the relative magnitude between the explained variance ratios sum of the transformed datasets.
\end{proof}

\section{Experimental Details} \label{appendix:settings}
\subsection{Dataset}
In this paper, we evaluate FreD on a variety of benchmark datasets, including those widely used in dataset distillation.
\begin{itemize}
  \item MNIST \cite{lecun1998gradient} is a handwritten digit image dataset with 60,000 images for training and 10,000 images for testing. Each image is a $28\times28$ gray-scale image and is categorized into 10 classes (digits from 0 to 9). 
  \item Fashion MNIST \cite{xiao2017/online} contains various fashion items images such as clothing and shoe. It consists of a training set of 60,000 grayscale images and a test set of 10,000 images. Each image has a $28\times28$ size. Fashion MNIST has 10 classes in total.
  \item SVHN \cite{netzer2011reading} is a real-world digit image dataset with 73,257 images for training and 26,032 images for testing. Each image in the dataset is a $32\times32$ RGB image and belongs to one of 10 classes ranging from 0 to 9. 
  \item CIFAR-10 \cite{krizhevsky2009learning} consists of $32\times32$ RGB images with 50,000 images for training and 10,000 images for testing. It has 10 classes in total and each class contains 5,000 images for training and 1,000 images for testing.
  \item CIFAR-100 \cite{krizhevsky2009learning} comprises a total of 60,000 $32\times32$ RGB images distributed across 100 classes. Within each class, 500 images are allocated for training, while 100 images are for testing. These 100 classes are further grouped into 20 superclasses, with each superclass consisting of 5 classes at a more specific level.
  \item 3D MNIST (\url{https://www.kaggle.com/datasets/daavoo/3d-mnist}) consists of 10,000 training data and 1,000 test data, and each data has $1\times16\times16\times16$ size. Each data instance is categorized into 10 classes. 
  \item Tiny-ImageNet \cite{le2015tiny} is a downsampled subset of ImageNet \cite{deng2009imagenet} to a size of $64\times64$. This dataset consists of 200 classes and each class contains 500 images for training and 100 images for testing.
  \item ImageNet-Subset is a dataset consisting of a subset of similar features in the ImageNet \cite{deng2009imagenet}. By following the previous work, we consider diverse types of subsets: ImageNette (various objects)\cite{howard2019smaller}, ImageWoof (dog breeds)\cite{howard2019smaller}, ImageFruit (fruits) \cite{cazenavette2022dataset}, ImageMeow (cats) \cite{cazenavette2022dataset}, ImageSquawk (birds) \cite{cazenavette2022dataset}, ImageYellow (yellowish things) \cite{cazenavette2022dataset}, and ImageNet-[A, B, C, D, E] (based on ResNet50 performance) \cite{cazenavette2023generalizing}. Each subset has 10 classes. We consider two types of resolution: $128\times128$ and $256\times256$.
  \item LSUN \cite{yu2015lsun} aims at understanding the large-scale scene images. The original LSUN dataset has 10 classes and each class contains a large number of images, ranging from 120k to 3,000k for training. We consider two datasets, coined as LSUN-10k/LSUN-25k, which randomly sampled 10k/25k instances per class which resulted in a total 100k/250k instances, respectively. We also downsize each instance to a $128 \times 128$ size. 
  \item CIFAR-10.1 \cite{recht2018cifar} consists of 2,000 new test images which have same classes as CIFAR-10. 
  \item CIFAR-10-C and ImageNet-C \cite{hendrycks2019benchmarking} aim at measuring the robustness of object recognition based on CIFAR-10 and ImageNet, respectively. They have 15 types of corruption and each corruption has five levels with level 5 indicating the most severest. We create ImageNet-Subset-C by selecting data from ImageNet-C that matches the ImageNet-Subset classes.
\end{itemize}

\subsection{Architecture}
For 2D image datasets, we basically employ an $n$-depth convolutional neural network, coined ConvNetD$n$, by following the previous works. The ConvNetD$n$ has $n$ duplicate blocks, which consist of a convolution layer with $3\times3$-shape 128 filters, an instance normalization layer \cite{ulyanov2016instance}, ReLU, and an average pooling with $2\times2$ kernel size with stride 2. After the convolution blocks, a linear classifier outputs the logits. We utilize a different number of blocks depending on the resolution: ConvNetD3 for $28\times28$ and $32\times32$, ConvNetD4 for $64\times64$, ConvNetD5 for $128\times128$ and ConvNetD6 for $256\times256$. For the performance comparison for different test network architectures, we also follow the precedent: ResNet \cite{he2016deep}, VGG \cite{simonyan2014very}, AlexNet \cite{krizhevsky2017imagenet}, and ViT \cite{dosovitskiy2010image}.

For the 3D point cloud dataset; 3D MNIST, we implement a 3D version of ConvNet, coined Conv3DNet. Similarly, Conv3DNet has three duplicate blocks; a convolution layer with $3\times3\times3$-shape 64 filters, a 3D instance normalization, ReLU, and a 3D average pooling with $2\times2\times2$ with stride 2. A linear layer follows these convolution blocks.

\subsection{Implementation Configurations} \label{appendix:implementation}
We use trajectory matching objective (TM) \cite{cazenavette2022dataset} for $\mathcal{L}_{DD}$ as a default although FreD can use any dataset distillation loss. Similarly, we utilize Discrete Cosine Transform (DCT) as a default frequency transform $\mathcal{F}$. For the implementation of frequency transform, we utilize the open-source PyTorch library; \textbf{torch-dct} (\url{https://github.com/zh217/torch-dct}) for DCT and \textbf{pytorch\_wavelets} (\url{https://github.com/fbcotter/pytorch\_wavelets}) for Discrete Wavelet Transform (DWT). We utilize the built-in function of PyTorch for the Discrete Fourier Transform (DFT). We separately apply the frequency transform to each channel for RGB image datasets. We use an SGD optimizer with a momentum rate of 0.5 for all our experiments. Each experiment is trained with 15,000 iterations. Contrary to previous research \cite{cazenavette2022dataset,liu2022dataset}, FreD does not use the ZCA Whitening. We used four RTX 3090 GPUs by default and two Tesla A100 GPUs for CIFAR-100, Tiny-ImageNet, and ImageNet-Subset. We basically follow the evaluation protocol of the previous works \cite{zhao2020dataset,zhao2023dataset,cazenavette2022dataset}. We evaluate each method by training 5 randomly initialized networks from scratch on optimized $S$. We provide the detailed hyper-parameters in Table \ref{appendix:tab:hyperparams} (see the end of Appendix).

\section{Additional Experimental Results} \label{appendix:additional results}
\subsection{Performance Comparison on Low-dimensional Datasets}
We evaluate our proposed method on low-dimensional datasets ($\leq 64 \times 64$ resolution) such as MNIST, Fashion MNIST, and Tiny-ImageNet. Table \ref{appendix:tab:low-dimensional dataset} shows that FreD achieves improved or competitive performances in most experimental settings. These results repeatedly support our conjecture: the utilization of the frequency domain yields beneficial outcomes in terms of enhancing performance.

FreD's motivation lies in leveraging select important dimensions of the frequency domain, which can contain much of the spatial domain's information, to utilize the given memory budget more efficiently. This efficiency manifests greater utility when the available memory budget is more limited. Through extensive experiments results, FreD demonstrates more substantial performance improvement in most experiments with an IPC=1 setting. TinyImageNet is originally a dataset with 500 instances per class, and the IPC=50 setting for this dataset could be considered a not-so-drastic reduction. In situations where such a significant reduction doesn't occur, FreD's motivation may be weakened. Excluding this particular setting, FreD consistently demonstrates performance improvement compared to the baseline across evaluations.
\begin{table}[h!] 
    \vspace{-0.15in}
    \centering
    \caption{Test accuracies (\%) on MNIST, Fashion MNIST, and Tiny-ImageNet. The best results and the second-best result are highlighted in \textbf{bold} and \underline{underline}, respectively. Note that IDC does not provide the standard deviation on MNIST and Fashion MNIST experiments in the original paper.} \label{appendix:tab:low-dimensional dataset}
    \adjustbox{max width=\textwidth}{%
    \begin{tabular}{c c cc cc ccc}
    \toprule \toprule
     & & \multicolumn{2}{c}{MNIST} & \multicolumn{2}{c}{Fashion MNIST} & \multicolumn{3}{c}{Tiny-ImageNet} \\
    \cmidrule(lr){3-4} \cmidrule(lr){5-6} \cmidrule(lr){7-9} 
     & IPC & 1 & 10 & 1 & 10 & 1 & 10 & 50 \\
     & \#Params & 7.84k & 78.4k & 7.84k & 78.4k & 2457.6k & 24576k & 122880k \\
    \midrule
    \multirow{2}{*}{Coreset} & Random & 64.9 \small{$\pm3.5$} & 95.1\small{$\pm0.9$} & 51.4 \small{$\pm3.8$} & 73.8 \small{$\pm0.7$} & 1.4 \small{$\pm0.1$} & 5.0 \small{$\pm0.2$} & 15.0 \small{$\pm0.4$} \\
                             & Herding & 89.2 \small{$\pm1.6$} & 93.7 \small{$\pm0.3$} & 67.0 \small{$\pm1.9$} & 71.1 \small{$\pm0.7$} & 2.8 \small{$\pm0.2$} & 6.3 \small{$\pm0.2$} & 16.7 \small{$\pm0.3$} \\
    \midrule
    \multirow{7}{*}{\makecell{Input-sized\\parameterization}} & DC & 91.7 \small{$\pm0.5$} & 97.4 \small{$\pm0.2$} & 70.5 \small{$\pm0.6$} & 82.3 \small{$\pm0.4$} & - & - & - \\
     & DSA & 88.7 \small{$\pm0.6$} & 97.8 \small{$\pm0.1$} & 70.6 \small{$\pm0.6$} & 84.6 \small{$\pm0.3$} & - & - & - \\
     & DM & 89.7 \small{$\pm0.6$} & 97.5 \small{$\pm0.1$} & - & - & 3.9 \small{$\pm0.2$} & 12.9 \small{$\pm0.4$} & 24.1 \small{$\pm0.2$} \\
     & CAFE+DSA & 90.8 \small{$\pm0.5$} & 97.5 \small{$\pm0.1$} & 73.7 \small{$\pm0.7$} & 83.0 \small{$\pm0.3$} & - & - & - \\
     & TM & 88.7 \small{$\pm1.0$} & 96.6 \small{$\pm0.4$} & 75.7 \small{$\pm1.5$} & \underline{88.4} \small{$\pm0.4$} & 8.8 \small{$\pm0.3$} & 23.2 \small{$\pm0.2$} & \textbf{28.0} \small{$\pm0.2$} \\
     & KIP & 90.1 \small{$\pm0.1$} & 87.5 \small{$\pm0.0$} & 73.5 \small{$\pm0.5$} & 86.8 \small{$\pm0.1$} & - & - & - \\
     & FRePo & 93.0 \small{$\pm0.4$} & \textbf{98.6} \small{$\pm0.1$} & 75.6 \small{$\pm0.3$} & 86.2 \small{$\pm0.2$} & \underline{15.4} \small{$\pm0.3$} & \textbf{25.4} \small{$\pm0.2$} & - \\
    \midrule
    \multirow{3}{*}{Parameterization} & IDC & \underline{94.2} & \underline{98.4} & \underline{81.0} & 86.0 & - & - & - \\
     & HaBa & 92.4 \small{$\pm0.4$} & 97.4 \small{$\pm0.2$} & 80.9 \small{$\pm0.7$} & \underline{88.6} \small{$\pm0.2$} & - & - & - \\
     & \gc FreD & \gc \textbf{95.8} \small{$\pm0.2$} & \gc 97.6 \small{$\pm0.8$} & \gc \textbf{84.6} \small{$\pm0.2$} & \gc \textbf{89.1} \small{$\pm0.2$} & \gc \textbf{19.2} \small{$\pm0.4$} & \gc \underline{24.2} \small{$\pm0.4$} & \gc \underline{26.4} \small{$\pm0.4$} \\
    \midrule
    \multicolumn{2}{c}{Entire original dataset} & \multicolumn{2}{c}{99.6 \small{$\pm0.0$}} & \multicolumn{2}{c}{93.5  \small{$\pm0.1$}} & \multicolumn{3}{c}{37.6 \small{$\pm0.4$}} \\
    \bottomrule \bottomrule
    \end{tabular}}
\end{table}

\subsection{Performance Comparison on High-dimensional Datasets}
We further evaluate our proposed method on high-dimensional datasets ($\geq 128 \times 128$ resolution). Table \ref{appendix:tab:imagenet128_1} and \ref{appendix:tab:imagenet128_2} present the results of extensive experiments on $128\times 128$ resolution ImageNet-Subset. As in the case of low-dimensional datasets, FreD consistently achieves the highest performance improvement among the parameterization methods in most experimental settings. Since the performance of FreD at IPC=10 (\#Params=4915.2k) already overwhelms the performance of HaBa of IPC=11 (\#Params=5406.72k), we did not conduct the experiment of FreD on IPC=11 (\#Params=5406.72k). Furthermore, in Table \ref{appendix:tab:imagenet256}, FreD repeatedly shows better performance on $256\times 256$ resolution ImageNet-Subset.

It should be noted that FreD significantly improves the performance of cross-architecture generalization. For instance, GLaD also improves cross-architecture performance, but it shows the performance degradation in the architecture used for training when the utilized dataset distillation loss is TM. On the other hand, FreD shows the best performance in all experiments. It means that FreD provides insight into how well the frequency domain-based parameterization method understands the task, rather than overfitting to a particular architecture.

In summary, these extensive experimental results continuously demonstrate the efficacy of utilizing the frequency domain in dataset distillation regardless of the image's resolution.
\begin{table}[h!]
    \vspace{-0.1in}
    \centering
    \caption{Test accuracies (\%) on ImageNet-Subset (Image-[Nette, Woof, Fruit, Yellow, Meow, Squawk], $128 \times 128$). Note that HaBa is structurally disabled to experiment in IPC=1 (\#Params=491.52k) due to the nature of its methodology.} \label{appendix:tab:imagenet128_1}
    \adjustbox{max width=\textwidth}{%
    \begin{tabular}{cc cccccc}
    \toprule \toprule
    \#Params & Model & ImageNette & ImageWoof & ImageFruit & ImageYellow & ImageMeow & ImageSquawk \\
    \midrule
    \multirow{4}{*}{\makecell{491.52k\\(IPC=1)}} & TM     & 47.7 \small{$\pm0.9$} & 28.6 \small{$\pm0.8$} & 26.6 \small{$\pm0.8$} & 45.2 \small{$\pm0.8$} & 30.7 \small{$\pm1.6$} & 39.4 \small{$\pm1.5$} \\
                                                 & w/ IDC & \underline{61.4} \small{$\pm1.0$} & \underline{34.5} \small{$\pm1.1$} & \underline{38.0} \small{$\pm1.1$} & \underline{56.5} \small{$\pm1.8$} & \underline{39.5} \small{$\pm1.5$} & \underline{50.2} \small{$\pm1.5$} \\
                                                 & w/ HaBa & - & - & - & - & - & - \\
                                                 & \gc w/ FreD & \gc \textbf{66.8} \small{$\pm0.4$} & \gc \textbf{38.3} \small{$\pm1.5$} & \gc \textbf{43.7} \small{$\pm1.6$} & \gc \textbf{63.2} \small{$\pm1.0$} & \gc \textbf{43.2} \small{$\pm0.8$} & \gc \textbf{57.0} \small{$\pm0.8$} \\
    \midrule
    \multirow{4}{*}{\makecell{983.04k\\(IPC=2)}} & TM & 55.2 \small{$\pm1.1$} & 30.9 \small{$\pm1.3$} & 31.6 \small{$\pm1.6$} & 49.7 \small{$\pm1.4$} & 35.3 \small{$\pm2.2$} & 43.9 \small{$\pm0.6$} \\
                                                 & w/ IDC & \underline{65.4} \small{$\pm1.2$} & \underline{37.6} \small{$\pm1.6$} & \underline{43.0} \small{$\pm1.5$} & \underline{62.4} \small{$\pm1.7$} & \underline{43.1} \small{$\pm1.2$} & \underline{55.5} \small{$\pm1.2$} \\
                                                 & w/ HaBa & 51.9 \small{$\pm1.7$} & 32.4 \small{$\pm0.7$} & 34.7 \small{$\pm1.1$} & 50.4 \small{$\pm1.6$} & 36.9 \small{$\pm0.9$} & 41.9 \small{$\pm1.4$} \\
                                                 & \gc w/ FreD & \gc \textbf{69.0} \small{$\pm0.9$} & \gc \textbf{40.0} \small{$\pm1.4$} & \gc \textbf{46.3} \small{$\pm1.2$} & \gc \textbf{66.3} \small{$\pm1.1$} & \gc \textbf{45.2} \small{$\pm1.7$} & \gc \textbf{62.0} \small{$\pm1.3$} \\
    \midrule
    \multirow{4}{*}{\makecell{4915.2k\\(IPC=10)}} & TM & 63.0 \small{$\pm 1.3$} & 35.8 \small{$\pm 1.8$} & 40.3 \small{$\pm 1.3$} & 60.0 \small{$\pm 1.5$} & 40.4 \small{$\pm 2.2$} & 52.3 \small{$\pm 1.0$} \\
                                                  & w/ IDC & \underline{70.8} \small{$\pm0.5$} & \underline{39.8} \small{$\pm0.9$} & \underline{46.3} \small{$\pm1.4$} & \underline{68.7} \small{$\pm0.8$} & \underline{47.9} \small{$\pm1.4$} & \underline{65.4} \small{$\pm1.2$} \\
                                                  & w/ HaBa & - & - & - & - & - & - \\
                                                  & \gc w/ FreD & \gc \textbf{72.0} \small{$\pm0.8$} & \gc \textbf{41.3} \small{$\pm1.2$} & \gc \textbf{47.0} \small{$\pm1.1$} & \gc \textbf{69.2} \small{$\pm 0.6$} & \gc \textbf{48.6} \small{$\pm0.4$} & \gc \textbf{67.3} \small{$\pm0.8$} \\
    \midrule
    \multirow{2}{*}{\makecell{5406.72k\\(IPC=11)}} & TM & 63.9 \small{$\pm0.5$} & 36.6 \small{$\pm0.8$} & 40.1 \small{$\pm1.9$} & 60.4 \small{$\pm1.5$} & 41.0 \small{$\pm1.5$} & 54.6 \small{$\pm1.0$} \\
                                                   & w/ HaBa & 64.7 \small{$\pm1.6$} & 38.6 \small{$\pm1.3$} & 42.5 \small{$\pm1.6$} & 63.0 \small{$\pm1.6$} & 42.9 \small{$\pm0.9$} & 56.8 \small{$\pm1.0$} \\                                             
    \bottomrule \bottomrule
  \end{tabular}}
\end{table}
\begin{table}[h!]
    \centering
    \caption{Test accuracies (\%) on ImageNet-Subset (ImageNet-[A, B, C, D, E], $128 \times 128$) with IPC=1 (\#Params=491.52k). "Cross" denotes the average test accuracy of trained AlexNet, VGG11, ResNet18, and ViT on each synthetic dataset.} \label{appendix:tab:imagenet128_2}
    \adjustbox{max width=\textwidth}{%
    \begin{tabular}{c cc cc cc cc cc}
    \toprule \toprule
     & \multicolumn{2}{c}{ImageNet-A} & \multicolumn{2}{c}{ImageNet-B} & \multicolumn{2}{c}{ImageNet-C} & \multicolumn{2}{c}{ImageNet-D} & \multicolumn{2}{c}{ImageNet-E} \\
    \cmidrule(lr){2-3} \cmidrule(lr){4-5} \cmidrule(lr){6-7} \cmidrule(lr){8-9} \cmidrule(lr){10-11}
     & ConvNet & Cross & ConvNet & Cross & ConvNet & Cross & ConvNet & Cross & ConvNet & Cross \\
    \midrule
    DC      & 43.2 \small{$\pm0.6$} & 38.7 \small{$\pm4.2$} & 47.2 \small{$\pm0.7$} & 38.7 \small{$\pm1.0$} & 41.3 \small{$\pm0.7$} & 33.3 \small{$\pm1.9$} & 34.3 \small{$\pm1.5$} & 26.4 \small{$\pm1.1$} & 34.9 \small{$\pm1.5$} & 27.4 \small{$\pm0.9$} \\
    w/ GLaD & \underline{44.1} \small{$\pm2.4$} & \underline{41.8} \small{$\pm1.7$} & \underline{49.2} \small{$\pm1.1$} & \underline{42.1} \small{$\pm1.2$} & \underline{42.0} \small{$\pm0.6$} & \underline{35.8} \small{$\pm1.4$} & \underline{35.6} \small{$\pm0.9$} & \underline{28.0} \small{$\pm0.8$} & \underline{35.8} \small{$\pm0.9$} & \underline{29.3} \small{$\pm1.3$} \\
    \gc w/ FreD & \gc \textbf{53.1} \small{$\pm1.0$} & \gc \textbf{48.0} \small{$\pm1.4$} & \gc \textbf{54.8} \small{$\pm1.2$} & \gc \textbf{47.6} \small{$\pm1.5$} & \gc \textbf{54.2} \small{$\pm1.2$} & \gc \textbf{47.8} \small{$\pm1.2$} & \gc \textbf{42.8} \small{$\pm1.1$} & \gc \textbf{36.3} \small{$\pm1.4$} & \gc \textbf{41.0} \small{$\pm1.1$} & \gc \textbf{35.0} \small{$\pm1.1$} \\
    \midrule
    
    DM      & 39.4 \small{$\pm1.8$} & 27.2 \small{$\pm1.2$} & 40.9 \small{$\pm1.7$} & 24.4 \small{$\pm1.1$} & 39.0 \small{$\pm1.3$} & 23.0 \small{$\pm1.4$} & 30.8 \small{$\pm0.9$} & 18.4 \small{$\pm0.7$} & 27.0 \small{$\pm0.8$} & 17.7 \small{$\pm0.9$} \\
    w/ GLaD & \underline{41.0} \small{$\pm1.5$} & \underline{31.6} \small{$\pm1.4$} & \underline{42.9} \small{$\pm1.9$} & \underline{31.3} \small{$\pm3.9$} & \underline{39.4} \small{$\pm0.7$} & \underline{26.9} \small{$\pm1.2$} & \underline{33.2} \small{$\pm1.4$} & \underline{21.5} \small{$\pm1.0$} & \underline{30.3} \small{$\pm1.3$} & \underline{20.4} \small{$\pm0.8$} \\
    \gc w/ FreD & \gc \textbf{58.0} \small{$\pm1.7$} & \gc \textbf{48.7} \small{$\pm1.5$} & \gc \textbf{58.6} \small{$\pm1.3$} & \gc \textbf{47.5} \small{$\pm1.5$} & \gc \textbf{55.6} \small{$\pm1.4$} & \gc \textbf{47.1} \small{$\pm1.0$} & \gc \textbf{46.3} \small{$\pm1.2$} & \gc \textbf{35.9} \small{$\pm2.0$} & \gc \textbf{45.0} \small{$\pm1.8$} & \gc \textbf{32.1} \small{$\pm1.6$} \\
    \midrule

    TM      & \underline{51.7} \small{$\pm0.2$} & 33.4 \small{$\pm1.5$} & \underline{53.3} \small{$\pm1.0$} & 34.0 \small{$\pm3.4$} & \underline{48.0} \small{$\pm0.7$} & 31.4 \small{$\pm3.4$} & \underline{43.0} \small{$\pm0.6$} & 27.7 \small{$\pm2.7$} & \underline{39.5} \small{$\pm0.9$} & 24.9 \small{$\pm1.8$} \\
    w/ GLaD & 50.7 \small{$\pm0.4$} & \underline{39.9} \small{$\pm1.2$} & 51.9 \small{$\pm1.3$} & \underline{39.4} \small{$\pm1.3$} & 44.9 \small{$\pm0.4$} & \underline{34.9} \small{$\pm1.1$} & 39.9 \small{$\pm1.7$} & \underline{30.4} \small{$\pm1.5$} & 37.6 \small{$\pm0.7$} & \underline{29.0} \small{$\pm1.1$} \\
    \gc w/ FreD & \gc \textbf{67.7} \small{$\pm1.0$} & \gc \textbf{51.9} \small{$\pm1.1$} & \gc \textbf{69.3} \small{$\pm1.2$} & \gc \textbf{50.7} \small{$\pm1.2$} & \gc \textbf{63.6} \small{$\pm2.0$} & \gc \textbf{48.4} \small{$\pm1.1$} & \gc \textbf{54.4} \small{$\pm1.0$} & \gc \textbf{39.2} \small{$\pm1.4$} & \gc \textbf{55.4} \small{$\pm1.7$} & \gc \textbf{39.8} \small{$\pm1.1$} \\
    \bottomrule \bottomrule
  \end{tabular}}
\end{table}

\begin{table}[h!]
    \centering
    \caption{Test accuracies (\%) on ImageNet-Subset (ImageNet-[A, B, C, D, E], $256 \times 256$) with IPC=1 (\#Params=1966.08k). "Cross" denotes the average test accuracy of trained AlexNet, VGG11, ResNet18, and ViT on each synthetic dataset.} \label{appendix:tab:imagenet256}
    \adjustbox{max width=\textwidth}{%
    \begin{tabular}{c cc cc cc cc cc}
    \toprule \toprule
     & \multicolumn{2}{c}{ImageNet-A} & \multicolumn{2}{c}{ImageNet-B} & \multicolumn{2}{c}{ImageNet-C} & \multicolumn{2}{c}{ImageNet-D} & \multicolumn{2}{c}{ImageNet-E} \\
    \cmidrule(lr){2-3} \cmidrule(lr){4-5} \cmidrule(lr){6-7} \cmidrule(lr){8-9} \cmidrule(lr){10-11}
     & ConvNet & Cross & ConvNet & Cross & ConvNet & Cross & ConvNet & Cross & ConvNet & Cross \\
    \midrule
    DC      & - & \underline{38.3} \small{$\pm4.7$} & - & 32.8 \small{$\pm4.1$} & - & 27.6 \small{$\pm3.3$} & - & 25.5 \small{$\pm1.2$} & - & 23.5 \small{$\pm2.4$} \\
    w/ GLaD & - & 37.4 \small{$\pm5.5$} & - & \underline{41.5} \small{$\pm1.2$} & - & \underline{35.7} \small{$\pm4.0$} & - & \underline{27.9} \small{$\pm1.0$} & - & \underline{29.3} \small{$\pm1.2$} \\
    \gc w/ FreD & \gc 54.8 \small{$\pm0.9$} & \gc \textbf{48.0} \small{$\pm0.9$} & \gc 56.2 \small{$\pm1.0$} & \gc \textbf{48.2} \small{$\pm1.7$} & \gc 53.5 \small{$\pm1.4$} & \gc \textbf{47.3} \small{$\pm1.0$} & \gc 41.6 \small{$\pm1.2$} & \gc \textbf{37.8} \small{$\pm1.0$} & \gc 39.1 \small{$\pm1.5$} & \gc \textbf{33.4} \small{$\pm1.2$} \\
    \bottomrule \bottomrule
  \end{tabular}}
\end{table}

\subsection{Performance Comparison on Large-size Dataset}
\begin{wraptable}{r}{.45\textwidth}
    \centering
    \caption{Test accuracies (\%) on LSUN.} \label{appendix:tab:LSUN}
    \resizebox{.45\textwidth}{!}{
    \begin{tabular}{c cc cc}
        \toprule \toprule
         & \multicolumn{2}{c}{LSUN-10k} & \multicolumn{2}{c}{LSUN-25k} \\
        \cmidrule(lr){2-3} \cmidrule(lr){4-5}
         & DC & DM & DC & DM \\
        \midrule
        Vanilla & \underline{24.0} \small{$\pm1.1$} & 22.3 \small{$\pm0.4$} & \underline{23.9} \small{$\pm0.5$} & 22.3 \small{$\pm0.4$} \\
        w/ IDC & 22.7 \small{$\pm0.3$} & \underline{27.4} \small{$\pm0.8$} & 22.7 \small{$\pm0.7$} & \underline{27.1} \small{$\pm0.4$} \\
        \gc w/ FreD & \gc \textbf{30.3} \small{$\pm0.9$} & \gc \textbf{37.1} \small{$\pm0.2$} & \gc \textbf{32.1} \small{$\pm0.2$} & \gc \textbf{36.3} \small{$\pm0.6$} \\
        \midrule
        Entire dataset & \multicolumn{2}{c}{71.8 \small{$\pm0.3$}} & \multicolumn{2}{c}{72.8 \small{$\pm0.3$}} \\
        \bottomrule \bottomrule
    \end{tabular}}
\end{wraptable}
Distilling the dataset into a small cardinality synthetic dataset can be more effective when the size of the original is large. Therefore, we further investigate the usefulness of our method and several baselines on a dataset with a large number of instances. We choose LSUN dataset \cite{yu2015lsun} as the large-size dataset. Table \ref{appendix:tab:LSUN} provides performances of FreD and other baselines on the LSUN dataset. As a result, FreD achieves the best performance compared to the implemented baselines.

\subsection{More Results on Compatibility of Parameterization.}
In Table \ref{experiments:tab:compatibility}, we reported an average performance over the unseen test network architecture such as AlexNet, VGG11, and ResNet18 for evaluating the cross-architecture generalization. Herein, we provide detailed performance for each test network architecture. Table \ref{apppendix:tab:compatibility} repeatedly shows the significant performance improvement of FreD in terms of cross-architecture generalization. These experimental results validate the effectiveness of frequency domain-based parameterization on both the dataset distillation objective and unseen test architectures.
\begin{table}[h]
    \centering
    \caption{Test accuracies (\%) on CIFAR-10 under various dataset distillation loss and cross-architecture. We distill the synthetic dataset by using ConvNet.} \label{apppendix:tab:compatibility}
    \adjustbox{max width=\textwidth}{%
    \begin{tabular}{c c ccc ccc ccc}
    \toprule \toprule
    & & \multicolumn{3}{c}{DC} & \multicolumn{3}{c}{DM} & \multicolumn{3}{c}{TM} \\
    \cmidrule(lr){3-5} \cmidrule(lr){6-8} \cmidrule(lr){9-11} 
     & IPC & 2 & 11 & 51 & 2 & 11 & 51 & 2 & 11 & 51 \\
     & \#Params & 61.44k & 337.92k & 1566.72k & 61.44k & 337.92k & 1566.72k & 61.44k & 337.92k & 1566.72k \\
    \midrule

    \multirow{4}{*}{AlexNet} & Vanilla & 20.0 \small{$\pm1.3$} & 22.4 \small{$\pm1.4$} & 29.5 \small{$\pm0.9$} & 20.7 \small{$\pm3.6$} & 37.0 \small{$\pm0.9$} & 49.1 \small{$\pm0.9$} & 26.1 \small{$\pm1.0$} & 36.0 \small{$\pm1.5$} & 49.2 \small{$\pm1.3$} \\
     & w/ IDC & \underline{26.8} \small{$\pm1.8$} & \underline{41.5} \small{$\pm0.5$} & \underline{44.2} \small{$\pm0.7$} & \underline{36.4} \small{$\pm1.1$} & \underline{47.7} \small{$\pm0.6$} & \underline{59.2} \small{$\pm0.7$} & 32.5 \small{$\pm2.2$} & 43.7 \small{$\pm3.0$} & 54.9 \small{$\pm1.1$} \\
     & w/ HaBa & 22.2 \small{$\pm1.1$} & 33.0 \small{$\pm0.9$} & 33.4 \small{$\pm1.4$} & 32.1 \small{$\pm0.6$} & 44.1 \small{$\pm0.7$} & 53.1 \small{$\pm0.9$} & \underline{43.6} \small{$\pm1.5$} & \underline{49.0} \small{$\pm3.0$} & \underline{60.1} \small{$\pm1.4$} \\
     & \gc w/ FreD & \gc \textbf{39.8} \small{$\pm0.4$} & \gc \textbf{42.4} \small{$\pm0.6$} & \gc \textbf{46.4} \small{$\pm0.5$} & \gc \textbf{46.4} \small{$\pm0.7$} & \gc \textbf{55.7} \small{$\pm0.5$} & \gc \textbf{65.7} \small{$\pm0.5$} & \gc \textbf{44.1} \small{$\pm1.3$} & \gc \textbf{55.9} \small{$\pm0.8$} & \gc \textbf{65.9} \small{$\pm0.8$} \\
    \midrule

     \multirow{4}{*}{VGG11} & Vanilla & 28.0 \small{$\pm0.3$} & 35.9 \small{$\pm0.7$} & 38.7 \small{$\pm0.5$} & 22.3 \small{$\pm1.0$} & 41.6 \small{$\pm0.6$} & 55.2 \small{$\pm0.5$} & 38.0 \small{$\pm1.2$} & 50.5 \small{$\pm1.0$} & 61.4 \small{$\pm0.3$} \\
     & w/ IDC & \underline{34.3} \small{$\pm0.7$} & \textbf{40.0} \small{$\pm0.5$} & \underline{42.4} \small{$\pm0.8$} & \underline{38.2} \small{$\pm0.6$} & \underline{52.8} \small{$\pm0.5$} & 62.2 \small{$\pm0.3$} & 48.2 \small{$\pm1.2$} & 52.1 \small{$\pm0.7$} & 65.2 \small{$\pm0.6$} \\
     & w/ HaBa & 29.4 \small{$\pm0.9$} & 37.0 \small{$\pm0.4$} & 41.9 \small{$\pm0.6$} & 26.9 \small{$\pm0.6$} & 49.4 \small{$\pm0.4$} & \textbf{67.5} \small{$\pm0.4$} & \underline{48.3} \small{$\pm0.5$} & \textbf{60.5} \small{$\pm0.6$} & \underline{67.5} \small{$\pm0.4$} \\
     & \gc w/ FreD & \gc \textbf{38.8} \small{$\pm0.9$} & \gc \textbf{40.0} \small{$\pm0.8$} & \gc \textbf{44.8} \small{$\pm0.9$} & \gc \textbf{48.1} \small{$\pm0.9$} & \gc \textbf{59.0} \small{$\pm0.6$} & \gc \underline{66.6} \small{$\pm0.2$} & \gc \textbf{51.0} \small{$\pm0.8$} & \gc \underline{60.0} \small{$\pm0.6$} & \gc \textbf{69.9} \small{$\pm0.4$} \\
    \midrule
    
    \multirow{4}{*}{ResNet18} & Vanilla & 18.1 \small{$\pm0.8$} & 18.4 \small{$\pm0.4$} & 22.1 \small{$\pm0.4$} & 22.3 \small{$\pm1.0$} & 40.0 \small{$\pm1.5$} & 53.4 \small{$\pm0.7$} & 35.2 \small{$\pm1.0$} & 45.1 \small{$\pm1.5$} & 54.5 \small{$\pm1.0$} \\
     & w/ IDC & \underline{24.9} \small{$\pm0.9$} & \underline{24.8} \small{$\pm0.7$} & \underline{34.1} \small{$\pm0.7$} & \underline{37.3} \small{$\pm1.5$} & \underline{50.9} \small{$\pm0.7$} & \underline{62.5} \small{$\pm0.5$} & 46.7 \small{$\pm0.9$} & 50.2 \small{$\pm0.6$} & \underline{64.5} \small{$\pm1.2$} \\
     & w/ HaBa & 24.5 \small{$\pm0.6$} & 24.3 \small{$\pm0.6$} & 31.1 \small{$\pm0.3$} & 31.3 \small{$\pm0.7$} & 47.6 \small{$\pm0.5$} & 59.6 \small{$\pm0.4$} & \underline{47.4} \small{$\pm0.7$} & \underline{58.0} \small{$\pm0.9$} & 64.4 \small{$\pm0.6$} \\
     & \gc w/ FreD & \gc \textbf{33.0} \small{$\pm1.1$} & \gc \textbf{29.8} \small{$\pm0.6$} & \gc \textbf{37.0} \small{$\pm0.9$} & \gc \textbf{49.7} \small{$\pm0.3$} & \gc \textbf{57.3} \small{$\pm1.2$} & \gc \textbf{62.6} \small{$\pm1.0$} & \gc \textbf{53.9} \small{$\pm0.7$} & \gc \textbf{64.4} \small{$\pm0.6$} & \gc \textbf{71.4} \small{$\pm0.7$} \\
    \bottomrule \bottomrule
  \end{tabular}}
\end{table}

\subsection{More Results on Robustness against Corruption.} \label{appendix:corruption}
\begin{wrapfigure}{r}{0.3\textwidth}
\vspace{-0.3in}
    \centering
    \includegraphics[width=\linewidth]{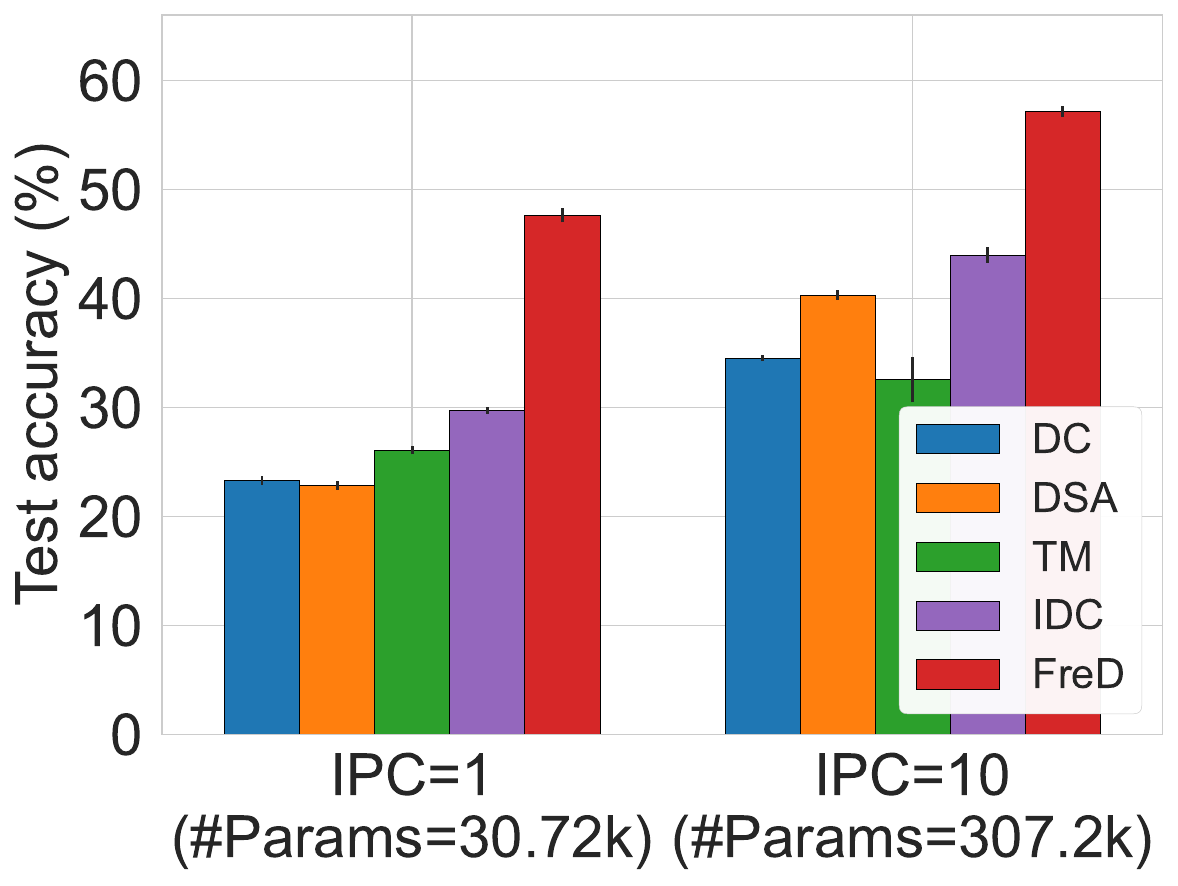}
    \caption{Test accuracies (\%) on CIFAR-10.1} \label{appendix:fig:cifar-10.1}
\vspace{-0.1in}
\end{wrapfigure}
Our proposed method, FreD, demonstrates substantial robustness against corruption, evidence supported by the findings in Figure \ref{experiments:fig:cifar10-c} and Table \ref{experiments:tab:imagenet-c}. In this context, we provide further experimental results: 1) the test accuracies results for CIFAR-10.1, and 2) a detailed breakdown of test accuracies based on different types of corruption in CIFAR-10-C. For detailed results on CIFAR-10-C, we report the performance of the severest level and average across all severity levels. In Figure \ref{appendix:fig:cifar-10.1}, FreD achieves the best performance with a significant gap over the baseline methods on CIFAR-10.1. Furthermore, Table \ref{appendix:tab:cifar10-c} verifies the superior robustness regardless of corruption type. 
\begin{table}[h]
    \centering
    \caption{Test accuracies (\%) on CIFAR-10-C with IPC=1 (\#Params=30.72k).} \label{appendix:tab:cifar10-c}
    \begin{subtable}{\textwidth}
        \caption{Severity level 5 (most severest)}
        \adjustbox{max width=\textwidth}{%
        \begin{tabular}{c ccc cccc cccc cccc| c}
        \toprule \toprule
        & Gauss. & Shot & Impul. & Defoc. & Glass & Motion & Zoom & Snow & Frost & Fog & Brit. & Contr & Elastic & Pixel & JPEG & Avg. \\
        \midrule
        DC  & 28.0 & 28.1 & 27.5 & 28.4 & 28.1 & 27.9 & 27.4 & 27.7 & 27.9 & 19.3 & 28.1 & 28.2 & 28.1 & 28.5 & 28.5 & 27.4 \\
        DSA & 27.5 & 27.5 & 27.0 & 27.9 & 27.5 & 27.5 & 26.7 & 27.2 & 27.7 & 18.8 & 27.3 & 28.9 & 27.9 & 28.1 & 27.9 & 27.0 \\
        TM  & 30.2 & 30.4 & 28.6 & 29.0 & 28.0 & 28.2 & 28.6 & 30.4 & 29.6 & 23.0 & 32.5 & 31.4 & 29.3 & 30.0 & 31.2 & 29.4 \\
        IDC & \underline{36.4} & \underline{36.2} & \underline{33.3} & \underline{39.4} & \underline{37.6} & \underline{38.6} & \underline{38.4} & \underline{38.1} & \underline{38.7} & \underline{29.7} & \underline{37.9} & \underline{39.1} & \underline{38.4} & \underline{39.2} & \underline{38.7} & \underline{37.3} \\
        \gc FreD & \gc \textbf{54.4} & \gc \textbf{54.2} & \gc \textbf{48.9} & \gc \textbf{57.1} & \gc \textbf{54.8} & \gc \textbf{55.4} & \gc \textbf{55.4} & \gc \textbf{55.9} & \gc \textbf{54.7} & \gc \textbf{43.3} & \gc \textbf{56.3} & \gc \textbf{41.6} & \gc \textbf{57.1} & \gc \textbf{58.5} & \gc \textbf{58.1} & \gc \textbf{53.7} \\
        \bottomrule \bottomrule
        \end{tabular}}
    \end{subtable}
    \begin{subtable}{\textwidth}
        \caption{Average across all severity levels}
        \adjustbox{max width=\textwidth}{%
        \begin{tabular}{c ccc cccc cccc cccc| c}
        \toprule \toprule
        & Gauss. & Shot & Impul. & Defoc. & Glass & Motion & Zoom & Snow & Frost & Fog & Brit. & Contr & Elastic & Pixel & JPEG & Avg. \\
        \midrule
        DC  & 28.2 & 28.3 & 28.0 & 28.5 & 28.3 & 28.2 & 27.8 & 28.0 & 28.2 & 24.3 & 28.4 & 28.6 & 28.1 & 28.5 & 28.5 & 28.0 \\
        DSA & 27.8 & 27.8 & 27.5 & 28.1 & 27.8 & 27.8 & 27.3 & 27.6 & 27.8 & 23.7 & 27.9 & 28.8 & 27.6 & 28.2 & 28.0 & 27.6 \\
        TM  & 30.8 & 31.1 & 29.9 & 30.5 & 29.0 & 29.5 & 29.6 & 31.0 & 30.5 & 28.0 & 32.3 & 32.4 & 29.5 & 31.1 & 31.5 & 30.4 \\
        IDC & \underline{37.4} & \underline{37.8} & \underline{36.3} & \underline{39.7} & \underline{38.2} & \underline{39.0} & \underline{39.0} & \underline{38.9} & \underline{38.6} & \underline{35.7} & \underline{39.3} & \underline{40.4} & \underline{38.5} & \underline{39.4} & \underline{39.1} & \underline{38.5} \\
        \gc FreD & \gc \textbf{56.7} & \gc \textbf{57.3} & \gc \textbf{54.4} & \gc \textbf{58.9} & \gc \textbf{56.4} & \gc \textbf{57.2} & \gc \textbf{57.3} & \gc \textbf{58.0} & \gc \textbf{56.5} & \gc \textbf{53.6} & \gc \textbf{59.2} & \gc \textbf{53.5} & \gc \textbf{57.2} & \gc \textbf{59.6} & \gc \textbf{58.9} & \gc \textbf{57.0} \\
        \bottomrule \bottomrule
        \end{tabular}}
    \end{subtable}
\end{table}

\subsection{Performance Comparison with Memory Addressing}
Memory addressing (MA) \cite{deng2022remember} is a new parameterization method to create a common representation by encapsulating the features shared among different classes into a set of bases. The reported performances of \cite{deng2022remember} show mixed results under various settings. However, the performance of \cite{deng2022remember} is not solely due to the MA but also includes the effects of other components. For a fair comparison between MA and FreD, we standardized the distillation loss and evaluated their performances.

Figure \ref{appendix:fig:transfer_MAnFreD_10} shows that MA and FreD exhibit competitive performances on CIFAR-10 with each other under the implemented settings of DM and TM losses. We further assessed the robustness of each approach by evaluating the transferability of the synthetic datasets against diverse distribution shifts. Figure \ref{appendix:fig:transfer_MAnFreD_10.1} represents mixed result performance on CIFAR-10.1. As a result in Figure \ref{appendix:fig:transfer_MAnFreD_10-C}, FreD particularly shows better performances than MA in most corrupted versions of datasets. Furthermore, it should be noted that FreD achieves higher performance than MA when the severity level becomes higher. We conjecture that because FreD selects informative dimensions in the frequency domain, it has good robustness to corruptions that typically occur in the high-frequency domain. In terms of computational time, MA requires about nearly three times more than FreD. Please refer to Section \ref{section:complexity} for the detailed discussion.

While both methods are distinct approaches, the implementation of MA in the spatial domain allows a further transformation to the frequency domain. This enables the orthogonal application of MA and FreD. One possible combination is to define the bases of MA in the frequency domain and select the informative dimensions. It allows more flexible parameterization. We leave it as future work.
\begin{figure}[h]
    \centering
    \begin{subfigure}{0.315\textwidth}
        \centering
        \centerline{\includegraphics[width=\textwidth]{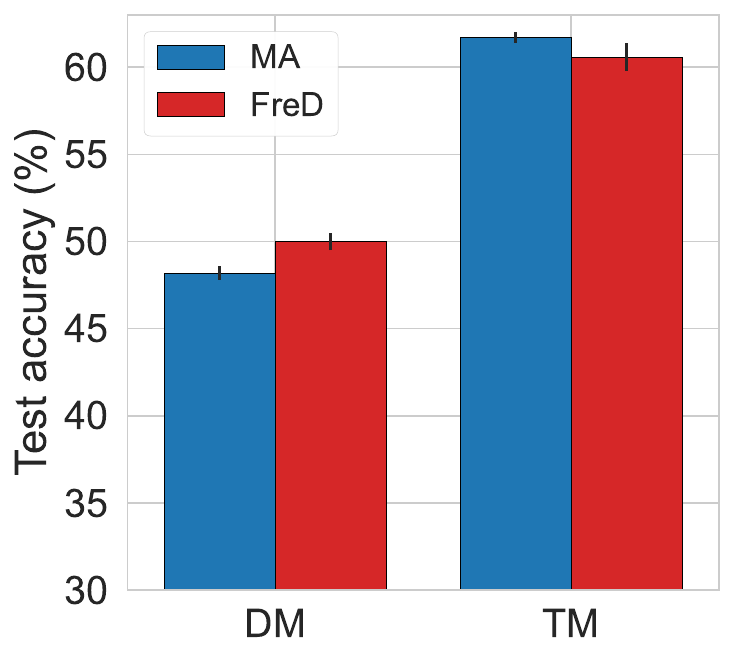}}
        \caption{Target dataset: CIFAR-10}
        \label{appendix:fig:transfer_MAnFreD_10}
    \end{subfigure}
    \begin{subfigure}{0.315\textwidth}
        \centering
        \centerline{\includegraphics[width=\textwidth]{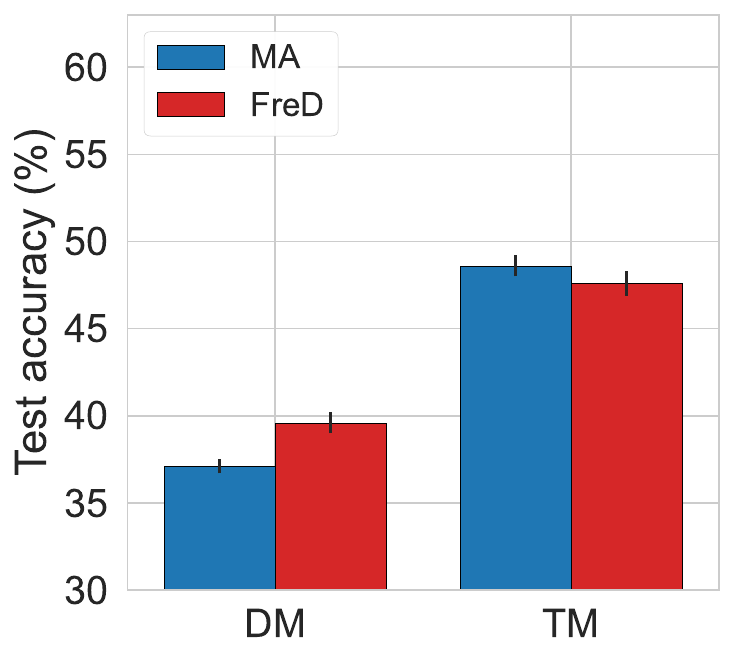}}
        \caption{Target dataset: CIFAR-10.1}
        \label{appendix:fig:transfer_MAnFreD_10.1}
    \end{subfigure}
    \begin{subfigure}{0.315\textwidth}
        \centering
        \centerline{\includegraphics[width=\textwidth]{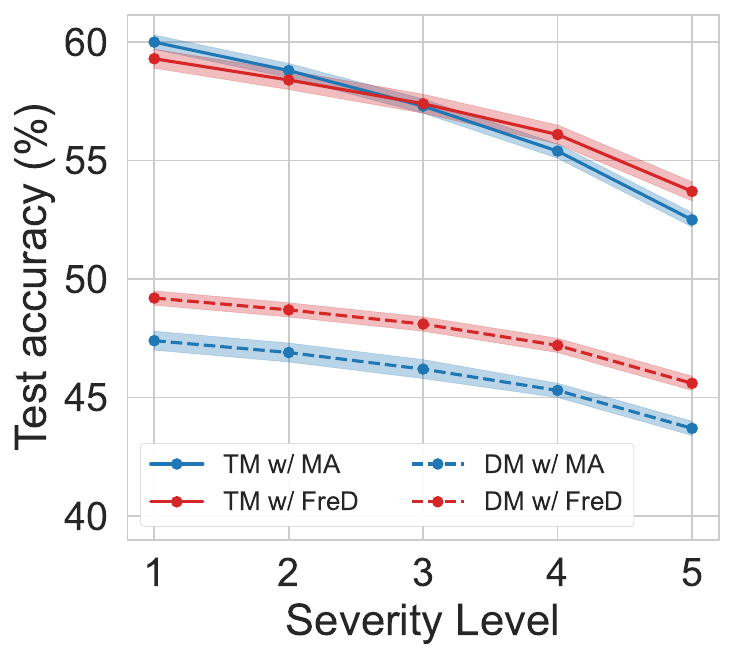}}
        \caption{Target dataset: CIFAR-10-C}
        \label{appendix:fig:transfer_MAnFreD_10-C}
    \end{subfigure}
    \caption{Performance comparison of MA and FreD on each target dataset (Source dataset: CIFAR-10). Note that higher level indicates higher corruption.} \label{appendix:fig:transfer_MAnFreD}
\end{figure}

\subsection{Compatibility with BPTT}
\begin{wrapfigure}{h}{.35\textwidth}
    \centering
        \includegraphics[width=\linewidth]{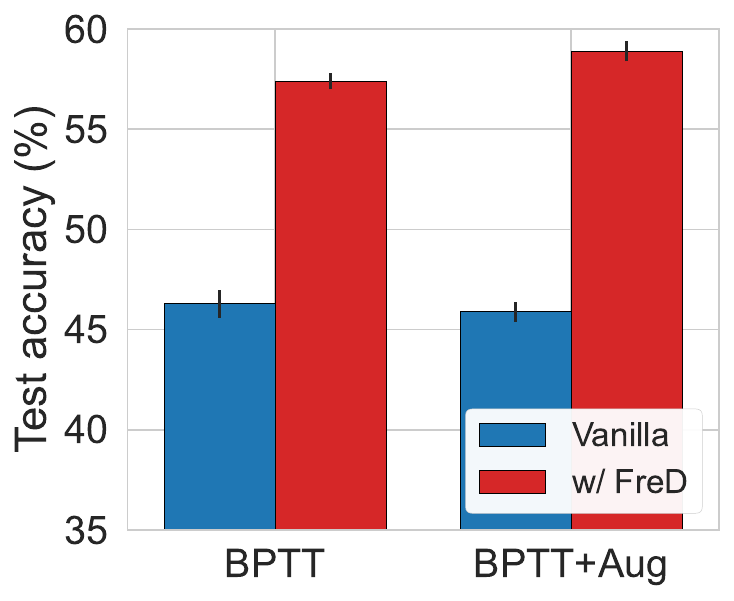}
    \caption{Study of FreD on BPTT} \label{appendix:fig:BPTT with FreD}
\end{wrapfigure}
Back-propagation through time (BPTT) is another optimization framework that effectively solves the bi-level optimization problem. \cite{deng2022remember} suggests BPTT to train the synthetic dataset in dataset distillation. FreD is a new type of parameterization framework for dataset distillation, while BPTT is introduced as a new optimization framework for dataset distillation. Hence, they can be utilized orthogonally. To verify the efficiency of FreD, we conduct an experiment on models that combine the BPTT framework with FreD.

Figure \ref{appendix:fig:BPTT with FreD} shows the performance of the model with FreD in the BPTT framework on CIFAR-10 under IPC=1 (\#Params=30.72k). As mentioned in \cite{deng2022remember}, we considered two variants of BPTT framework with and without augmentation. We reduced the number of training iterations for each baseline and FreD from 50,000 to 5,000.

As a result, BPTT with FreD outperformed BPTT without FreD under BPTT framework regardless of whether or not the augmentation was used. Furthermore, even when compared to the performance of BPTT with full iteration training reported in the original paper ($49.1\pm0.6$), BPTT w/ FreD achieved higher performance ($57.4\pm0.4$). It indicates that FreD is an efficient methodology that can also be applied in the BPTT framework.

\subsection{Additional Ablation Study on Frequency Transform} \label{appendix:additional ablation FT}
We basically utilized three frequency transforms: DCT, DFT, and DWT. We especially want to highlight the energy compaction property of DCT, where most of the signal information tends to be concentrated in a few low-frequency components (Please refer to Figure 2 in \cite{ahmed1974discrete} and Figure 1 in \cite{yaroslavsky2015compression}). This characteristic aligns well with the motivation of FreD, and Figure \ref{experiments:fig:ablationF} demonstrates that DCT is the best choice among the possible options.

To further analyze the effect of frequency transforms on FreD, we have conducted various experiments. Figure \ref{appendix:fig:ablationF} presents the results as follows:
\begin{itemize}
    \item Across all settings, we observe improved performances of FreD than the baseline regardless of the type of frequency transform employed.
    \item DCT outperforms DFT and DWT in most cases, highlighting the effective exploitation of DCT's energy compaction property within the FreD framework.
    \item DFT exhibits relatively lower performance in comparison to DCT and DWT. This discrepancy is attributed to the complex-valued nature of DFT. Unlike DCT and DWT, which operate in real space, DFT requires additional resources to represent a single instance due to its complex space. As a result, the quantity of synthetic instances that can be generated within an identical budget is reduced by half than others, leading to lower performance.
\end{itemize}
\begin{figure}[h]
    \centering
    \subcaptionbox{SVHN}{\includegraphics[width=\linewidth]{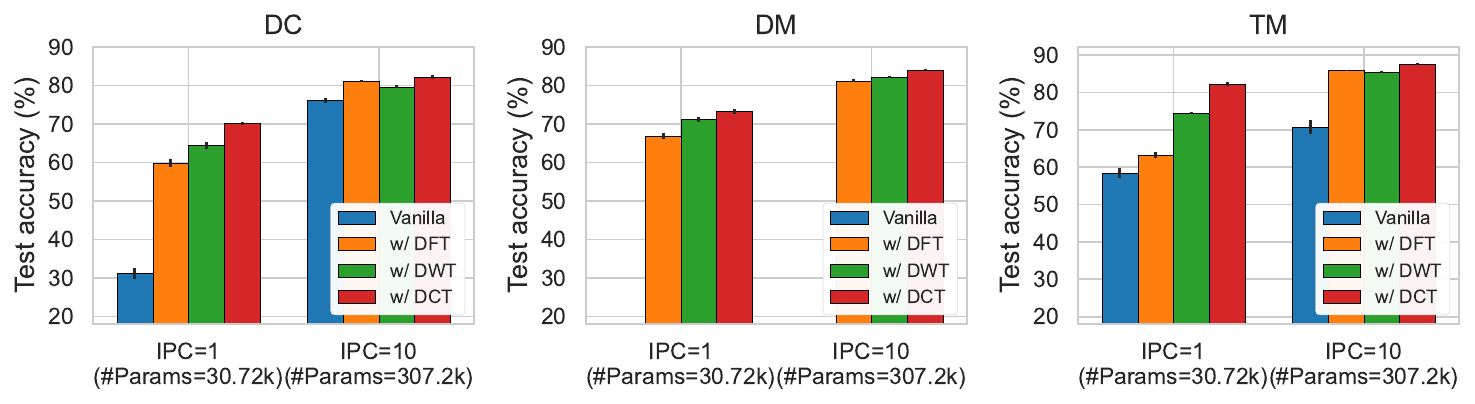}}\par 
    \subcaptionbox{CIFAR-10}{\includegraphics[width=\linewidth]{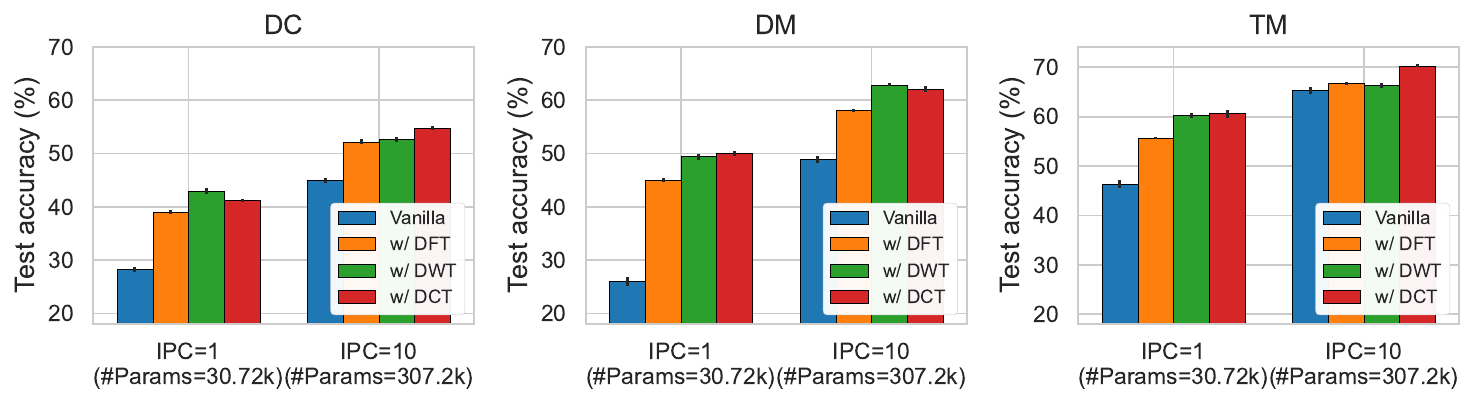}}\par
    \caption{Ablation study on the frequency transform. Note that DM does not provide the test accuracies on SVHN in the original paper.}
    \label{appendix:fig:ablationF}
\end{figure}

\subsection{Performance Comparison with Post-downsampling}
As mentioned by IDC, the most basic methodology for dataset distillation is to generate the large-cardinality $S$ and compress them with post-processing. In the previous study, the comparison was conducted only in the spatial domain, but this paper extends it to consider post-processing in the frequency domain. Post-processing is the compression of vanilla in each domain. Table \ref{appendix:tab:post} presents the results as follows:
\begin{itemize}
    \item Post-downsampling in both domains achieves lower performance than DM since they compress the trained synthetic dataset. These results indicate the inevitable information loss. While post-downsampling shows information loss, frequency domain-based downsampling achieves higher performance than spatial domain. It demonstrates the frequency domain stores task-relevant information more effectively than spatial domain.
    \item End-to-end methods achieve higher performance than post-downsampling methods. Among them, the frequency domain-based method (FreD) achieves higher performance than the spatial domain-based method (IDC).
    \item FreD shows a higher cross-architecture generalization despite spending a quarter of the budget of the vanilla model.
\end{itemize}
\begin{table}[h]        
    \centering
    \caption{Test accuracies (\%) comparison under various test network architecture on CIFAR-10. We utilize DM for the dataset distillation loss and ConvNet for the training architecture.} \label{appendix:tab:post}
    \resizebox{\textwidth}{!}{
    \begin{tabular}{cc cc cccc}
    \toprule \toprule
    \makecell{Decoded instances\\per class}& \#Params & \multicolumn{2}{c}{Model} & ConvNet & AlexNet & VGG11 & ResNet18 \\
    \midrule
    \multirow{5}{*}{40}         & 1228.8k  & \multicolumn{2}{c}{DM}    & \textbf{61.2} \small{$\pm0.4$} & \underline{48.8} \small{$\pm0.5$} & \underline{53.9} \small{$\pm0.5$} & \underline{52.1} \small{$\pm0.5$} \\
                                \cmidrule{2-8}
                                & \multirow{4}{*}{307.2k}   & \multirow{2}{*}{Post-downsampling} & Spatial & 56.7 \small{$\pm0.5$} & 44.6 \small{$\pm0.8$} & 49.9 \small{$\pm0.6$} & 49.5 \small{$\pm0.6$} \\
                                & & & Frequency & 59.3 \small{$\pm0.4$} & 47.4 \small{$\pm0.5$} & 52.5 \small{$\pm0.5$} & 51.2 \small{$\pm0.6$} \\
                                \cmidrule{3-8}
                                & & \multirow{2}{*}{End-to-End} & IDC & 59.6 \small{$\pm0.5$} & 47.6 \small{$\pm0.7$} & 52.2 \small{$\pm0.6$} & 50.8 \small{$\pm0.5$} \\
                                & & & \gc FreD & \gc \underline{60.5} \small{$\pm0.3$} & \gc \textbf{50.9} \small{$\pm0.4$} & \gc \textbf{54.8} \small{$\pm0.3$} & \gc \textbf{53.1} \small{$\pm0.9$} \\
    \bottomrule \bottomrule
  \end{tabular}}
\end{table}

\subsection{More Visualization of Binary Mask and Transformed Images} \label{appendix:more visualization}
We provide the binary mask and transformed images from our proposed method on various datasets: SVHN (see Figure \ref{appendix:fig:qual_svhn}), CIFAR-10 (see Figure \ref{appendix:fig:qual_cifar10}), CIFAR-100 (see Figure \ref{appendix:fig:qual_cifar100}), Tiny-ImageNet (see Figure \ref{appendix:fig:qual_tiny}), and ImageNet-Subset (see Figure \ref{appendix:fig:qual_imagenet1} and \ref{appendix:fig:qual_imagenet3}). For CIFAR-100 and Tiny-ImageNet, we visualize the first 10 classes. For a better layout, we plot these visualizations at the end of the paper. Through these results, we can observe that the constructed synthetic dataset by FreD contains both intra-class diversity and inter-class discriminative features, regardless of the image resolution.

For 3D MNIST experiments, we provide Figure \ref{appendix:fig:3D MNIST}, which displays the original image and a set of trained synthetic data through each distillation method. To enable visualization of the $16\times16\times16$ dimension point cloud, we sliced each instance's depth dimension into 16 images and displayed them separately. The image located at the top left represents the frontmost view, while the image at the bottom right corresponds to the rearmost view. From Figure \ref{appendix:fig:3D MNIST}, FreD effectively captures the class-discriminative information that class 0 should possess. It indicates that the proposed frequency-based dataset distillation framework is applicable to higher-dimensional data than two-dimensional data. Furthermore, compared to the DM and IDC, the synthesized instance by FreD shows more clearer boundary in dimensions $6\sim11$ which is the key class-discriminative information of 0. This result demonstrates that the selection of informative dimensions in the frequency domain is effective. In the revised paper, we will add the visualization of the 3D MNIST cloud synthesized by each method.

\section{Additional Discussions}
\subsection{Comparison between FreD and PCA-based Transform} \label{appendix:comparison with PCA}
\begin{wrapfigure}{h!}{0.3\textwidth}
\vspace{-0.3in}
    \centering
    \includegraphics[width=\linewidth]{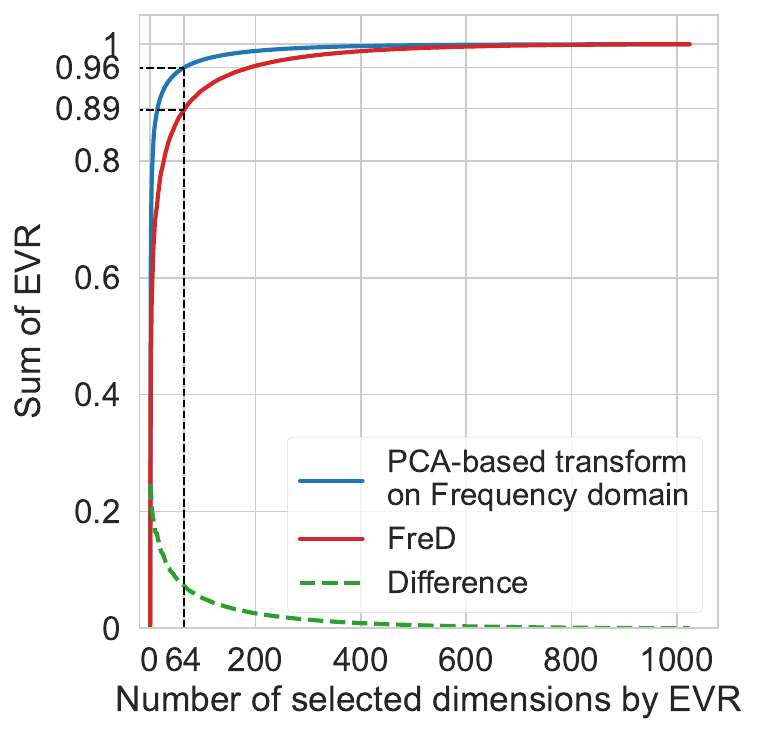}
\caption{PCA vs FreD.} \label{appendix:fig:evr_supp}
\vspace{-0.4in}
\end{wrapfigure}
PCA-based transform, which sets the principal components of the given dataset as new axes, can ideally preserve the sum of explained variance ratio by selecting top-$k$ principal components as a subset of new dimensions. However, there are some evidence for the claim that PCA cannot be practically utilized as a method of dataset distillation.

First, PCA-based transform requires an additional budget to store the transform matrix. PCA-based transform utilizes top-$k$ principal components as new axes of the introduced domain. As these axes are composed of a weighted sum of each dimension value, and therefore, it is not possible to implement a feature like FreD, which selects a subset of dimensions from the overall dimensions of the domain. Therefore, the transform matrix created for projection is a $d \times k$ dimensional matrix consisting of the top-$k$ principal components. This matrix needs to be stored separately from the condensed dataset $S$, as it represents a distinct component for transformation, which means an additional budget is needed. Unlike PCA-based transforms, in the frequency domain transform, once you choose a specific frequency transform, the corresponding transform function and inverse transform function remain fixed. Therefore, there is no need to manage these functions separately with an additional budget. 

Secondly, the commonly used linear PCA fails to capture the correlations present in the spatial domain of the data (e.g., correlations between adjacent pixels in an image). Although there are spatial principal component analysis \cite{spca} methods specifically designed for spatial domains, such methodologies utilize spatial kernel matrix to model the correlation information between adjacent pixels, which could introduce the possibility of information loss. In this subsection, we refer to information loss specifically to the loss of information that occurs during the utilization of a spatial kernel or converting kernel-extracted information into linear features. In other words, it is challenging to accurately determine the principal components for a given dataset during the implementation, making it difficult to use PCA transforms.

Having said that, we conducted the comparison between 1) the sum of explained variance ratio (EVR) obtained through principal component analysis using a dataset transformed into the frequency domain and 2) the sum of EVR by using FreD, which is based on dimension selection in the frequency domain. The sum of EVR based on eigenvectors is maximum in terms of other comparable baselines. However, it should be noted that even if PCA is performed based on the frequency domain, the constraint of storing the projection matrix in the memory budget still remains. Figure \ref{appendix:fig:evr_supp} illustrates the Cumulative EVR based on the different number of selected dimensions for each method. In our whole experiments, the smallest dimension selection was 64 dimensions. Based on this dimension selection, the Cumulative EVR of FreD, compared to the sum of the EVR of the top-64 eigenvectors in PCA, differs by only around 7$\%$. Furthermore, when more dimensions are selected, this difference becomes smaller. In this regard, FreD can be considered an efficient methodology that sacrifices slightly in EVR while not requiring an additional memory budget for an additional transform matrix.

\subsection{More Visualization of $\| \nabla_{S}\mathcal{L}_{DD}(S,D) \|$ in Frequency Domain}
\begin{wrapfigure}{r}{0.25\textwidth}
\vspace{-0.1in}
    \begin{subfigure}{0.45\linewidth}
        \centering
        \includegraphics[width=\textwidth]{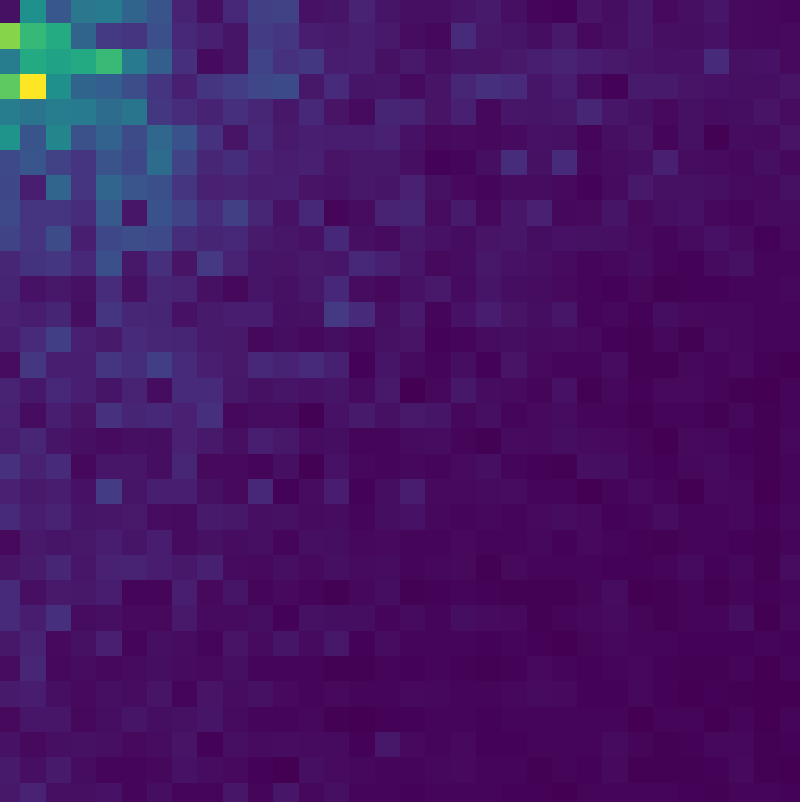}
        \caption{DC} \label{appendix:fig:gradcam DC}
    \end{subfigure}
    \begin{subfigure}{0.45\linewidth}
        \centering
        \includegraphics[width=\textwidth]{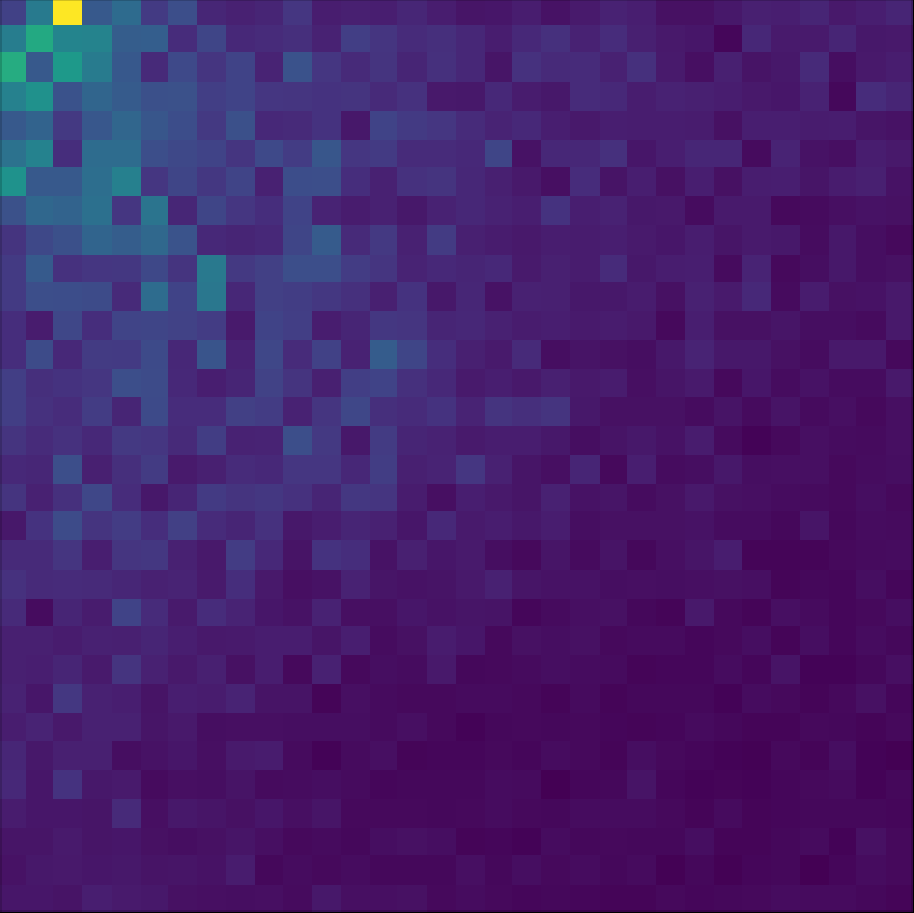}
        \caption{TM} \label{appendix:fig:gradcam TM}
    \end{subfigure}
\caption{Visualization of $\| \nabla_{S}\mathcal{L}_{DD}(S,D) \|$} \label{appendix:fig:additional gradcam}
\vspace{-0.1in}
\end{wrapfigure}
The main idea of this paper is to compress spatial domain information into fewer frequency dimensions. To verify our idea, we investigate the magnitude of the gradient for $\mathcal{L}_{DD}(S,D)$ i.e. $\| \nabla_{S}\mathcal{L}_{DD}(S,D) \|$. Specifically, we visualize the magnitude of the gradient of DM distillation loss in Figure \ref{methodology:fig:domain_comparison_2}. To demonstrate that our observation is not confined to a specific distillation loss, we provide the magnitude of the gradient in the frequency domain across different distillation losses, which are DC and TM in Figure \ref{appendix:fig:additional gradcam}. It shows that the concentration of gradient exhibits a consistent pattern regardless of the type of distillation loss.

\subsection{Discussion on Budget Allocation} \label{appendix:budget}
In this section, we will delve further into how the frequency domain-based dimension subset selection by FreD leads to a reduction in the actual budget. Let $x \in \mathbb{R}^{d_{1}\times d_{2}}$ as a data instance in the spatial domain, whose dimension size is $d = d_{1}\times d_{2}$. If each element of $x$ is a 32-bit float, each image would occupy $32\times d$ bits in memory. Let's assume the same instance is transformed into the frequency domain with the same dimension size, and we only utilize $k$ dimensions in that domain. In that case, the budget required to represent the values would decrease to $32\times k$ bits. However, in addition to simply storing the values on selected dimensions, we also need to store information about the positions where each value is located. One advantage of FreD is that instead of having separate masks for each instance, it has separate masks for each class. It means that we only need to store the position of the dimension being passed through, once per class. Therefore, we can prevent budget waste by storing the indices of the selected dimension $\Tilde{M}$ and the frequency coefficient value of that dimension $\Tilde{f}$, rather than storing the entire frequency representation $f$:
\begin{equation}
    f = \mathcal{F}(x) = 
    \begin{pmatrix}
    0.2335 & 0.0000 & 0.1246 \\
    0.1243 & 1.0442 & 0.0000 \\
    0 & 0.0000 & 0.0000
    \end{pmatrix}
    \iff
    \begin{cases}
        \Tilde{f} = [0.2335, 0.1246, 0.1243, 1.0442] \\
        \Tilde{M} = [0,2,3,4]
    \end{cases}
    \nonumber
\end{equation}
It should be noted that the masking list $\Tilde{M}$, which contains the dimension indices, only needs to store integers. Additionally, since only one masking list per class is required, it reaches a level that can be ignored in terms of the budget. Therefore, the total required budget becomes $32\times k$ bits. With the frequency information stored in this manner, it becomes possible to reassemble it into a tensor for future use without any information loss.

\subsection{Algorithm Complexity} \label{section:complexity}
As a complexity analysis, we consider the computation time for single image retrieval based on FreD and other parameterization methods. For sized 2D image in a spatial domain, IDC utilizes a resizing method as a parameterization, so its complexity is $\mathcal{O}(HW)$ where $H$ is the height and $W$ is the width of the image. HaBa, which requires an additional network for computation, inherits the complexity, $\mathcal{O}(HW{F}^{2})$, where $F$ is the filter size of the convolution neural network. MA operates by performing matrix multiplication between the matrix and downsampled bases. Given that $K$ represents the number of bases and s is the downsampling scale, MA's complexity is $\mathcal{O}(HW \times \frac{K}{s^2})$. FreD, when used with DCT, involves two operations: masking and inverse frequency transform. The complexities of these steps are $\mathcal{O}(HW)$ and $\mathcal{O}(HW\log W)$ respectively. Thus, the total complexity of FreD is $\mathcal{O}(HW\log W)$.

Based on the above complexity, Figure \ref{appendix:fig:complexity} shows the empirical wall-clock time of single-image retrieval for each method. As a result, we show that FreD inherits the second-best single-image retrieval complexity. Although IDC is most efficient in time complexity, we empirically demonstrated that the parameterization based on IDC falls behind FreD in terms of performance in most settings. Furthermore, we want to note that HaBa and GLaD, which utilize the parameterized transform, show extremely high computation cost in terms of single-image retrieval.
\begin{figure}[h]
    \centering
    \begin{subfigure}{0.32\textwidth}
        \centering
        \centerline{\includegraphics[width=\textwidth]{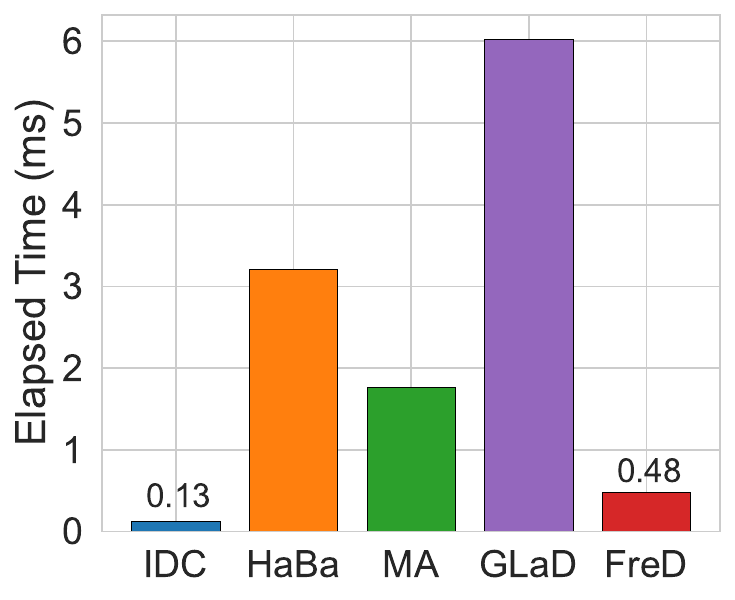}}
        \caption{CIFAR-10 ($32\times32$)}
    \end{subfigure}
    \begin{subfigure}{0.32\textwidth}
        \centering
        \centerline{\includegraphics[width=\textwidth]{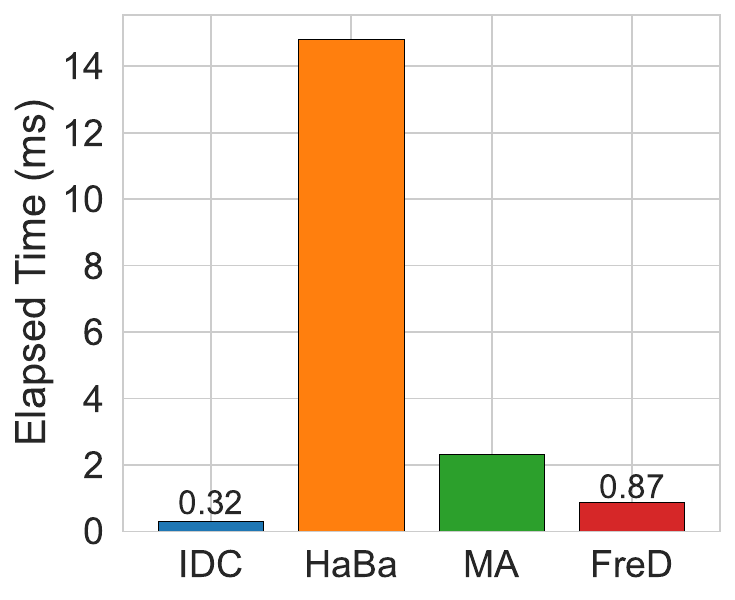}}
        \caption{Tiny-ImageNet ($64\times64$)}
    \end{subfigure}
    \begin{subfigure}{0.32\textwidth}
        \centering
        \centerline{\includegraphics[width=\textwidth]{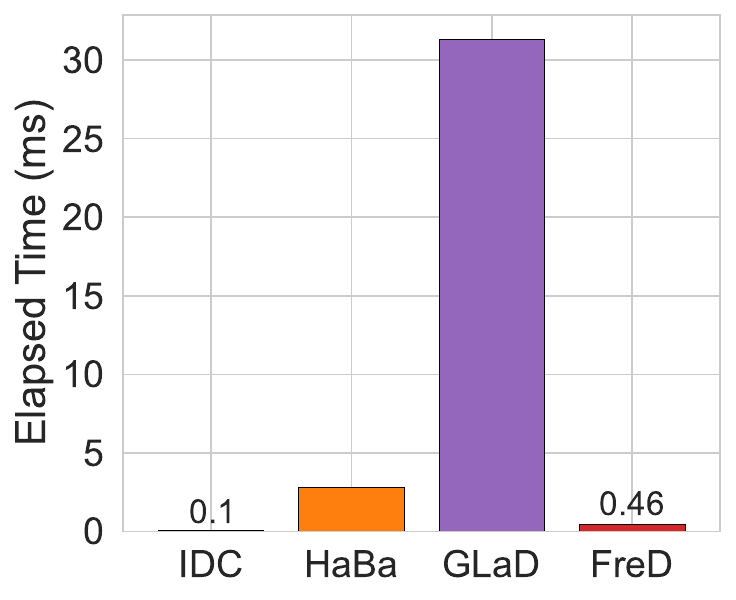}}
        \caption{ImageNet-Subset ($128\times128$)}
    \end{subfigure}
    \caption{Wall-clock time of single image retrieval for each method. "ms" denotes the millisecond.} \label{appendix:fig:complexity}
\end{figure}

\subsection{Influence of Spectral Bias on Frequency Domain-based Approach}
Spectral bias refers to the phenomenon that neural networks are prone to prioritize learning the low-frequency components over relatively higher-frequency components \cite{rahaman2019spectral, xu2022overview, durall2020watch}. For FreD, even though we employed a mask based on the explained variance ratio (EVR), which is not a simple low-frequency filter: the low-frequency components in the frequency domain were predominantly selected in most experimental scenarios. It should be noted that our EVR does not enforce keeping the low-frequency components, unlike the neural network's spectral bias; what EVR only enforces is keeping the components with a higher explanation ratio on the image feature.

However, this characteristic can become a risk to performance if the task-specific information of data is mostly found in the high-frequency components. These cases include 1) medical imaging on fine details like tumors \cite{mohan2018mri} and 2) digital watermarking \cite{nagai2018digital}. In such cases, there may be a requirement for new masking that allows FreD to capture important high-frequency components. The masking strategy of FreD can be flexibly operated, and depending on the characteristics of the given dataset and task, it can readily employ other strategies as needed.

\subsection{Impact of Linear Bijectivity Assumption of $\mathcal{F}$}
\paragraph{Impact of linear bijectivity} For the theory presented in Section \ref{section:theory} to hold, the function $\mathcal{F}$ must be linearly bijective. Note that FreD could utilize any kind of frequency transforms, such as DCT, DFT, and DWT. Since these transforms are all linearly bijective in theory, the proposed proposition could be applicable without any limitation.

Having said that, we elaborate on the potential issues that might emerge, when integrating not linearly bijective transforms into the framework of FreD, as follows:
\paragraph{When $\mathcal{F}$ is not bijective.} If $\mathcal{F}$ is not bijective, then its inverse $\mathcal{F}^{-1}$ does not exist. This absence makes the process of transforming to another domain and then restoring back to the original domain infeasible. It potentially results in information loss that interferes accurate reconstruction of the original image. An alternative solution could be separating the transform into distinct encoder and decoder components, which enable a procedure for one-to-one mapping. However, the construction of such components necessitates additional training costs, when $\mathcal{F}$ of FreD does not require any kind of additional training. Furthermore, if the encoding is not one-to-one, information might be lost during the encoding process.

\paragraph{When $\mathcal{F}$ is not linear but bijective.} FreD employs a subset of dimensions in the frequency domain. The choice is feasible as the linearly bijective transform maintains the EVR in the selected dimensions. Proposition and corollary in Section \ref{section:theory} theoretically support this attribute, where these do not apply to nonlinear bijective transforms. For 2D-image processing, nonlinear bijective transforms include 1) Log-Polar Transform, and 2) Radial Basis Function (RBF) transform. Domains from these transforms do not exhibit the concentration of the original dataset's variance on specific dimensions.

\section{Broader Impact}
How to parameterize $S$ for dataset distillation is highly versatile as it enhances efficiency and performance by determining the form of the data optimized and stored, regardless of the form of the objective of dataset distillation. Furthermore, in contrast to previous research that was solely conducted in the spatial domain, the exploration of the frequency domain introduces a new perspective in interpreting datasets. In our study, the analyzed dataset consists of pure images without any injected noise. However, real-world datasets can often contain unintended adversarial noise or other types of noise during the processing stages. Analyzing datasets based on the frequency domain enables the detection of noise that may not be visually apparent to the human eye. Moreover, by separately treating specific frequency information that is susceptible to noise, it becomes possible to extend the research to areas such as noisy-filtered dataset distillation.

\section{Limitation} \label{appendix:limitation}
\paragraph{Efficacy Differences Depending on Applied Domain.} The applicability and efficacy of FreD's frequency transform are demonstrated specifically within the spatial domain of 2D/3D images. Among the data domains commonly used in machine learning frameworks, natural language domain would likely be challenging to connect with the frequency domain directly. While Fourier Transform can still be applied to text after it's been converted into a numerical format, such as a time series, this conversion is non-trivial and the resulting frequency domain representation may not be as intuitively meaningful. Thus, for certain domains, the effectiveness of frequency transform may not be as substantial as it is for 2D/3D images.

However, in multi-modal tasks, major components like audio signals and video data naturally align with the frequency domain. Tools such as the 1D Fourier transform for audio signals and 3D Fourier transform for video data already exist to process these types directly. Excluding a few specific domains, FreD would be a framework that can be applied across a broader range of domains.

\paragraph{Performances Highly Dependent on Masking Strategy.} FreD's frequency-based parameterization is motivated by the fact that the spatial domain information of a 2D image can be concentrated in specific components of the transformed frequency domain. The EVR based masking selects important dimensions from the frequency domain, consistently showing strong performances across the various experiments conducted in this study.

Having said that, Figure \ref{experiments:fig:ablationM} demonstrates that there could be significant performance disparities depending on the masking strategy employed. Therefore, there could be substantial issues if the chosen masking strategy fails to select the important dimensions accurately. For certain datasets or tasks, essential task-specific information might be contained in the high-frequency region. These cases could include 1) medical imaging on fine details like tumors \cite{mohan2018mri} and 2) digital watermarking \cite{shao2023detecting}. This highlights a limitation that EVR masking may not be suitable for all data and tasks.

It should be noted that the masking strategy of FreD can be flexibly operated, and depending on the characteristics of the given dataset and task, it is not restricted to using an EVR-based mask and can readily employ other strategies as needed.

One potential solution to identify task-related frequency components is to utilize the gradient of the given task loss. If specific frequency components have a substantial gradient distribution, it indicates that the component greatly influences the task. By substituting with gradient-based masking, we could address the potential limitations that EVR masks might have.

\clearpage
\begin{table}[h]
    \caption{List of hyper-parameters.} \label{appendix:tab:hyperparams}
    \begin{subtable}[t]{0.49\textwidth}
        \centering
        \caption{Gradient matching (DC)}
        \adjustbox{max width=\textwidth}{%
        \begin{tabular}{c c cccc}
            \toprule \toprule
            Dataset & \#Params & \makecell{Synthetic\\batch size} & \makecell{Learning rate\\\small(Frequency)} & \makecell{Selected dimension\\per channel} & \makecell{Increment\\of instances} \\
            \midrule
            \multirow{6}{*}{CIFAR-10} & \makecell{61.44k\\\small(IPC=2)} & - & \large$10^{3}$ & \large32 & \large$\times32$ \\
                                      \cmidrule{2-6}
                                      & \makecell{337.92k\\\small(IPC=11)} & - & \large$10^{3}$ & \large128 & \large$\times8$ \\
                                      \cmidrule{2-6}
                                      & \makecell{1566.72k\\\small(IPC=51)} & \large256 & \large$10^{2}$ & \large256 & \large$\times4$ \\
            \midrule
            LSUN & \makecell{491.52k\\\small(IPC=1)} & \large80 & \large$10^{5}$ & \large128 & \large$\times128$ \\
            \midrule
            \makecell{ImageNet-\\Subset\\($128\times128$)} & \makecell{491.52k\\\small(IPC=1)} & - & \large$10^{5}$ & \large2048 & \large$\times8$ \\
            \midrule
            \makecell{ImageNet-\\Subset\\($256\times256$)} & \makecell{1966.08k\\\small(IPC=1)}  & - & \large$10^{6}$ & \large8192 & \large$\times8$ \\
            \bottomrule \bottomrule
        \end{tabular}}
    \end{subtable}
    \hfill
    \begin{subtable}[t]{0.49\textwidth}
        \centering
        \caption{Feature matching (DM)}
        \adjustbox{max width=\textwidth}{%
        \begin{tabular}{c c cccc}
            \toprule \toprule
            Dataset & \#Params & \makecell{Synthetic\\batch size} & \makecell{Learning rate\\\small(Frequency)} & \makecell{Selected dimension\\per channel} & \makecell{Increment\\of instances} \\
            \midrule
            \multirow{6}{*}{CIFAR-10} & \makecell{61.44k\\\small(IPC=2)} & - & \large$10^{6}$ & \large64 & \large$\times16$ \\
                                      \cmidrule{2-6}
                                      & \makecell{337.92k\\\small(IPC=11)} & - & \large$10^{5}$ & \large128 & \large$\times8$ \\
                                      \cmidrule{2-6}
                                      & \makecell{1566.72k\\\small(IPC=51)} & - & \large$10^{5}$ & \large256 & \large$\times4$ \\
            \midrule
            LSUN & \makecell{491.52k\\\small(IPC=1)} & \large40 & \large$10^{5}$ & \large256 & \large$\times64$\\
            \midrule
            \makecell{ImageNet-\\Subset\\($128\times128$)} & \makecell{491.52k\\\small(IPC=1)} & - & \large$10^{6}$ & \large2048 & \large$\times8$ \\
            \midrule
            \multirow{6}{*}{3D MNIST} & \makecell{40.96k\\\small(IPC=1)} & - & \large$10^{6}$ & \large512 & \large$\times8$ \\
                                      \cmidrule{2-6}
                                      & \makecell{409.6k\\\small(IPC=10)} & - & \large$10^{6}$ & \large1024 & \large$\times4$ \\
                                      \cmidrule{2-6}
                                      & \makecell{2048k\\\small(IPC=50)} & - & \large$10^{6}$ & \large1024 & \large$\times4$ \\
            \bottomrule \bottomrule
        \end{tabular}}
    \end{subtable}
    
    \begin{subtable}{\textwidth}
        \centering
        \caption{Trajectory matching (TM)}
        \adjustbox{max width=\textwidth}{%
        \begin{tabular}{c c cccccccc c}
            \toprule \toprule
            Dataset & \#Params & \makecell{Synthetic\\steps} & \makecell{Expert\\epochs} & \makecell{Max start\\epoch} & \makecell{Synthetic\\batch size} & \makecell{Learning rate\\\small(Frequency)} & \makecell{Learning rate\\\small(Step size)} & \makecell{Learning rate\\\small(Teacher)} & \makecell{Selected dimension\\per channel} & \makecell{Increment\\of instances} \\
            \midrule
            \multirow{3}{*}{MNIST} & \makecell{7.84k\\\small(IPC=1)}   & \large50 & \large2 & \large5  & - & \large$10^{6}$ & \large$10^{-7}$ & \large$10^{-2}$ & \large49 & \large$\times16$ \\
                                   \cmidrule{2-11}
                                   & \makecell{78.4k\\\small(IPC=10)} & \large30 & \large2 & \large15 & - & \large$10^{5}$ & \large$10^{-5}$ & \large$10^{-2}$ & \large392 & \large$\times2$ \\
            \midrule
            \multirow{3}{*}{\makecell{Fashion\\MNIST}} & \makecell{7.84k\\\small(IPC=1)} & \large50 & \large2 & \large5 & - & \large$10^{6}$ & \large$10^{-7}$ & \large$10^{-2}$ & \large49 & \large$\times16$ \\
                                                       \cmidrule{2-11}
                                                       & \makecell{78.4k\\\small(IPC=10)} & \large60 & \large2 & \large15 & - & \large$10^{5}$ & \large$10^{-5}$ & \large$10^{-2}$ & \large196 & \large$\times4$ \\
            \midrule
            \multirow{6}{*}{SVHN} & \makecell{30.72k\\\small(IPC=1)}    & \large50 & \large2 & \large5  & - & \large$10^{7}$ & \large$10^{-7}$ & \large$10^{-2}$ & \large64 & \large$\times16$ \\
                                  \cmidrule{2-11}
                                  & \makecell{307.2k\\\small(IPC=10)}  & \large30 & \large2 & \large15 & - & \large$10^{7}$ & \large$10^{-5}$ & \large$10^{-2}$ & \large128 & \large$\times8$ \\
                                  \cmidrule{2-11}
                                  & \makecell{1536k\\\small(IPC=50)} & \large40 & \large2 & \large40 & \large500 & \large$10^{7}$ & \large$10^{-5}$ & \large$10^{-3}$ & \large256 & \large$\times4$ \\
            \midrule
            \multirow{13}{*}{CIFAR-10} & \makecell{30.72k\\\small(IPC=1)}    & \large50 & \large2 & \large5 & - & \large$10^{8}$ & \large$10^{-7}$ & \large$10^{-2}$ & \large64 & \large$\times16$ \\
                                      \cmidrule{2-11}
                                      & \makecell{61.44k\\\small(IPC=2)}    & \large50 & \large2 & \large5 & \large160 & \large$10^{8}$ & \large$10^{-7}$ & \large$10^{-2}$ & \large64 & \large$\times16$ \\
                                      \cmidrule{2-11}
                                      & \makecell{307.2k\\\small(IPC=10)}  & \large40 & \large2 & \large15 & \large320 & \large$10^{7}$ & \large$10^{-5}$ & \large$10^{-2}$ & \large160 & \large$\times6.4$ \\
                                      \cmidrule{2-11}
                                      & \makecell{337.92k\\\small(IPC=11)}  & \large40 & \large2 & \large15 & \large320 & \large$10^{7}$ & \large$10^{-5}$ & \large$10^{-2}$ & \large176 & \large$\times5.82$ \\
                                      \cmidrule{2-11}
                                      & \makecell{1536k\\\small(IPC=50)} & \large30 & \large2 & \large40 & \large500 & \large$10^{7}$ & \large$10^{-5}$ & \large$10^{-3}$ & \large256 & \large$\times4$ \\
                                      \cmidrule{2-11}
                                      & \makecell{1566.72k\\\small(IPC=51)} & \large30 & \large2 & \large40 & \large510 & \large$10^{7}$ & \large$10^{-5}$ & \large$10^{-3}$ & \large256 & \large$\times4$ \\
            \midrule
            \multirow{6}{*}{CIFAR-100} & \makecell{30.72k\\\small(IPC=1)}    & \large50 & \large2 & \large15 & - & \large$10^{8}$ & \large$10^{-5}$ & \large$10^{-2}$ & \large128 & \large$\times8$ \\
                                       \cmidrule{2-11}
                                       & \makecell{307.2k\\\small(IPC=10)}  & \large20 & \large2 & \large40 & \large2048 & \large$5\times10^{6}$ & \large$10^{-5}$ & \large$10^{-2}$ & \large400 & \large$\times2.56$ \\
                                       \cmidrule{2-11}
                                       & \makecell{1536k\\\small(IPC=50)} & \large80 & \large2 & \large40 & \large256 & \large$5\times10^{6}$ & \large$10^{-5}$ & \large$10^{-2}$ & \large400 & \large$\times2.56$ \\
            \midrule
            \multirow{6}{*}{\makecell{Tiny-\\ImageNet}} & \makecell{2457.6k\\\small(IPC=1)}    & \large30 & \large2 & \large30 & \large400 & \large$10^{9}$ & \large$10^{-4}$ & \large$10^{-2}$ & \large512 & \large$\times8$ \\
                                                        \cmidrule{2-11}
                                                        & \makecell{24576k\\\small(IPC=10)} & \large40 & \large2 & \large40 & \large300 & \large$10^{9}$ & \large$10^{-4}$ & \large$10^{-2}$ & \large3840 & \large$\times3.2$ \\
                                                        \cmidrule{2-11}
                                                        & \makecell{122880k\\\small(IPC=50)} & \large40 & \large2 & \large40 & \large250 & \large$10^{9}$ & \large$10^{-4}$ & \large$10^{-2}$ & \large3840 & \large$\times3.2$ \\
            \midrule
            \multirow{6}{*}{\makecell{ImageNet-\\Subset\\($128\times128$)}} & \makecell{491.52k\\\small(IPC=1)} & \large20 & \large2 & \large10 & - & \large$10^{9}$ & \large$10^{-6}$ & \large$10^{-2}$ & \large2048 & \large$\times8$ \\
                                                                 \cmidrule{2-11}
                                                                 & \makecell{983.04k\\\small(IPC=2)}   & \large20 & \large2 & \large10 & \large80 & \large$10^{9}$ & \large$10^{-6}$ & \large$10^{-2}$ & \large2048 & \large$\times8$ \\
                                                                 \cmidrule{2-11}
                                                                 & \makecell{4915.2k\\\small(IPC=10)} & \large20 & \large2 & \large10 & \large80 & \large$10^{9}$ & \large$10^{-6}$ & \large$10^{-2}$ & \large4096 & \large$\times4$ \\
            \bottomrule \bottomrule
          \end{tabular}}
    \end{subtable}
\end{table}

\clearpage
\begin{figure}[t]
    \centering
    \subcaptionbox{Original}{\includegraphics[width=.24\textwidth]{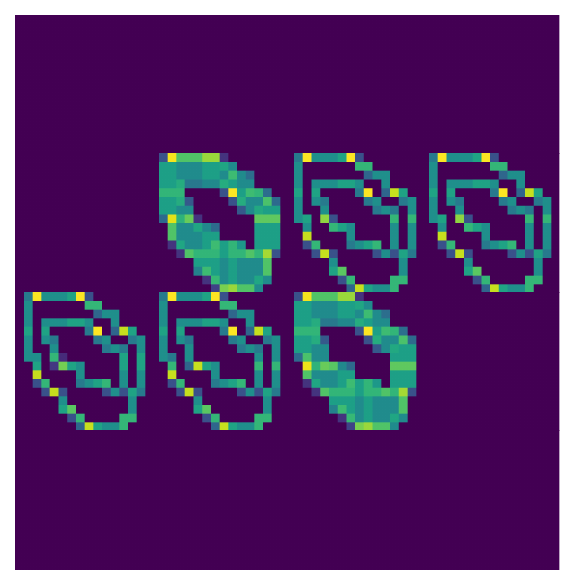}}
    \subcaptionbox{DM}{\includegraphics[width=.24\textwidth]{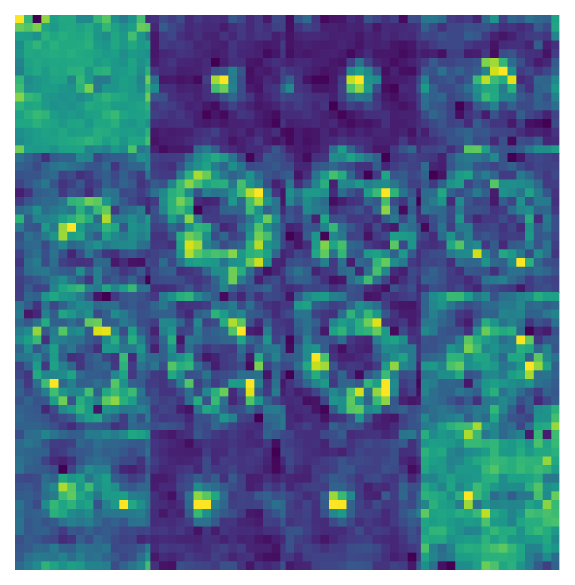}}
    \subcaptionbox{IDC}{\includegraphics[width=.24\textwidth]{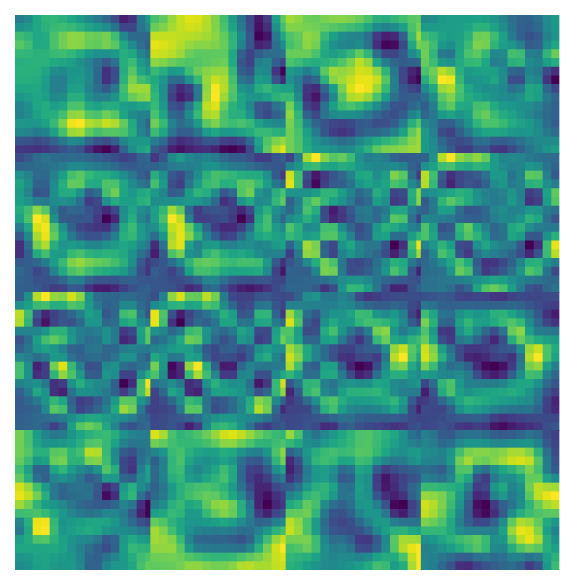}}
    \subcaptionbox{FreD}{\includegraphics[width=.24\textwidth]{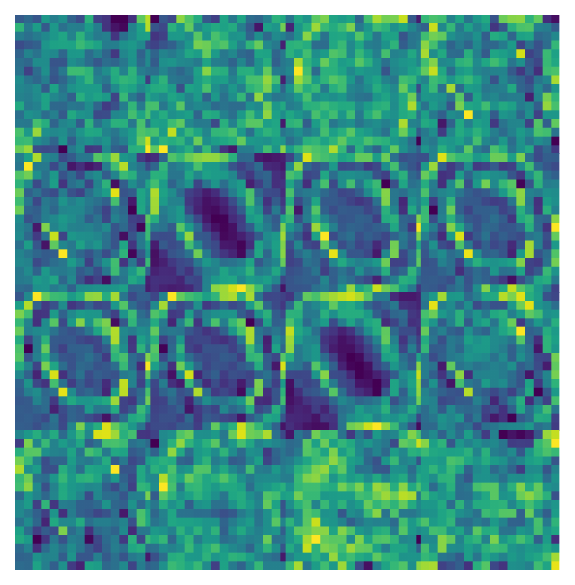}}
    \caption{The cross-section visualizations of class 0 in 3D MNIST. Each top left image represents the frontmost view, while bottom right image corresponds to the rearmost view.}
    \label{appendix:fig:3D MNIST}
\end{figure}

\begin{figure*}[t]
    \centering
    \includegraphics[width=0.9\textwidth]{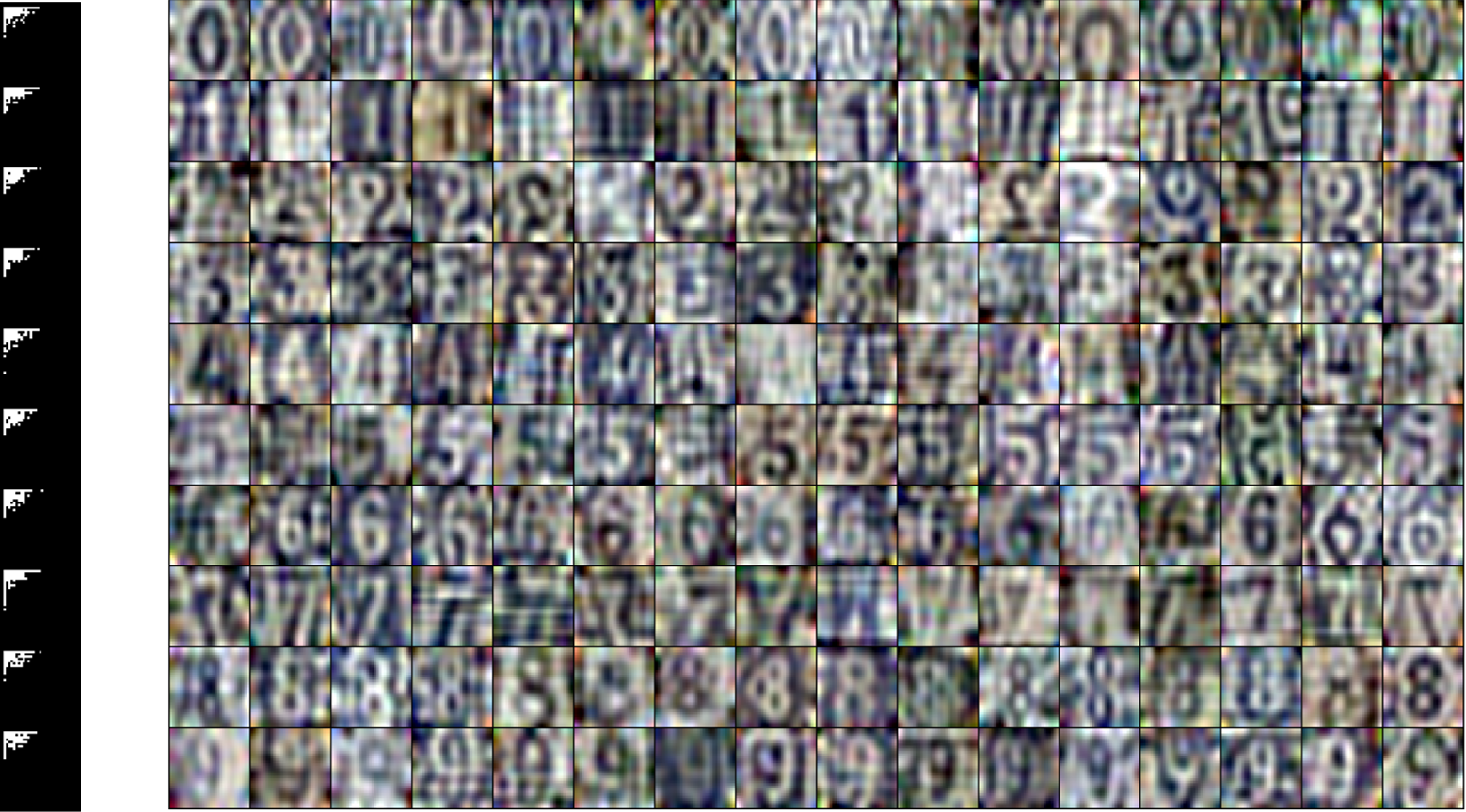}
    \caption{Visualization of the binary mask and the transformed images by FreD on SVHN with IPC=1 (\#Params=30.72k). In this setting, FreD constructs 16 images per class under the same budget.} \label{appendix:fig:qual_svhn}
\end{figure*}

\begin{figure*}[t]
    \centering
    \includegraphics[width=0.9\textwidth]{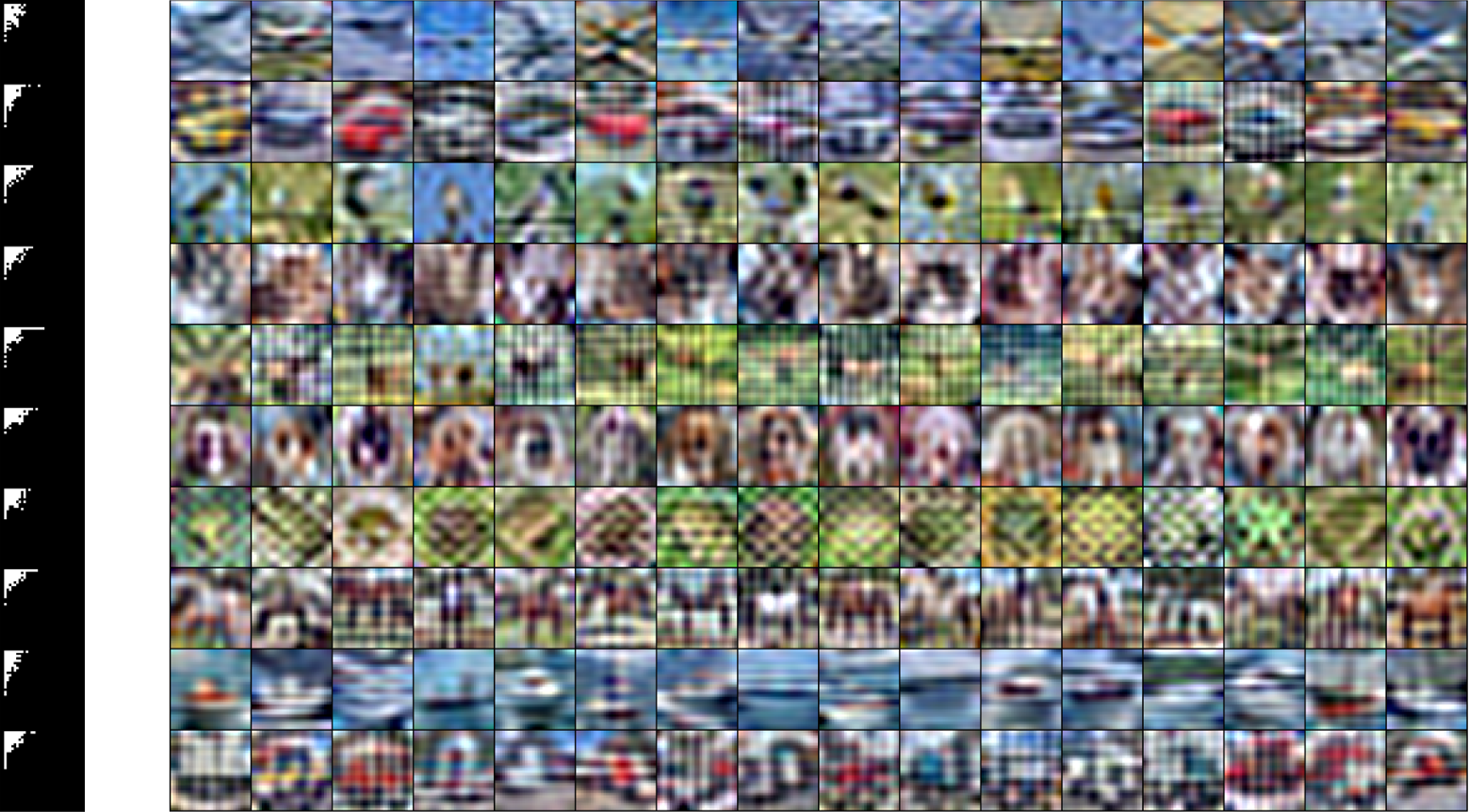}
    \caption{Visualization of the binary mask and the transformed images by FreD on CIFAR-10 with IPC=1 (\#Params=30.72k). In this setting, FreD constructs 16 images per class under the same budget.} \label{appendix:fig:qual_cifar10}
\end{figure*}

\begin{figure*}[h]
\centering
    \begin{subfigure}{0.48\textwidth}
        \centering
        \centerline{\includegraphics[width=\textwidth]{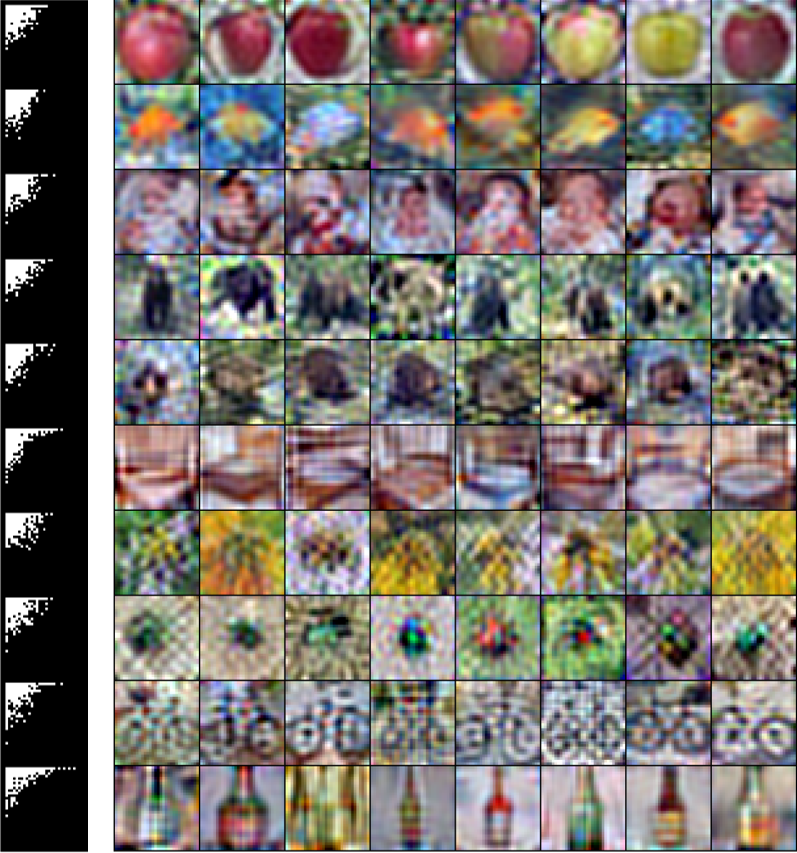}}
        \caption{CIFAR-100} \label{appendix:fig:qual_cifar100}
    \end{subfigure}
    \hfill
    \begin{subfigure}{0.48\textwidth}
        \centering
        \centerline{\includegraphics[width=\textwidth]{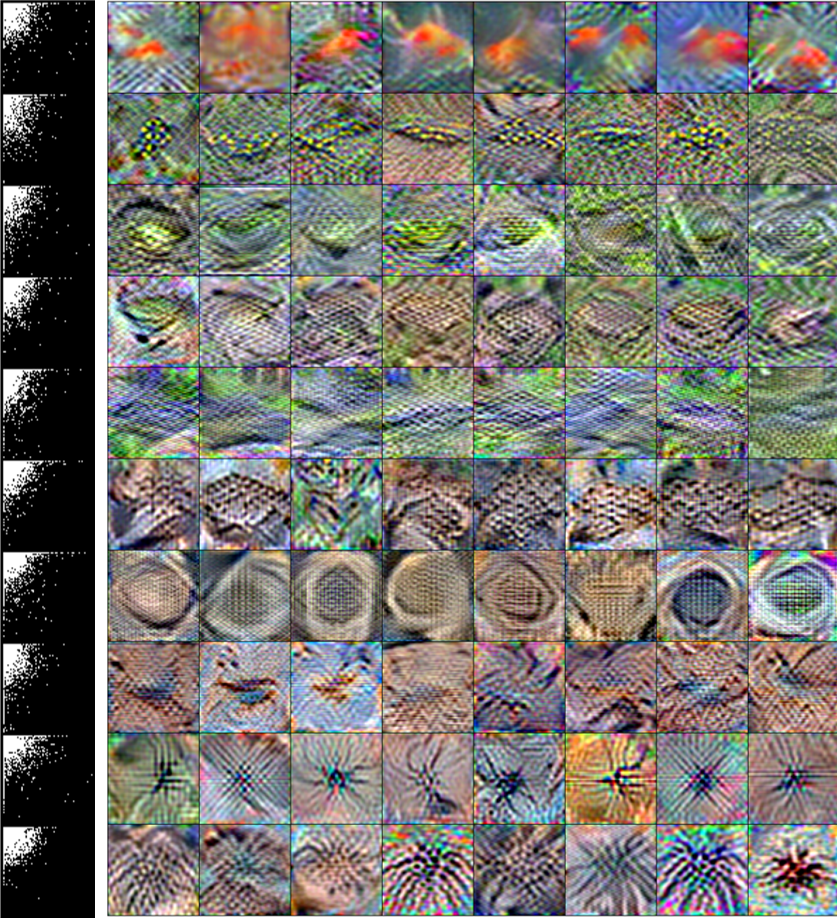}}
        \caption{Tiny-ImageNet} \label{appendix:fig:qual_tiny}
    \end{subfigure}
\caption{Visualization of the binary mask and the transformed images by FreD on CIFAR-100 with IPC=1 (\#Params=30.72k) and Tiny-ImageNet with IPC=1 (\#Params=2457.6k). Due to a lack of space, only the first 10 classes were visualized. In both cases, FreD constructs 8 images per class under the same budget.}
\end{figure*}

\begin{figure*}[h]
\centering
    \begin{subfigure}{0.48\textwidth}
        \centering
        \centerline{\includegraphics[width=\textwidth]{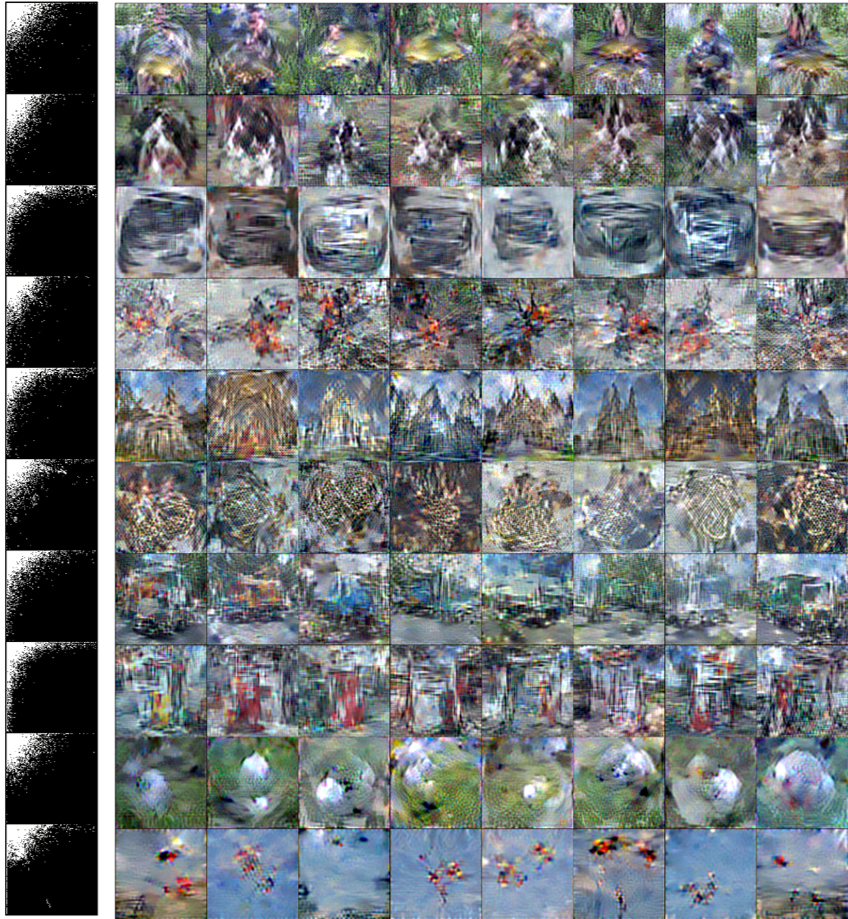}}
        \caption{ImageNette} \label{appendix:fig:qual_imagenette}
    \end{subfigure}
    \hfill
    \begin{subfigure}{0.48\textwidth}
        \centering
        \centerline{\includegraphics[width=\textwidth]{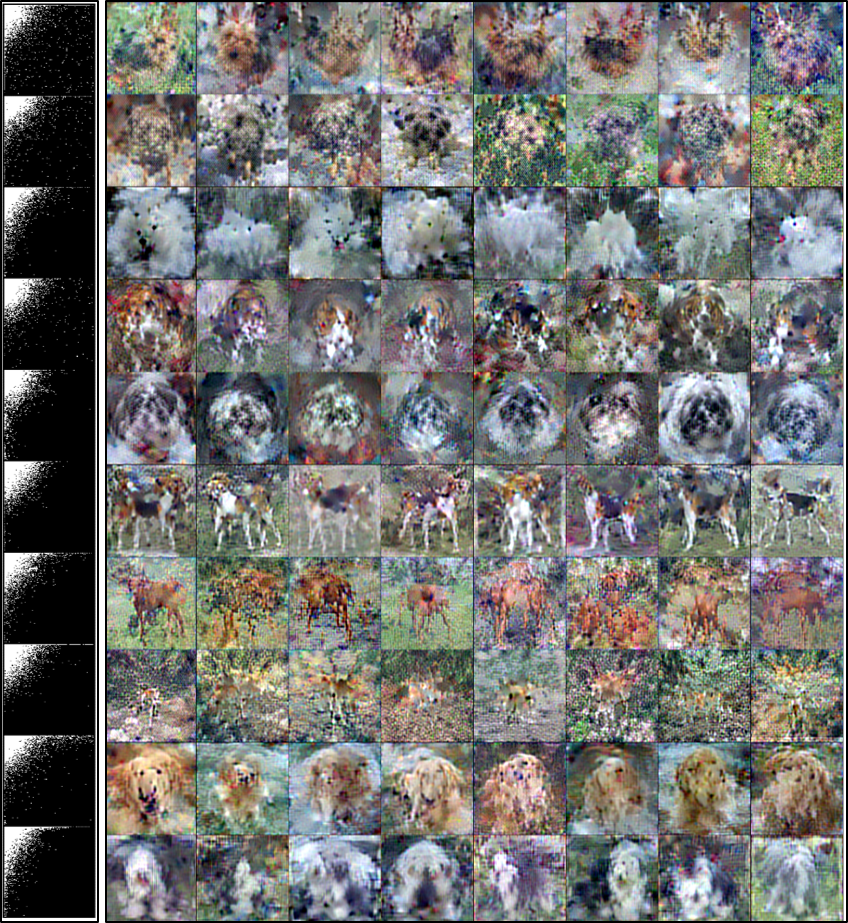}}
        \caption{ImageWoof} \label{appendix:fig:qual_imagewoof}
    \end{subfigure}
\caption{Visualization of the binary mask and the transformed images by FreD on ImageNet-Subset with IPC=1 (\#Params=491.52k). In these cases, FreD constructs 8 images per class under the same budget.}
\label{appendix:fig:qual_imagenet1}
\end{figure*}

\begin{figure*}[h]
\centering
    \begin{subfigure}{0.48\textwidth}
        \centering
        \centerline{\includegraphics[width=\textwidth]{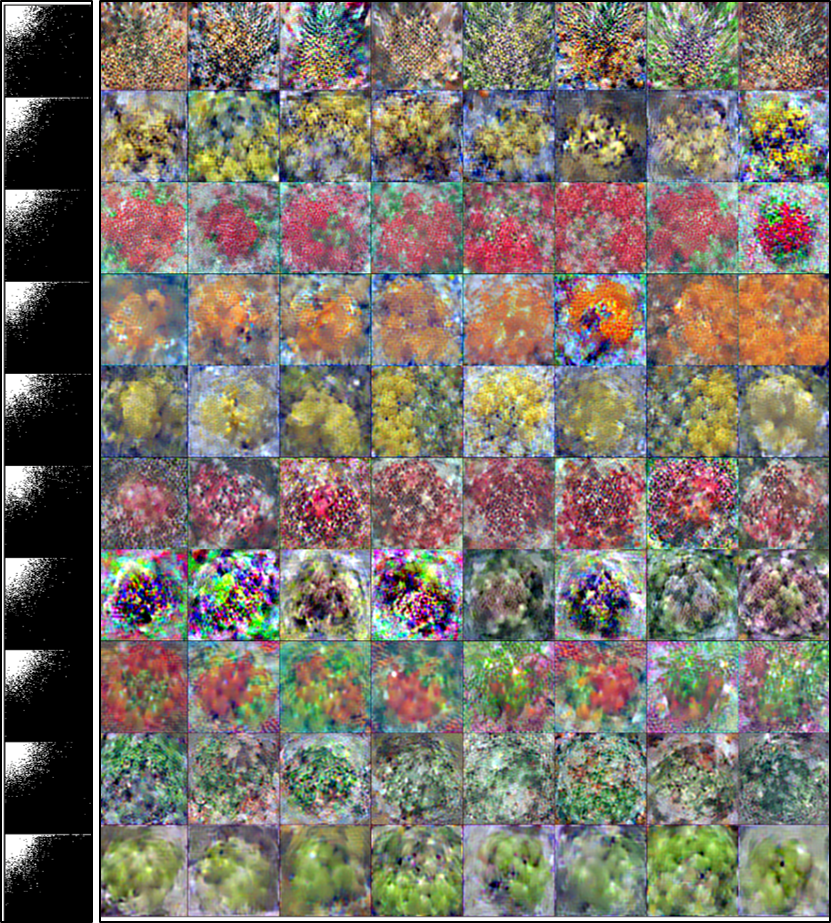}}
        \caption{ImageFruit} \label{appendix:fig:qual_imagefruit}
    \end{subfigure}
    \hfill
    \begin{subfigure}{0.48\textwidth}
        \centering
        \centerline{\includegraphics[width=\textwidth]{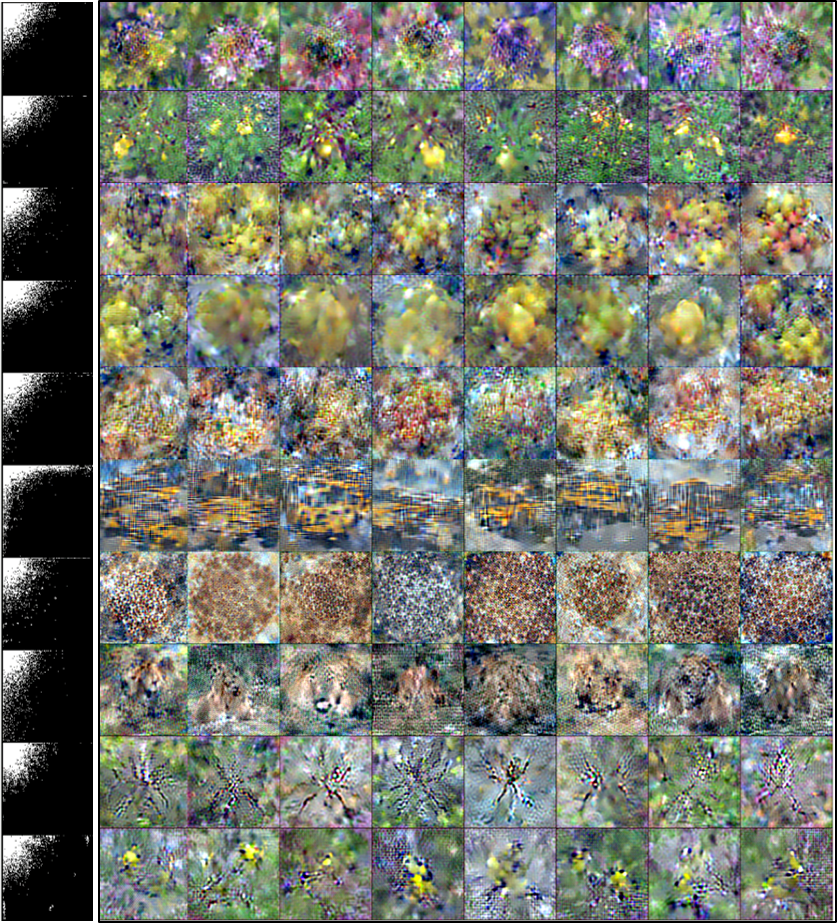}}
        \caption{ImageYellow} \label{appendix:fig:qual_imageyellow}
    \end{subfigure}
\caption{Visualization of the binary mask and the transformed images by FreD on ImageNet-Subset with IPC=1 (\#Params=491.52k). In these cases, FreD constructs 8 images per class under the same budget.}
\label{appendix:fig:qual_imagenet2}
\end{figure*}

\begin{figure*}[h]
\centering
    \begin{subfigure}{0.48\textwidth}
        \centering
        \centerline{\includegraphics[width=\textwidth]{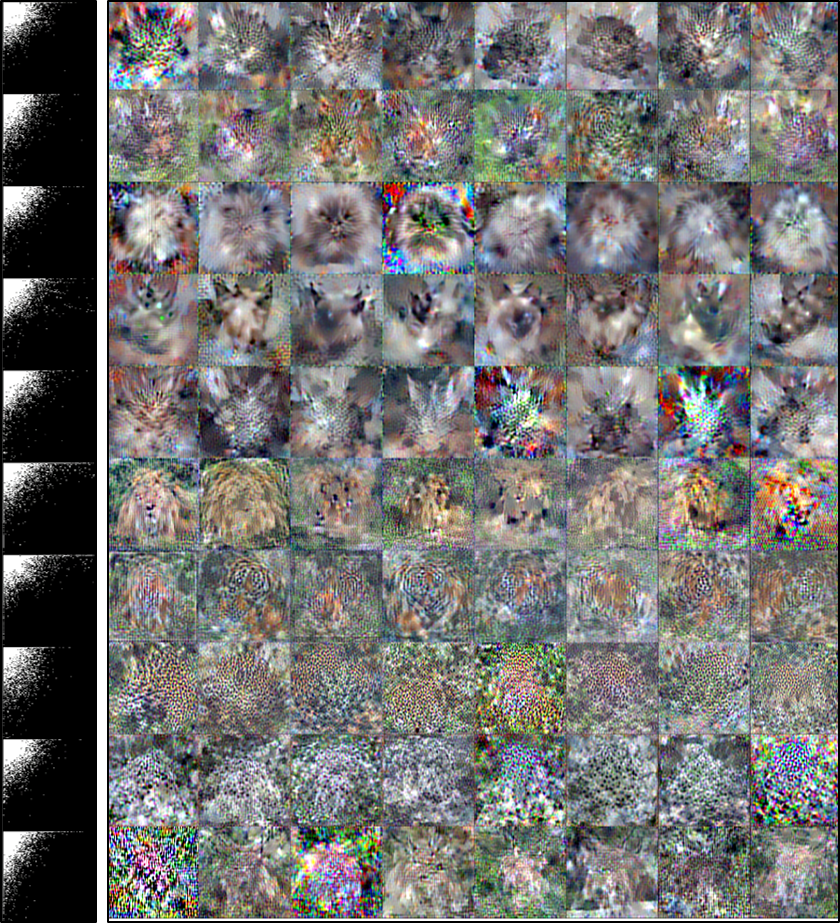}}
        \caption{ImageMeow} \label{appendix:fig:qual_imagemeow}
    \end{subfigure}
    \hfill
    \begin{subfigure}{0.48\textwidth}
        \centering
        \centerline{\includegraphics[width=\textwidth]{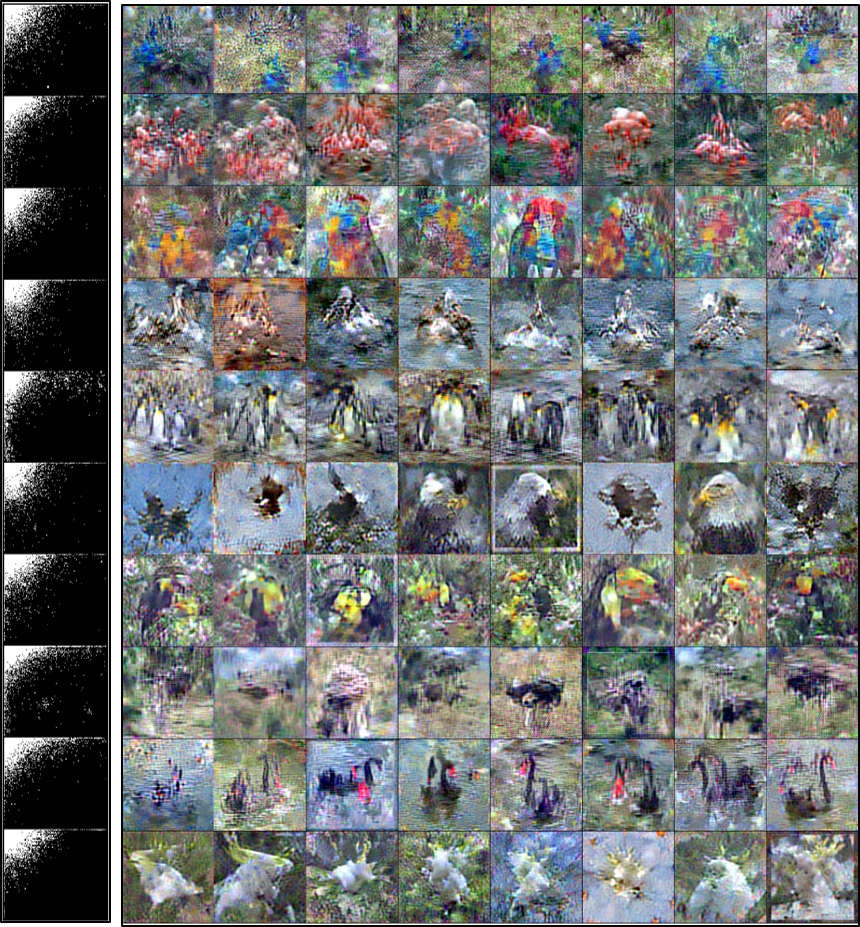}}
        \caption{ImageSquawk} \label{appendix:fig:qual_imagesquawk}
    \end{subfigure}
\caption{Visualization of the binary mask and the transformed images by FreD on ImageNet-Subset with IPC=1 (\#Params=491.52k). In these cases, FreD constructs 8 images per class under the same budget.}
\label{appendix:fig:qual_imagenet3}
\end{figure*}

\end{appendices}

\end{document}